\documentclass[sigconf,nonacm]{acmart}
\setcopyright{none}

\usepackage{microtype}
\usepackage{subcaption}
\usepackage{enumitem}
\usepackage{tikz}
\usetikzlibrary{arrows.meta, positioning}
\usepackage{graphicx}
\usepackage{float}
\usepackage{flafter}
\usepackage[capitalize]{cleveref} 

\usepackage{tikz}
\usepackage{pgfplots}
\pgfplotsset{compat=1.18}
\usepgfplotslibrary{fillbetween}
\definecolor{paretoSRG}{RGB}{24,128,121}
\definecolor{baseSRG}{RGB}{75,75,75}
\definecolor{xtColor}{RGB}{205,103,49}
\usepgfplotslibrary{groupplots}
\newtheorem{assumption}{Assumption}
\newtheorem{criterion}{Criterion}

\AtBeginDocument{%
  }

\newcommand\blfootnote[1]{%
  \begingroup
  \renewcommand\thefootnote{}\footnotetext{#1}%
  \addtocounter{footnote}{-1}%
  \endgroup
}

\begin{document}

\title[Training-Guided Occlusion]{XtrAIn: Training-Guided Occlusion for Feature Attribution}

\author{%
Thodoris Lymperopoulos \qquad
Ioannis Kakogeorgiou \qquad
Denia Kanellopoulou\\
NCSR Demokritos, Athens, Greece
}

\renewcommand{\shortauthors}{Lymperopoulos et al.}

\begin{abstract}
  Occlusion-based attribution methods provide an intuitive way to estimate feature importance by perturbing input features and measuring the resulting change in model output. However, their reliability is strongly affected by how feature removal is implemented: externally selected baselines can introduce bias, out-of-distribution samples, and unstable explanations, while in nonlinear models the occlusion of a set of features can also alter the contribution of non-occluded features. We refer to this effect as attribution shift, as the attribution scores of the non-occluded features drift from their initial values. To challenge these major issues that render explanations unstable, we introduce XtrAIn, a training-guided attribution method that transfers the occlusion operation from the input space to the parameter space. Instead of replacing input values with hand-crafted baselines, XtrAIn follows the model’s training trajectory and measures how feature-associated parameter updates affect the output logits. We further introduce Xstep, a lightweight approximation for reducing computational cost, and XtrAIn+, a target-focused variant that emphasizes updates aligned with the target class. Experiments on controlled image datasets and PAM50 breast-cancer subtype classification show that the proposed methods produce cleaner and more interpretable attribution patterns than standard attribution baselines. Overall, XtrAIn provides a training-aware perspective on feature attribution and offers a useful diagnostic tool for studying how feature-level evidence is formed during training.

\end{abstract}

\keywords{Explainable AI, Attribution Methods, Occlusion Techniques, Model Training and Interpretability}

\maketitle

\blfootnote{Corresponding author: \texttt{thodoris.lymperopoulos@gmail.com}}

\section{Introduction}
\label{sec:intro}

As AI systems are increasingly used in high-impact decision-making settings, understanding the factors that drive their predictions has become a central concern in modern machine learning. Explainable AI (XAI) \cite{Mumuni2025ExplainableAI, 10.1613/jair.1.12228, vilone2020explainableartificialintelligencesystematic} has emerged as a primary vehicle for this effort, with attribution methods \cite{11039632, 9369420} forming a widely adopted paradigm for quantifying the contribution of individual input features to a model's output. Among them, occlusion-based approaches \cite{10.1007/978-3-319-10590-1_53, 10.5555/3295222.3295230, petsiuk2018riserandomizedinputsampling, 8237633} occupy a foundational position — operating by systematically removing subsets of input features and measuring the resulting change in model output. The magnitude of this change is taken as a proxy for the contribution of the occluded features to the model's decision. This perturbation logic is closely related to causal reasoning~\cite{NEURIPS2021_4f5c422f, nnsattr_causal}, making occlusion a persistent component of explainability techniques for increasingly complex architectures~\cite{covert2023learningestimateshapleyvalues}.

Nevertheless, occlusion-based methods remain sensitive to how feature removal is implemented, especially through the choice of baseline values. In their attempt to simulate such removal, different techniques for value selection may introduce biases \cite{pmlr-v162-rong22a}, as demonstrated by several studies \cite{sturmfels2020visualizing, Haug2021OnBF, Geoscience}. Additionally, the shape and size of the occluded regions are not standardized, and different parametrizations can lead to different attribution maps \cite{bluecher2024decoupling}. Another important issue is out-of-distribution (OoD) behavior, where an incautious selection of baseline values can move samples to regions outside the original distribution~\cite{10.5555/3454287.3455160, 10.1007/s11222-021-10057-z, pmlr-v119-kumar20e, 10.1007/978-3-031-09037-0_8, jain2022missingnessbiasmodeldebugging, Geoscience}. This introduces safety concerns for sensitive applications \cite{salehi2022a}. Researchers have addressed this either by introducing corrupted data during training \cite{10.5555/3454287.3455160, 10.5555/3540261.3540540, BROCKI2023131} or by using generative models for more natural inpainting \cite{chang2019explainingimageclassifierscounterfactual, 10.1007/978-3-030-69544-6_7, augustin2024digindiffusionguidanceinvestigating}. While these approaches may alleviate OoD issues, they do not fully remove the dependence on the chosen feature-removal rule. In this regard, some techniques apply criteria for a theoretically grounded information removal \cite{10.1016/j.imavis.2022.104516, Ren2023CanWF, Izzo_2021}, yet defining a generally reliable and objective removal strategy remains challenging. Similar sensitivities also affect feature-removal-based evaluation metrics, whose outcomes may depend on the chosen perturbation strategy \cite{Gevaert2024, pmlr-v162-rong22a, Nauta_2023}.

\begin{figure}[t]
    \centering

    \begin{minipage}[c]{0.48\columnwidth}
        \centering
        \resizebox{\linewidth}{!}{%
        \begin{tikzpicture}[
            >=Stealth,
            node distance=2cm,
            neuron/.style={circle, draw=black, thick, minimum size=0.5cm, inner sep=0pt},
            input/.style={circle, fill=black, minimum size=0.2cm, inner sep=0pt},
            every label/.style={font=\large}
        ]

        \node[circle, draw=white, thick] (start) {};
        \node[neuron, below=0.4cm of start] (x1dot) {};
        \node[neuron, below=2.4cm of x1dot] (x2dot) {};

        \node[left=0.6cm of x1dot, text=brown!100!white!90, font=\LARGE] (x1) {$\mathbf{x_1}$};
        \node[left=0.6cm of x2dot, text=brown!100!white!90, font=\LARGE] (x2) {$\mathbf{x_2}$};

        \node[neuron, right=2cm of x1dot] (h1) {};
        \node[neuron, right=2cm of x2dot] (h2) {};

        \draw[thick] (h1.center) +(-0.18,-0.05) -- +(-0.02,-0.05) -- +(0.13, 0.08);
        \draw[thick] (h2.center) +(-0.18,-0.05) -- +(-0.02,-0.05) -- +(0.13, 0.08);

        \node[neuron, right=1.7cm of h1, yshift=-1.5cm] (out) {};
        \node[above=0.06cm of out, xshift=3pt, font=\LARGE, text=red!80!black!50] {$\mathbf{I_{\geq2}}$};

        \draw[->] (x1) -- (x1dot);
        \draw[->] (x2) -- (x2dot);

        \draw[->] (x1dot) -- (h1);
        \draw[->] (x1dot) -- (h2);
        \draw[->] (x2dot) -- (h1);
        \draw[->] (x2dot) -- (h2);

        \draw[->] (h1) -- (out);
        \draw[->] (h2) -- (out);

        \draw[->, black, line width=1pt] (out) -- ++(0.8, 0);

        \end{tikzpicture}%
        }
    \end{minipage}
    \hfill
    \begin{minipage}[c]{0.48\columnwidth}
        \centering
        \includegraphics[width=\linewidth]{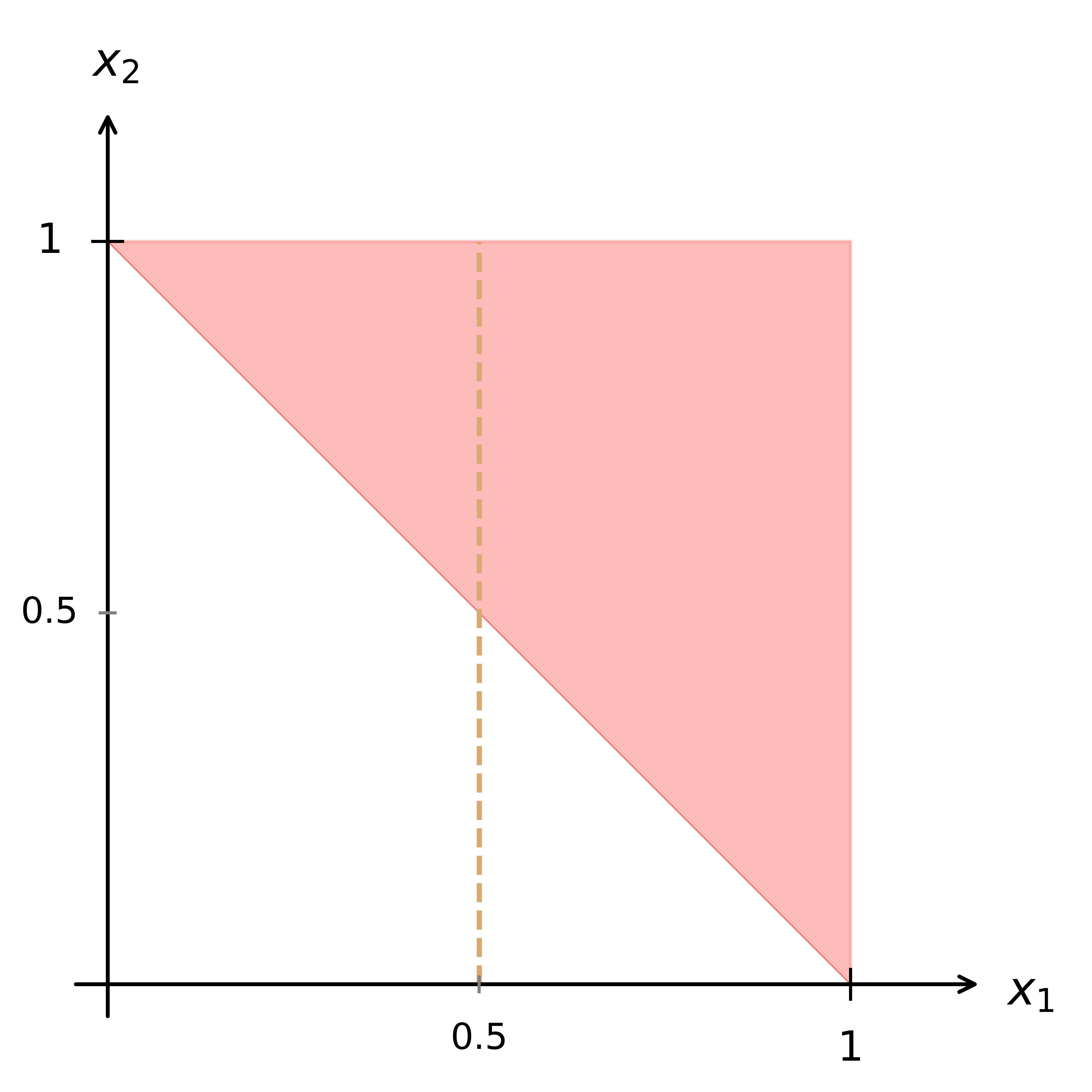}
    \end{minipage}
    \hfill\null
    \caption{\emph{Illustration of attribution shift in input-space occlusion.}
    In the simple FCNN on the left, all weights are set to one and all biases are zero.    The two hidden ReLU neurons therefore receive the same input, $x_1+x_2$. For $x_1=x_2=0.5$, both hidden neurons output $1$, so the input to the final threshold neuron is $1+1=2$ and the output is activated. However, a small decrease $\epsilon >0$ in $x_2$ changes both hidden activations to values below $1$, making the final input smaller than $2$ and deactivating the output. This abrupt change in model's response should spark large shifts in feature attributions — since these scores are related to the model's output. This illustrates how input-space occlusion can produce interaction-dependent attribution shifts.}
    \label{fig:attr_shift}
\end{figure}

Beyond baseline selection, we highlight an additional limitation of input-space occlusion: because feature removal is implemented through an \emph{externally imposed} input intervention, the attribution assigned to an occluded feature subset may also reflect changes in the effect of the remaining, non-occluded features.
Let $I_1 \subset I$ denote an occluded subset of input features, and let $I_2 \subseteq I \setminus I_1$ denote the subset of the remaining non-occluded features. Occlusion estimates the contribution of $I_1$ by comparing the original model response $f(I)$ with the response obtained after removing or replacing $I_1$, denoted $f(I \setminus I_1)$. This implicitly assumes that the effect of the non-occluded features remains constant across the two evaluations and is therefore neglected. However, in nonlinear models, the intervention on $I_1$ can also change how the remaining features $I_2$ interact with the model, as illustrated in \cref{fig:attr_shift}. Therefore, the differences in model's response does not necessarily isolate the effect of $I_1$ alone — as popular occlusion-based methods assume — but may include 
may include interaction-dependent effects from the remaining features. We refer to this phenomenon as \textit{attribution shift}: the attribution assigned to $I_1$ may include effects from feature dependencies, induced by the disturbed behavior of $I_2$. In this regard, attribution scores resulting from occlusion-based techniques are indeed interaction dependent (despite the hypothesis for interaction-blind feature attributions \cite{NEURIPS2020_443dec30}), but in a perilous way that perturbs feature estimation scores and complicates the disentanglement of the intended feature-removal effect.

Attribution shift stems from the input-space occlusion mechanism itself and is not fully resolved by changing only the baseline-filling strategy. This motivates us moving the perturbation operation beyond the input space, towards transferring the occlusion operation to the parameter space: instead of relying on externally imposed input values, \textit{XtrAIn} uses perturbation values naturally induced by the model's own training updates. In this work, we instantiate this principle through the feature-associated weights of fully connected neural networks (FCNNs). By perturbing these weights between consecutive model states, we measure the induced change in the model logits and aggregate these parameter-space occlusions over training to obtain a final attribution score that reflects how each feature-associated update contributes to the model's output.

Our work makes the following contributions:
\begin{itemize}
    \item We analyse how \textit{attribution shift} arises in input-space occlusion for nonlinear models, showing that occluding one feature can also alter the contribution of other, non-occluded features.

    \item We introduce $XtrAIn$, a new occlusion-based attribution method that transfers the occlusion operation from the input space to feature-associated parameter updates. We further introduce $Xstep$, a lightweight approximation for compute-intensive settings, and $XtrAIn^+$, a target-focused variant designed to emphasize updates that strengthen the target class.

    \item We propose \textit{CleanScore}, a diagnostic metric for evaluating attribution cleanness in settings where signal and background regions are known.

    \item We evaluate the proposed method on controlled image datasets and a PAM50 breast-cancer subtype classification task, showing that parameter-space occlusion produces cleaner, safer and more interpretable explanations than standard attribution baselines.
\end{itemize} 
\begin{figure*}[t]
    \centering
    \includegraphics[width=0.8\textwidth]{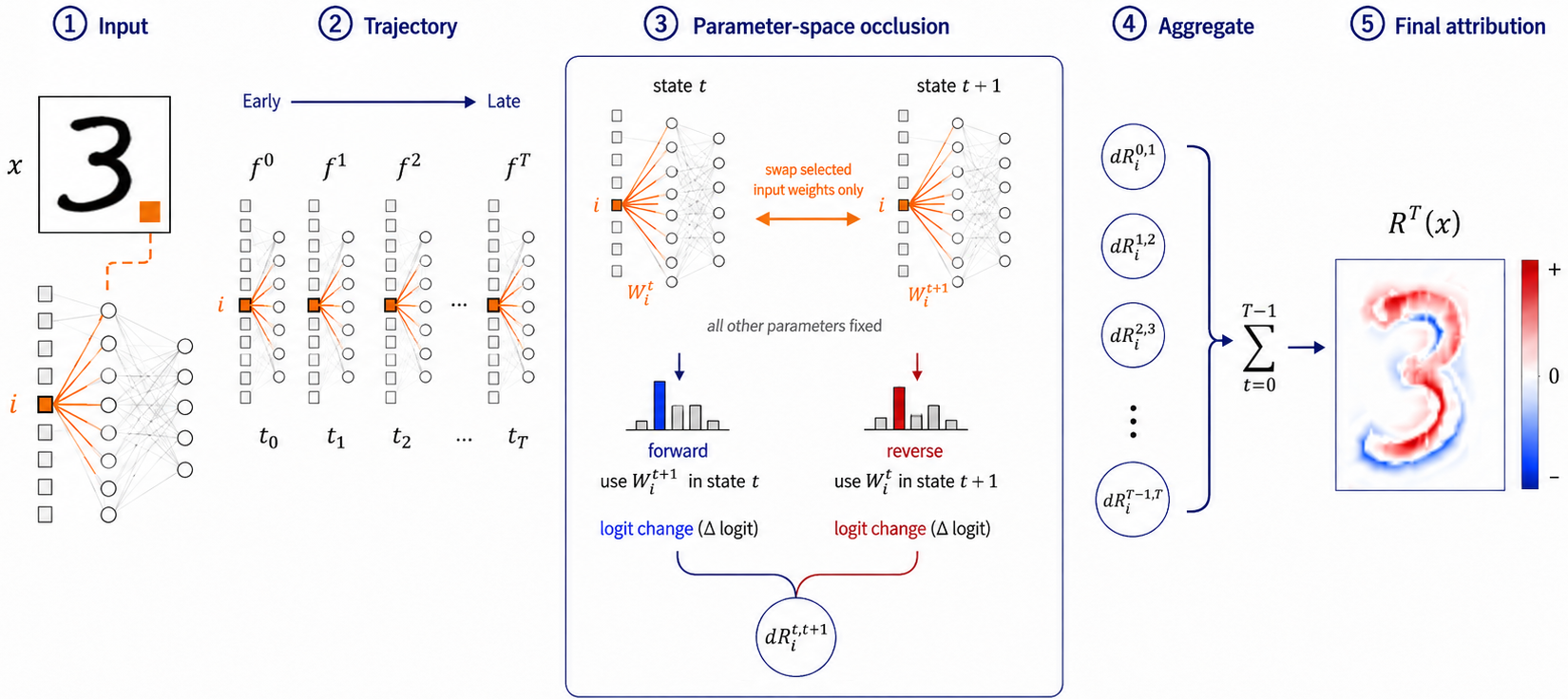}
    \caption{
\emph{Overview of \textit{XtrAIn}}. Given an input sample $x$, the method follows the training trajectory of the model and tracks the weights $W_i^t$ associated with each input feature $i$. For two consecutive model states, $t$ and $t+1$, \textit{XtrAIn} performs parameter-space occlusion by replacing only the feature-associated weights while keeping all other parameters fixed. The resulting forward and reverse logit changes are combined into a step-wise attribution update $dR_i^{t,t+1}$, and these updates are accumulated over training to obtain the final attribution map $R^T(x)$.
    }
    \label{fig:xtrain_overview}
\end{figure*}

\section{Related Work}
\label{sec:related}


\textbf{Training Dynamics}. 
Many ML disciplines study training dynamics, with the most notable being the framework of \textit{Developmental Interpretability}, which examines the 
learning phenomena of deep models as they form over time. This has led to in-depth investigations of a model's complex behaviors, such as grokking \cite{power2022grokkinggeneralizationoverfittingsmall} 
and emergent capabilities \cite{schaeffer2023emergentabilitieslargelanguage, Steinhardt2023EmergentAbilities}, while offering insights into its 
interpretability, as with the detection of personality traits \cite{chen2025personavectorsmonitoringcontrolling}. 
However, this line of research focuses primarily on interpreting model behavior, rather than estimating feature importance. Conversely, another approach 
that addresses this theoretical question is that of \textit{Feature Selection} \cite{10.5555/944919.944968} which focuses on training statistics, 
feature importance and data influence. A related subfield, namely \textit{Embedded Methods} \cite{10.1145/3205651.3208227} operates by introducing an 
artificial layer between the input and the first hidden layer to collect different statistics throughout the training process—most commonly weights and 
gradients \cite{Importance2024Veas}—which are then used to infer feature importance using different techniques applied to the collected gradient or weight profiles. 
Nonetheless, the relationship between these statistics and the model's overall decision-making process remains ambiguous.

\textbf{Occlusion-based Techniques}. These highly popular attribution methods  estimate feature importance by systematically altering parts of the input representation and measuring the corresponding drop in the model’s output score. The work of \cite{JMLR:v22:20-1316} comprises the most detailed study of occlusion-based methods, grouping them all under the same framework and providing a foundation for theoretical comparison. Despite their strengths and successes, critical limitations such as added bias \cite{pmlr-v162-rong22a} and out-of-distribution data \cite{10.5555/3454287.3455160, 10.1007/s11222-021-10057-z} persist.


\textbf{Beyond Classical Occlusion.}
A series of techniques have attempted to alleviate the theoretical limitations of Occlusion methods for feature removal. Statistical measures being applied to 
the whole data distribution, such as PDP \cite{10.1214/aos/1013203451}, FMI \cite{JML-v20-18-760} and ALE \cite{apley2019visualizingeffectspredictorvariables} 
offer insights into feature importance, while marginalizing the effect of complementary features. However, they are designed to provide global explanations. Authors 
in \cite{10.1007/978-3-032-08333-3_8} apply perturbations in the transformed Fourier space, yet with unknown effects for the problems at hand. A different line of 
work \cite{inproceedingsOcclusionSensitiv} computes a similarity score among the inner representations of augmented images and their occluded counterparts, yet the limitations of feature removal persist. Recently, authors in \cite{lymperopoulos2026weightperturbationfeatureattribution} developed a novel occlusion-based approach using weight perturbations, estimating feature importance as a distance score between the untrained and trained model versions. Inspired by this direction, we build upon it while significantly strengthening its foundations.

Our proposed method advances beyond classical occlusion rules using weight perturbations. We exploit mechanisms within training for feature independence and transfer the field of occlusion rules to the parameter space, where the problems of Added Bias and OoD are not expressed, while attribution shift is carefully eliminated. 

\section{Methodology}
\label{sec:method}

In general, attribution methods assign importance scores using rules derived from specific model properties. However, the choice of which properties should define \textit{importance} is not uniquely specified, since importance itself lacks a precise and universally accepted definition \cite{lipton2017mythosmodelinterpretability, haufe_explainable_2026}. As a result, every attribution method implicitly relies on assumptions that connect the model's mathematical behavior to a human-interpretable notion of feature contribution. In this work, we make these assumptions explicit and use them to motivate the design of our training-guided attribution rule.

\begin{assumption}
    \label{assum:1}
    Attribution scores are relative values whose interpretation depends primarily on their sign, ranking, and relative magnitude across features.
\end{assumption}

\textbf{Mathematical Notation.}
We assume an arbitrary FCNN $f = f^L \circ f^{L-1}\circ \dots \circ f^0$ receiving input $x \in \mathcal{X}$, with $\mathcal{X}$ being 
the input space. Given an arbitrary intermediate layer $l$ with $n_l$ neurons, weights $W^l \in \mathbb{R}^{n_{l-1}\times n_{l}}$ and bias $b^l\in \mathbb{R}^{n_l}$,
the input $x^l_j$ and output $z^l_j$ of a neuron $j$ in this layer are calculated as:
\begin{align}
    \label{eq:fcnn}
    x^{l}_j &= \sum_{i \in[l-1]} z_i^{l-1} \cdot w_{ij}^l + b_j^l, \\
    z_j^l &= \sigma(x_j^l),
\end{align}
where $[l]$ represents the neurons of layer $l$ and $\sigma$ is a nonlinear activation function (usually ReLU).

\subsection{Foundations}
\label{sub:foundations}

Conceptually, model parameters govern the interactions among neurons within the network. In this regard, attribution methods are expected to depend on these values; otherwise, the resulting scores would be model-independent, thereby weakening their role as model explanations. This implies that the changes in parameter values result in the alteration of the attribution scores assigned to a given input \cite{adebayo2020sanitycheckssaliencymaps}. Therefore, during training, where parameters change across optimization steps, the evolution of the model can be associated with a sequence of attribution scores $\mathbf{R}^t \in \mathbb{R}^{n_0}$, with $t \in \{1,\dots,T\}$ representing the training step and $T$ the total number of steps.

\begin{assumption} 
    \label{assum:2}
    In the general case, an update of the model parameters alters the attribution scores for a given input.
\end{assumption}

Building on \cref{assum:2}, we develop a methodology for feature-importance estimation that departs from the standard \textit{static} view, where attribution scores are extracted only from the final trained model. Instead, we propose a \textit{training-guided} attribution score, obtained by aggregating attribution changes across consecutive model states. This shifts the core question from \textquoteleft \textit{how to explain a trained model}\textquoteright{} to \textquoteleft \textit{how to explain a model update}\textquoteright. We posit that this presents a more manageable problem to solve, as it allows for the exploitation of meaningful, tractable data drawn from the training process. 

To tackle this challenge, we introduce a step-wise attribution rule that estimates the added attribution value $d\mathcal{R}^t$ induced by each model update. It will form the updating mechanism of a recursive function
for the estimation of feature attribution. This perspective requires that the model update process remains consistent across training, so that attribution changes can be compared and accumulated over consecutive optimization steps.

\begin{assumption} 
    \label{assum:3}
    The model's operational principles and updating mechanism remain consistent throughout training. Accordingly, the attribution-updating mechanism must remain invariant across training steps.
\end{assumption} 

\subsection{Update Rule \& XtrAIn}
\label{sub:rec_func}

To develop an update rule for the estimation of $d\mathcal{R}^t$, we reflect upon the model's overarching goal, that is
the \textit{meaningful update of its parameters} towards loss minimization. This is ultimately expressed at the model logits.

\begin{assumption} 
    \label{assum:4}
    The change in model logits represents the effect of parameters' update.
\end{assumption}

\Cref{assum:4} fosters the estimation of the effect for each feature's parameters update directly at the output logits (in special case, a zero score is assigned to parameter updates that cause no effect in model output). Therefore, for each input feature $i$ and target class $\odot$, we define the update-level attribution score as:
\begin{equation}
    \label{eq:upd_rule}
    d\mathcal{R}^{t,t+1}_{i}(x;\odot)
    =
    \mathcal{I}^{\odot \top}
    \cdot\left(
    df^{t,t+1}_{i}(x) + d\tilde{f}^{t,t+1}_{i}(x)
    \right),
\end{equation}
where $df^{t,t+1}_{i}(x), d\tilde{f}^{t,t+1}_{i}(x) \in \mathbb{R}^{n_L}$ are logit-change vectors defined as:
\begin{align}
    df^{t,t+1}_{i}(x)
    &=
    f^{t}_{W_i^t \rightarrow W_i^{t+1}}(x) - f^{t}(x), \label{eq:df} \\
    d\tilde{f}^{t,t+1}_{i}(x)
    &=
    f^{t+1}(x) - f^{t+1}_{W_i^{t+1} \rightarrow W_i^{t}}(x). \label{eq:df'}
\end{align}
Here, $f^t$ and $f^{t+1}$ denote the model before and after a training update at time $t$, respectively, while $W_i^t$ denotes the set of weights directly connected to input feature $i$ at this time, i.e.,
\[
W_i^t = \{w^t_{ij} \mid j \in \{1,\ldots,n_1\}\}.
\]
The notation $f^{t}_{W_i^t \rightarrow W_i^{t+1}}$ denotes the model at state $t$ where only the feature-associated weights $W_i^t$ are replaced by their updated values $W_i^{t+1}$, while all other parameters remain fixed. Similarly, $f^{t+1}_{W_i^{t+1} \rightarrow W_i^{t}}$ denotes the reverse replacement in the updated model.

The sign vector $\mathcal{I}^{\odot} \in \{-1,1\}^{n_L}$ maps logit changes to rewards:
\begin{equation}
    \mathcal{I}^{\odot}_c =
    \begin{cases}
    \text{ }1, & \text{if } c = \odot, \\
    -1, & \text{if } c \neq \odot.
    \end{cases}
\end{equation}
Thus, $d\mathcal{R}^{t,t+1}_{i}(x;\odot)$ is a scalar attribution update for feature $i$, computed with respect to a fixed target class $\odot$ by projecting the logit-change vector onto the corresponding target-class sign vector. Collecting these scalar scores over all input features yields the step-wise attribution vector $d\mathcal{R}^{t,t+1}(x;\odot) \in \mathbb{R}^{n_0}$. For simplicity, we omit the explicit target argument in the following and use $\mathcal{R}^t(x)$ and $d\mathcal{R}^{t,t+1}(x)$ to denote their target-conditioned counterparts.

The two logit-change vectors in \cref{eq:upd_rule}, defined in \cref{eq:df,eq:df'}, correspond to forward and reverse parameter-space occlusions between consecutive model states. The first term measures the effect of replacing the feature-associated weights at state $t$ with their updated values from state $t+1$, while the second term measures the complementary effect of reverting the same weights in the updated model. Considering both directions makes the update rule symmetric with respect to consecutive model states and supports the inverse property discussed in \cref{sub:crit}.

Regarding the sign vector $\mathcal{I}^{\odot}$, its role is to map each logit update to a reward signal. Its design stems from the 
model's training principle, namely, the promotion of the target neuron and suppression of non-target neurons.

\begin{assumption}
    \label{assum:5}
    A logit update is positively rewarded if it represents:
    \begin{itemize}[topsep=0pt, parsep=0pt, partopsep=0pt]
        \item an increase, in case of the target neuron, 
        \item a decrease, in case of the non-target neurons.
    \end{itemize}
    The opposite cases are considered negative.
\end{assumption}

We examine whether the proposed update rule produces structured attribution patterns during training. As shown in \cref{fig:upd_evol_patterns}, the intermediate update scores reveal feature-associated structures that evolve across optimization steps. Early updates tend to highlight stable input-aligned patterns, while later updates include more class-competitive structures, reflecting how the model progressively adjusts its decision boundaries.

\begin{figure*}[h]
    \centering
    \begin{minipage}{0.48\textwidth}
        \centering
        \includegraphics[width=\linewidth]{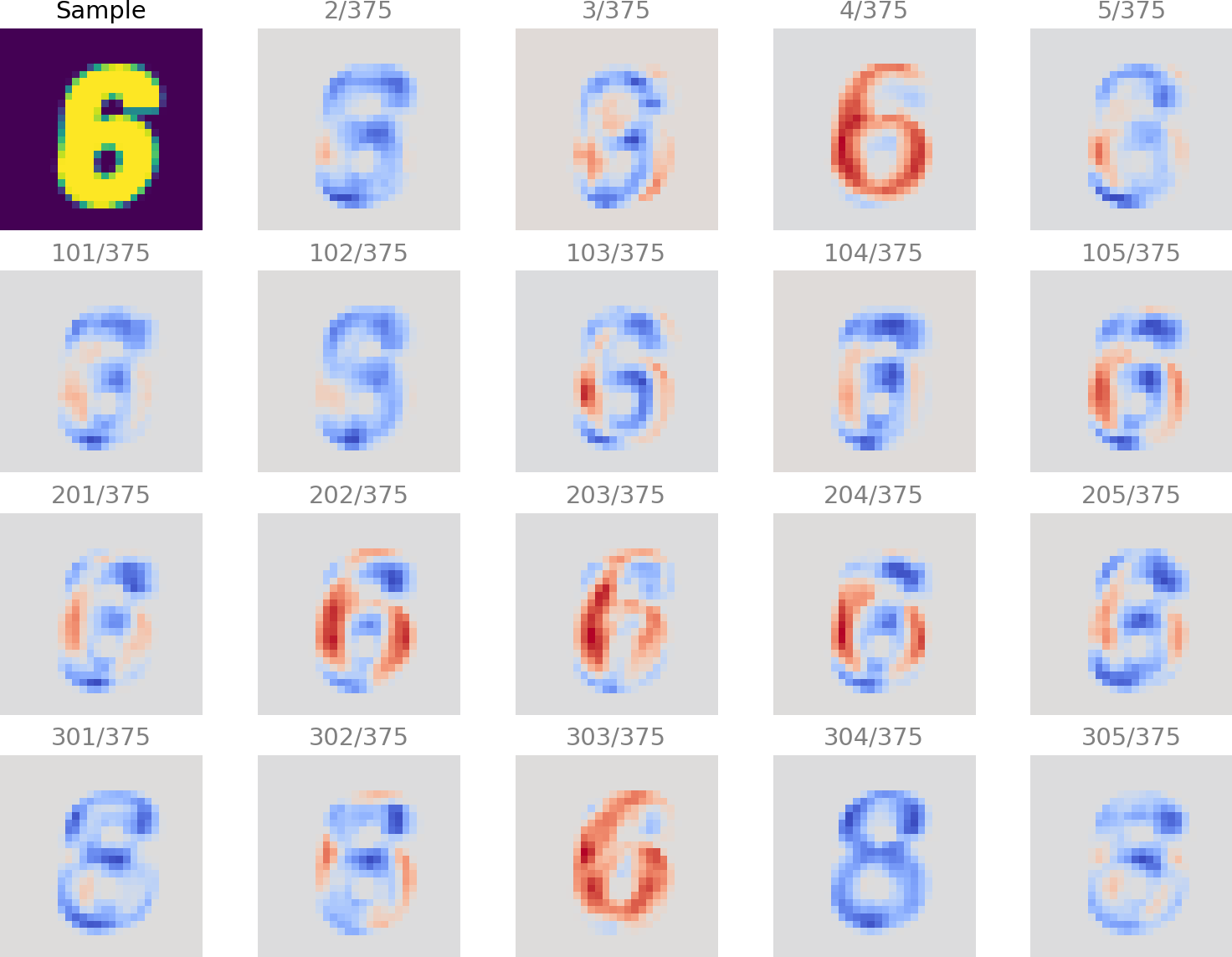}
        \label{fig:sub_1}
    \end{minipage}
    \hfill
    \begin{minipage}{0.48\textwidth}
        \centering
        \includegraphics[width=\linewidth]{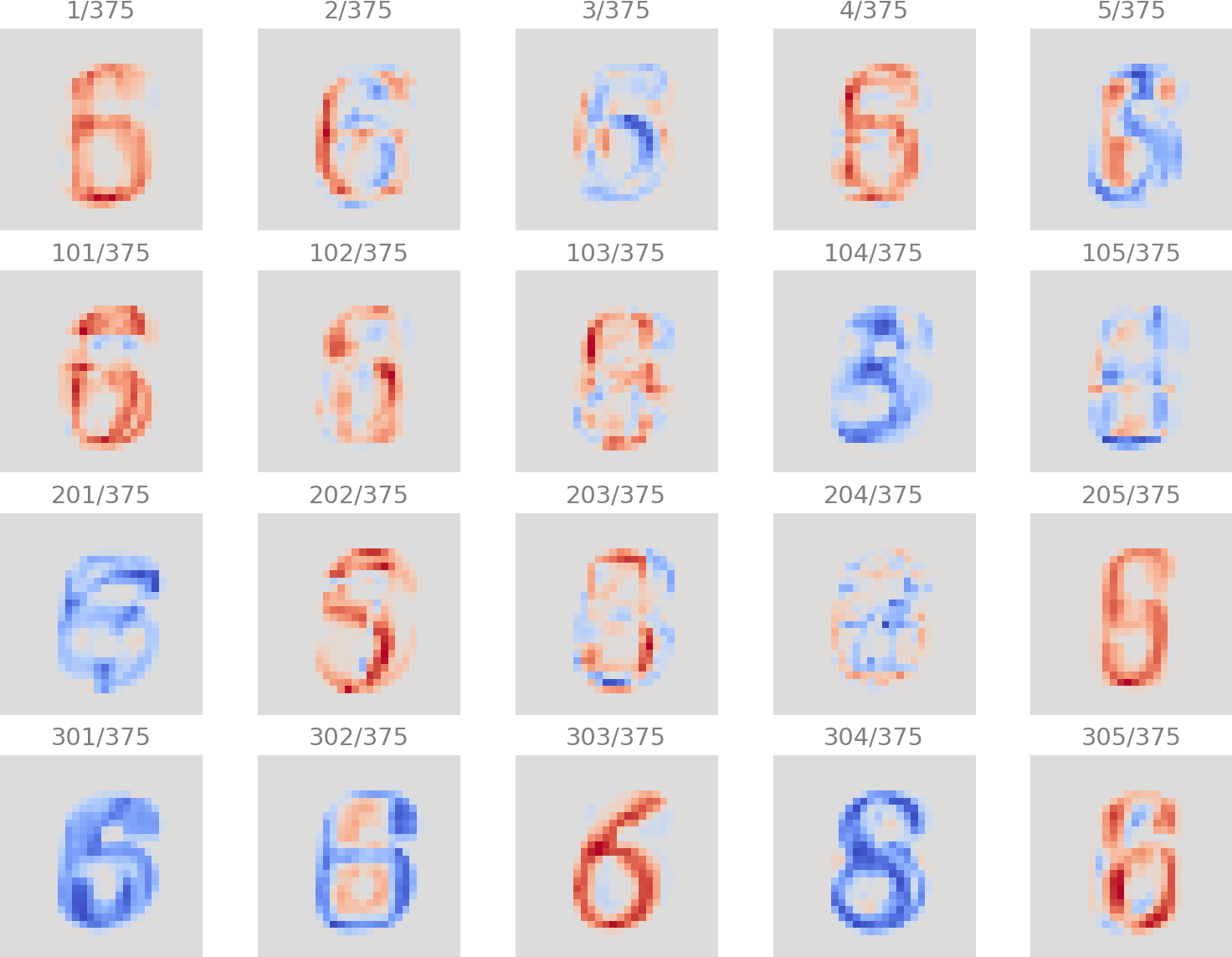}
        \label{fig:sub_2}
    \end{minipage}
    \begin{minipage}{0.48\textwidth}
        \centering
        \includegraphics[width=\linewidth]{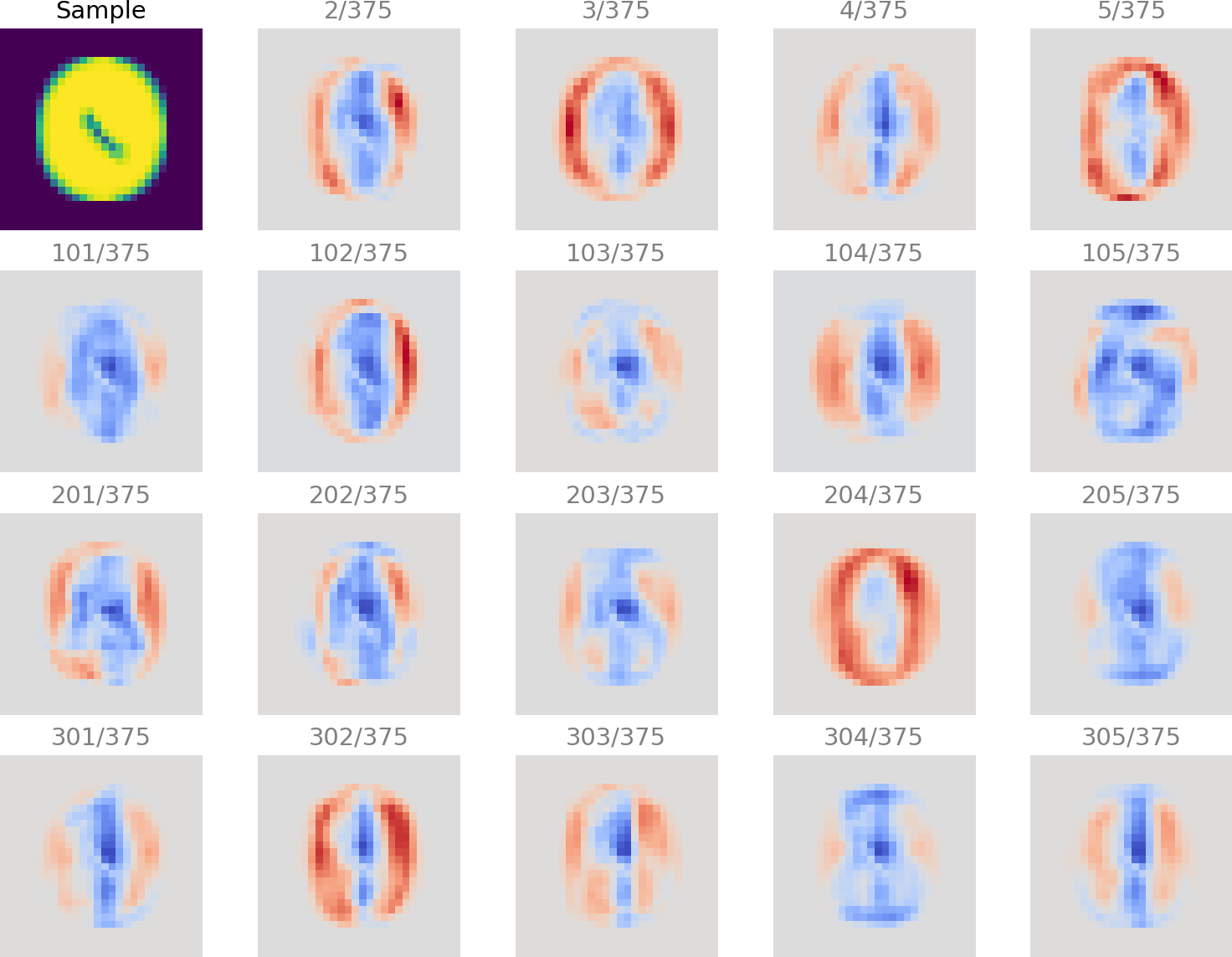}
        \subcaption{Patterns at the first training epoch}
        \label{fig:sub_3}
    \end{minipage}
    \hfill
    \begin{minipage}{0.48\textwidth}
        \centering
        \includegraphics[width=\linewidth]{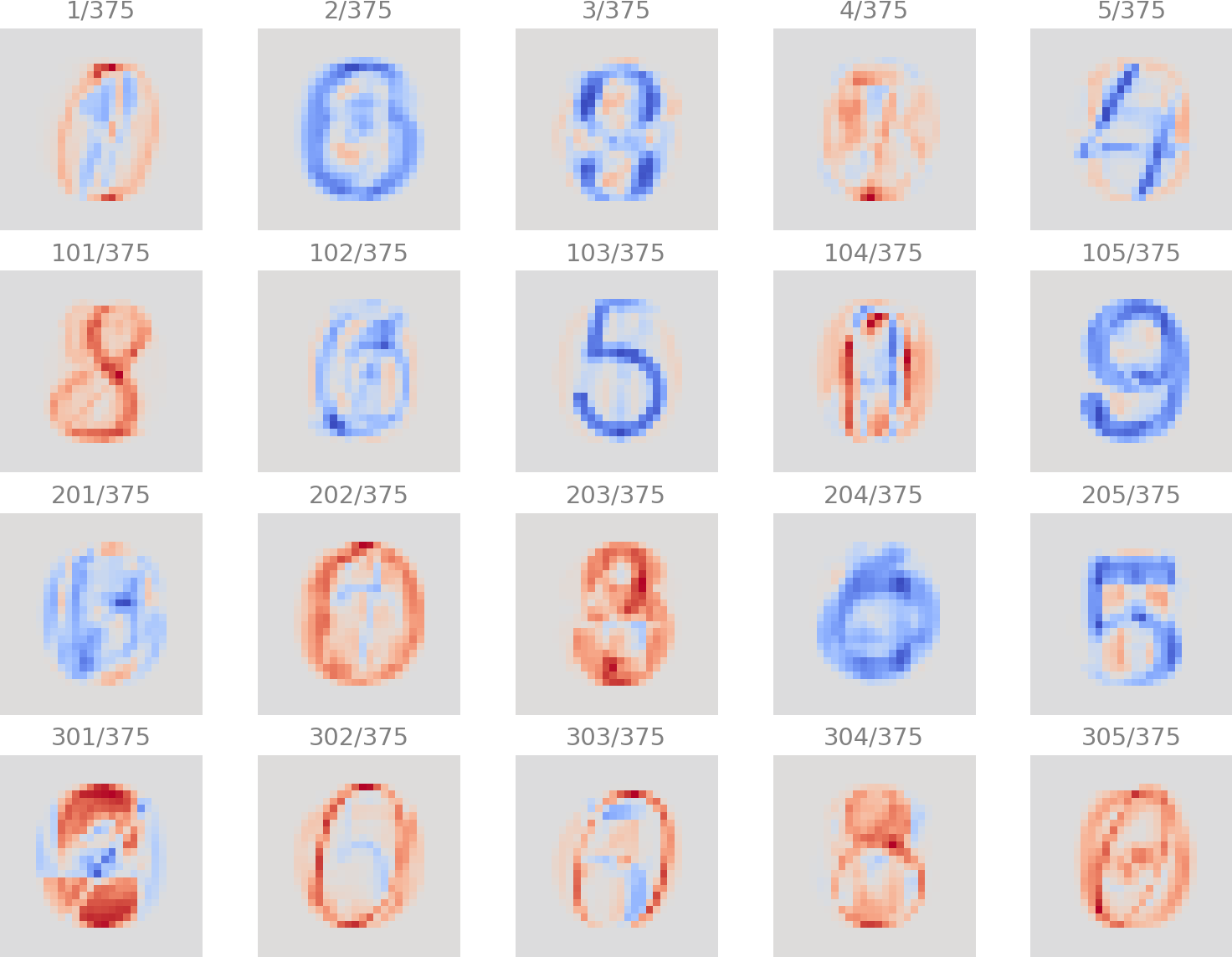}
        \subcaption{Patterns at the fifth training epoch}
        \label{fig:sub_4}
    \end{minipage}
    \caption{\emph{Intermediate update-level attribution patterns for two samples across training epochs.} The resulting scores of the update rule are displayed at different steps of a given training epoch (as indicated by their titles), uncovering a diverse set of attribution patterns within the FCNN's training process. In the early training state of the model, heatmaps indicate higher stability in pattern formation, while in later stages, more intricate patterns emerge.}
    \label{fig:upd_evol_patterns}
\end{figure*}

Considering all intermediate updates, we introduce a training-guided attribution method, named $XtrAIn$, which accumulates the attribution contributions induced by consecutive model updates. Given the step-wise attribution vector $d\mathcal{R}^{t, t+1}(x)$, $XtrAIn$ updates the attribution score recursively as:

\begin{align}
    \label{eq:XtrAIn_rule}
    \mathcal{R}^{t+1}(x) = \mathcal{R}^t(x) + d\mathcal{R}^{t, t+1}(x).
\end{align}

\noindent The recursion is initialized with $\mathcal{R}^0(x)=0$, treating the randomly initialized model as a neutral attribution baseline, since it has not yet learned input-output associations from the data.

Since the exact version of $XtrAIn$ requires evaluating attribution contributions across all consecutive training states, its computational cost can become prohibitive for large models or long training trajectories. To address this limitation, we introduce $Xstep$, a lightweight approximation that applies the same update rule only between selected model states, thereby skipping intermediate updates. As shown in \cref{sec:appl}, this approximation preserves the main attribution patterns while substantially reducing the computational burden, making it suitable for time- and resource-constrained settings. The total disregard of all intermediate model states results in an enhanced version of methods introduced in \cite{lymperopoulos2026weightperturbationfeatureattribution}.

Together, $XtrAIn$ and $Xstep$ provide a training-guided approach to feature-importance estimation, where attribution is derived from the model's own parameter updates rather than from externally chosen input baselines. This avoids hand-crafted baseline choices and reduces the risk of introducing artificial or out-of-distribution perturbations.

\subsection{Theoretical Considerations} 
\label{sub:theory}

\subsubsection{Rationale} 
\label{subsub:rationale}
To identify a mechanism for feature independence we reflect upon a parameter's update: In every step of the training process, it receives a gradient signal that points toward loss minimization. Its updated value will be defined by the term:
\begin{equation}
    \label{eq:train_upd}
    \Delta w_{ij} = -\eta * \frac{\partial \mathcal{L}}{\partial w_{ij}}.
\end{equation} 
Crucially, during the calculation of the gradient signal, all contributing parameters, which in FCNNs include the weights and biases of subsequent layers, are frozen, with their effect being captured within this signal through the chain rule. In this regard, this update is:
\begin{itemize}[topsep=0pt, parsep=0pt, partopsep=0pt]
    \item \textit{meaningful}: it is applied for loss minimization,
    \item \textit{independent}: all other model parameters are considered stationary.
\end{itemize}
The gradient signal carries the parameter's particular \textit{responsibility} towards loss optimization. Hence, we assume that its contribution can be theoretically traced according to \cref{assum:4} by an occlusion rule among the model and a replica, in which the value $w_{ij}$ is altered to $w'_{ij}$. 

Shifting our analysis from parameter to neuronal contributions, a higher level of abstraction is required, since the loss signal is only meaningful for the model's parameters, and neurons are the intermediate steps. In this process, the gradient of the loss with respect to the input features is calculated as:
\begin{equation}
    \frac{\partial \mathcal{L}}{\partial i} =  \sum_{j \in [n_1]} w_{ij} * \frac{\partial \mathcal{L}}{\partial j}
    \label{eq:neuron_loss_sig}
\end{equation}
where $i, j$ represent neurons of the input and first hidden layer respectively. However, the values $\frac{\partial \mathcal{L}}{\partial j}$ are shared among all input features $i$; a feature's individual traits are only introduced by weights $w_{ij}$. Hence, we can conceptually reduce a neuron to its attached parameters (for the input neurons, these being their weights $W_i$). These appear equivalent in \cref{eq:neuron_loss_sig}, enabling their grouping and reduction under one occlusion rule, estimating feature attribution by generalizing an individual parameter's occlusion rule to a group of carefully selected parameters.

\begin{assumption} 
    \label{assum:6}
    A neuron's effect should be estimated according to the occlusion rule, when grouping and considering the changes in its attached parameters. 
\end{assumption}

Having established the rationale behind the design of the terms $df$, $df'$, we now proceed to the explanation of their combined effect.

\subsubsection{Criteria and Theoretical Aspects} 
\label{sub:crit}

Researchers in XAI \cite{lrpOriginal, sundararajan2017axiomaticattributiondeepnetworks, 10.5555/3295222.3295230, srinivas2019fullgradientrepresentationneuralnetwork, fu2020axiombasedgradcamaccuratevisualization} 
have widely adopted foundational criteria to assess their methodologies, in response to the scarcity of theoretically sound evaluation metrics. These criteria are considered as necessary conditions any attribution method should satisfy. The methodology of $XtrAIn$ introduces a new field for criteria development, with an objective to guide the method's design and validate its effectiveness. 

\begin{criterion}
    \label{crit:one}
    (\textbf{Inverse Property}) Let $t$, $t+1$, $t+2$ be three consecutive time-steps and  $f^t, f^{t+1}, f^{t+2}$ be the corresponding models. If step $t+1$ is the \textit{reverse step} of $t$, meaning that $f^{t+2} = f^t$, then, it should hold that 
    $\mathcal{R}^{t+2}(x) = \mathcal{R}^t(x),\text{ } \forall x\in \mathbb{R}^{n_0}$.
\end{criterion}

Adherence to the Inverse Property yields robustness against attribution distortion in case of an immediate reverse in the model's state. 

\begin{theorem}
    $XtrAIn$ satisfies the Inverse Property.
    \label{theorem:one}
\end{theorem}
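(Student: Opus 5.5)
The plan is to unroll the recursion \cref{eq:XtrAIn_rule} over the two steps and show that the two step-wise contributions cancel exactly. Since $\mathcal{R}^{t+2}(x)=\mathcal{R}^t(x)+d\mathcal{R}^{t,t+1}(x)+d\mathcal{R}^{t+1,t+2}(x)$, and the sign vector $\mathcal{I}^{\odot}$ is linear and identical in both steps (fixed target class), it suffices to prove the vector identity
\[
df^{t,t+1}_i(x)+d\tilde f^{t,t+1}_i(x)+df^{t+1,t+2}_i(x)+d\tilde f^{t+1,t+2}_i(x)=0
\]
for every feature $i$ and every input $x$.

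The key step is to substitute the hypothesis $f^{t+2}=f^t$, interpreted at the level of parameters: every parameter at state $t+2$ equals its value at state $t$, so in particular $W_i^{t+2}=W_i^t$. I will make this interpretation explicit, because the partial replacement models $f_{W\to W'}$ depend on the parameters and not only on the function. With this identification, the models appearing in the second step become
\[
f^{t+1}_{W_i^{t+1}\rightarrow W_i^{t+2}}=f^{t+1}_{W_i^{t+1}\rightarrow W_i^{t}},\qquad f^{t+2}_{W_i^{t+2}\rightarrow W_i^{t+1}}=f^{t}_{W_i^{t}\rightarrow W_i^{t+1}}.
\]
Then, by \cref{eq:df,eq:df'},
\[
df^{t+1,t+2}_i=f^{t+1}_{W_i^{t+1}\rightarrow W_i^{t}}-f^{t+1}=-d\tilde f^{t,t+1}_i,\qquad d\tilde f^{t+1,t+2}_i=f^{t}-f^{t}_{W_i^{t}\rightarrow W_i^{t+1}}=-df^{t,t+1}_i.
\]
Adding these gives the cancellation. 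Applying $\mathcal{I}^{\odot\top}$ yields $d\mathcal{R}^{t+1,t+2}_i=-d\mathcal{R}^{t,t+1}_i$ for every $i$, and hence $\mathcal{R}^{t+2}(x)=\mathcal{R}^t(x)$ for every $x$.

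The computation itself is routine. The main point needing care is the one noted above: stating that ``reverse step'' means equality of all parameters, not merely of the network functions. If the definition only gave functional equality, the partially replaced models could in principle differ, for example under a permutation symmetry of the hidden neurons. I would therefore fix the parameter-level reading, which is what the recursion operates on. It is also worth remarking that the cancellation depends on the symmetric use of both the forward term $df$ and the reverse term $d\tilde f$ in \cref{eq:upd_rule}. Each step's forward term cancels the other step's reverse term, so keeping only one of the two terms would not satisfy \cref{crit:one}.
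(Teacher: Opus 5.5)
Your proposal is correct and follows essentially the same argument as the paper. You unroll the recursion over the two steps, cancel the reverse term of the first step against the forward term of the second (using $W_i^{t+2}=W_i^t$), and cancel the forward term of the first step against the reverse term of the second (using $f^{t+2}=f^t$ together with $f^{t+2}_{W_i^{t+2}\rightarrow W_i^{t+1}}=f^{t}_{W_i^{t}\rightarrow W_i^{t+1}}$). Your explicit parameter-level reading of ``reverse step'' is what the paper's proof uses tacitly when it invokes these equalities, so stating it is a useful clarification rather than a different route.
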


The theorem's proof can be found in \cref{app:inv_prop}. This criterion secures information erasure in the case of error correction. While many operations could combine terms $df$ and $df'$ while satisfying the Inverse Property, the addition operation was selected for its simplicity.

\subsection{Disentangling Target \& Non-Target Updates}
\label{sub:disent}

The update of model parameters is a synchronous process, simultaneously reinforcing target neurons and suppressing non-target ones. Different attribution methods have attempted to disentangle output neuron effects \cite{shrikumar2019learningimportantfeaturespropagating, gu2019understandingindividualdecisionscnns, iwana2019explainingconvolutionalneuralnetworks}, 
yet we argue that they are limited when attempting to retroactively separate class-specific signals within a model whose internal representations have already 
converged into a deeply entangled, non-linear synthesis of all classes. In this section, we demonstrate $XtrAIn$'s capacity to disentangle the effects of output 
neurons, focusing on target neurons to further improve interpretability and faithfulness to the model.

Indeed, the loss value and parameter update can be decomposed into \textit{target} and \textit{non-target} terms as:
\begin{align}
    \mathcal{L} &= \mathcal{L}_{target} + \mathcal{L}_{non\_target} \label{eq:loss_decomp}\\
    \Delta w_{ij} &= \Delta w_{target} + \Delta w_{non-target},
    \label{eq:weight_decomp}
\end{align}
which in the case of Cross Entropy Loss becomes: 
\begin{equation}
    \mathcal{L} = \underbrace{-o_{t}}_{target}+ \underbrace{log(\sum_{k=1}^{[L]} e^k)}_{non-target}
    \label{eq:param_dis_spec}
\end{equation}
A detailed mathematical proof can be found in \cref{app:loss_dis_sup}.

\begin{figure}[t]
\centering
\begin{tikzpicture}
\begin{axis}[
    width=8.0cm,
    height=6.3cm,
    xlabel={SRG ($\uparrow$)},
    ylabel={CleanScore (CS) ($\uparrow$)},
    xmin=0.335, xmax=0.680,
    ymin=0.280, ymax=0.825,
    xtick={0.35,0.45,0.55,0.65},
    ytick={0.30,0.40,0.50,0.60,0.70,0.80},
    grid=major,
    grid style={black!12},
    tick label style={font=\footnotesize},
    label style={font=\small},
    every axis plot/.append style={line join=round, line cap=round},
    legend pos=south west,
    legend cell align=left,
    legend style={
        font=\footnotesize,
        fill=white,
        fill opacity=0.92,
        text opacity=1,
        draw=black!12,
        inner sep=2pt
    },
]

\addplot[
    fill=paretoSRG!16,
    draw=none,
    opacity=0.55,
    forget plot
]
coordinates {
    (0.335,0.280)
    (0.335,0.721)
    (0.618,0.721) 
    (0.618,0.705)
    (0.638,0.705) 
    (0.638,0.662)
    (0.649,0.662) 
    (0.649,0.280)
    (0.335,0.280)
};

\addplot[
    thick,
    dashed,
    paretoSRG,
    mark=none
]
coordinates {
    (0.335,0.721)
    (0.618,0.721) 
    (0.618,0.705)
    (0.638,0.705) 
    (0.638,0.662)
    (0.649,0.662) 
    (0.649,0.280)
};
\addlegendentry{Baseline Pareto front}

\addplot[
    only marks,
    mark=*,
    draw=baseSRG,
    fill=baseSRG,
    mark size=2.4pt
]
coordinates {
    (0.618,0.721) 
    (0.638,0.705) 
    (0.345,0.544) 
    (0.575,0.332) 
    (0.649,0.662) 
};
\addlegendentry{Baselines}

\addplot[
    only marks,
    mark=diamond*,
    draw=xtColor,
    fill=xtColor,
    mark size=3.0pt
]
coordinates {
    (0.624,0.781) 
    (0.619,0.772) 
};
\addlegendentry{X\_ST / XT}

\node[anchor=south east, font=\scriptsize, xshift=-2pt, yshift=1pt]
    at (axis cs:0.618,0.721) {OCCL};

\node[anchor=west, font=\scriptsize, xshift=3pt, yshift=1pt]
    at (axis cs:0.638,0.705) {DL};

\node[anchor=west, font=\scriptsize, xshift=5pt]
    at (axis cs:0.345,0.544) {SHAP};

\node[anchor=north, font=\scriptsize, yshift=-2pt]
    at (axis cs:0.575,0.332) {RISE};

\node[anchor=west, font=\scriptsize, xshift=4pt, yshift=-1pt]
    at (axis cs:0.649,0.662) {IG};

\node[anchor=west, font=\scriptsize, xshift=3pt, yshift=0.5pt]
    at (axis cs:0.624,0.781) {X\_ST};

\node[anchor=south east, font=\scriptsize, xshift=-2pt, yshift=0.5pt]
    at (axis cs:0.619,0.772) {XT};

\end{axis}
\end{tikzpicture}
\caption{Baseline-frontier Pareto analysis of SRG and CleanScore on TMNIST. Higher is better for both metrics.}
\label{fig:pareto_tmnist}
\end{figure}

\begin{figure}[t]
\centering
\begin{tikzpicture}
\begin{axis}[
    width=8.0cm,
    height=6.3cm,
    xlabel={SRG ($\uparrow$)},
    ylabel={CleanScore (CS) ($\uparrow$)},
    xmin=0.045, xmax=0.490,
    ymin=0.300, ymax=0.800,
    xtick={0.05,0.15,0.25,0.35,0.45},
    ytick={0.30,0.40,0.50,0.60,0.70,0.80},
    grid=major,
    grid style={black!12},
    tick label style={font=\footnotesize},
    label style={font=\small},
    every axis plot/.append style={line join=round, line cap=round},
    legend pos=south west,
    legend cell align=left,
    legend style={
        font=\footnotesize,
        fill=white,
        fill opacity=0.92,
        text opacity=1,
        draw=black!12,
        inner sep=2pt
    },
]

\addplot[
    fill=paretoSRG!16,
    draw=none,
    opacity=0.55,
    forget plot
]
coordinates {
    (0.045,0.300)
    (0.045,0.653)
    (0.450,0.653) 
    (0.450,0.300)
    (0.045,0.300)
};

\addplot[
    thick,
    dashed,
    paretoSRG,
    mark=none
]
coordinates {
    (0.045,0.653)
    (0.450,0.653) 
    (0.450,0.300)
};
\addlegendentry{Baseline Pareto front}

\addplot[
    only marks,
    mark=*,
    draw=baseSRG,
    fill=baseSRG,
    mark size=2.4pt
]
coordinates {
    (0.450,0.653) 
    (0.431,0.616) 
    (0.055,0.576) 
    (0.415,0.326) 
    (0.443,0.629) 
};
\addlegendentry{Baselines}

\addplot[
    only marks,
    mark=diamond*,
    draw=xtColor,
    fill=xtColor,
    mark size=3.0pt
]
coordinates {
    (0.386,0.755) 
    (0.403,0.708) 
};
\addlegendentry{X\_ST / XT}

\node[anchor=west, font=\scriptsize, xshift=4pt, yshift=0pt]
    at (axis cs:0.433,0.67) {OCCL};

\node[anchor=south east, font=\scriptsize, xshift=-2pt, yshift=1pt]
    at (axis cs:0.431,0.616) {DL};

\node[anchor=west, font=\scriptsize, xshift=5pt]
    at (axis cs:0.055,0.576) {SHAP};

\node[anchor=north, font=\scriptsize, yshift=-2pt]
    at (axis cs:0.415,0.326) {RISE};

\node[anchor=west, font=\scriptsize, xshift=4pt, yshift=-1pt]
    at (axis cs:0.443,0.629) {IG};

\node[anchor=west, font=\scriptsize, xshift=4pt, yshift=1pt]
    at (axis cs:0.386,0.755) {X\_ST};

\node[anchor=west, font=\scriptsize, xshift=4pt, yshift=1pt]
    at (axis cs:0.403,0.708) {XT};

\end{axis}
\end{tikzpicture}
\caption{Baseline-frontier Pareto analysis of SRG and CleanScore on TMNIST\_L. Higher is better for both metrics.}
\label{fig:pareto_tmnist_l}
\end{figure}

\begin{table*}[!ht]
    \caption{Quantitative analysis on the proposed metric, SRG $\angle$ CS for the Typeface MNIST (TMNIST) and Typeface MNIST Lines (TMNIST\_L) datasets. $XtrAIn$ (XT) and $Xstep$ (X\_ST) outperform other occlusion-based methods in this score by a margin. Best results are highlighted in bold.}
    \vspace{-10pt}
    \label{tab:main_results}
    \vskip 0.15in
    \begin{center}
    \begin{footnotesize}
    \begin{sc}
    \setlength{\tabcolsep}{3pt}
    \begin{tabular}{lccccccc}
    \toprule
    & \multicolumn{1}{c}{OCCL} & \multicolumn{1}{c}{DL} & \multicolumn{1}{c}{SHAP} & \multicolumn{1}{c}{RISE} & \multicolumn{1}{c}{IG} & \multicolumn{1}{c}{X\_ST} & \multicolumn{1}{c}{XT}\\

    Dataset & SRG $\angle$ CS ($\uparrow$) & SRG $\angle$ CS ($\uparrow$) & SRG $\angle$ CS ($\uparrow$) & SRG $\angle$ CS ($\uparrow$) & SRG $\angle$ CS ($\uparrow$) & SRG $\angle$ CS ($\uparrow$) & SRG $\angle$ CS ($\uparrow$) \\
    \midrule
    TMNIST & 0.950 & 0.951 &  0.644 & 0.664 & 0.927 & \textbf{1.0} & 0.99 \\
    TMNIST\_L & 0.793 & 0.752 &  0.579 & 0.528 & 0.770 & \textbf{0.850} & 0.815  \\
    \bottomrule
    \end{tabular}
    \end{sc}
    \end{footnotesize}
    \end{center}
    \vskip -0.1in
\end{table*}

In this regard, $XtrAIn$ has the capacity to exploit the analysis of the loss into the two complementary terms according to 
\cref{eq:loss_decomp}, and apply the update rule to $\mathcal{L}_{target}$. A further step in this analysis potentially removes any negative updates 
that decreased the target-class logit for a sample $x$, thus prioritizing patterns used explicitly for the emergence of the target neuron. Thus, the updates 
within the calculation of $\mathcal{R}_{T}^+$ that lead to a decrease of the target neuron activation are filtered out. This leads to the calculation of 
positive target attribution, defined as $XtrAIn_T^+$. 

However, due to the simultaneous application of both target and non-target effects in parameter updates, any attempt to isolate and measure the individual
effects of each change is inherently constrained; the result is a theoretical construct, pointing to the aggregated \textit{tendency} of the model towards 
strengthening the target class. Despite this, these scores remain insightful for validating the model's expression of causes and enhancing trust in them.

\subsection{CleanScore}
\label{cleansc}
We introduce \textit{CleanScore}, a metric designed to detect noise in attribution maps, which in case of occlusion-based approaches, can partially capture biases resulting from attribution shift. On simple datasets with pixel-level attributions, this bias appears mainly as small stochastic perturbations rather than systematic mean shifts. These perturbations inflate attribution variance within both the signal region and background, providing a proxy for method-induced noise.

We exploit the structure of synthetic datasets commonly used to evaluate FCNNs, in which the image signal is confined to a central region $\mathcal{C}$, with $\mathcal{B} = \mathcal{C}^c$ denoting the background, where by construction the model has no informative signal to read. This partition in $\mathcal{C}$ and $\mathcal{B}$ is achieved by thresholding on the mean value of the aggregated training data. Thus, given a signed attribution map $h \in \mathbb{R}^{H \times W}$ normalized to [-1,1], we define:
\vspace{-1pt}
\begin{equation}
    \text{CleanScore}(h) = \underbrace{\frac{\sum_{i \in \mathcal{C}} |h_i|}{\sum_{i} |h_i|}}_{\text{concentration}} \cdot \underbrace{\frac{1}{1 + \hat{\sigma}_{\mathcal{C}}^+}}_{\text{positive uniformity}} \cdot \underbrace{\frac{1}{1 + \hat{\sigma}_{\mathcal{B}}}}_{\text{background silence}}
\end{equation}

\noindent where $\hat{\sigma}_{\mathcal{C}}^+$ and $\hat{\sigma}_{\mathcal{B}}$ are the standard deviations of positive attributions within $\mathcal{C}$ and all attributions within $\mathcal{B}$, respectively.

The \textit{concentration} term penalizes methods that diffuse attribution mass into the background, where, by construction, no informative signal exists. The \textit{positive uniformity} and \textit{background silence} terms use intra-region standard deviation as a comparative proxy for noise. The underlying intuition is that observed variance decomposes as $\sigma^2_{obs} \approx \sigma_{true}^2+\sigma_{noise}^2$ where $\sigma_{true}$ reflects genuine attribution structure and $\sigma_\text{noise}$ reflects occlusion-induced perturbations. Under the assumption that faithful methods broadly agree on which pixels carry signal — defensible in the simple-dataset regime, where informative pixels are well-defined — differences in $\sigma_\text{obs}$ across methods are attributable to differences in $\sigma_\text{noise}$. Therefore, CleanScore is intended as a comparative measure rather than an absolute one: it ranks methods by their relative noise levels rather than certifying any as bias-free.

Negative attributions within $\mathcal{C}$ are excluded from the uniformity term for simplicity, as the positive term is sufficient. All three factors lie in $(0,1]$, and their product gives a scalar in $(0, 1]$, with higher values indicating cleaner attributions.
\vspace{-1pt}
\section{Evaluation}
\label{sec:appl}

This section describes the experimental setup used to evaluate $XtrAIn$ and its variants. We conduct experiments on a standard FCNN architecture with ReLU activation for the intermediate layer and softmax for the output layer. The models have four layers with gradually reduced neurons and are trained until convergence (typically achieving test accuracy above 90\% within 10 epochs) on an NVIDIA GeForce RTX 4060 GPU.

\begin{figure*}[h]
    \centering
    
    \includegraphics[width=0.9\textwidth]{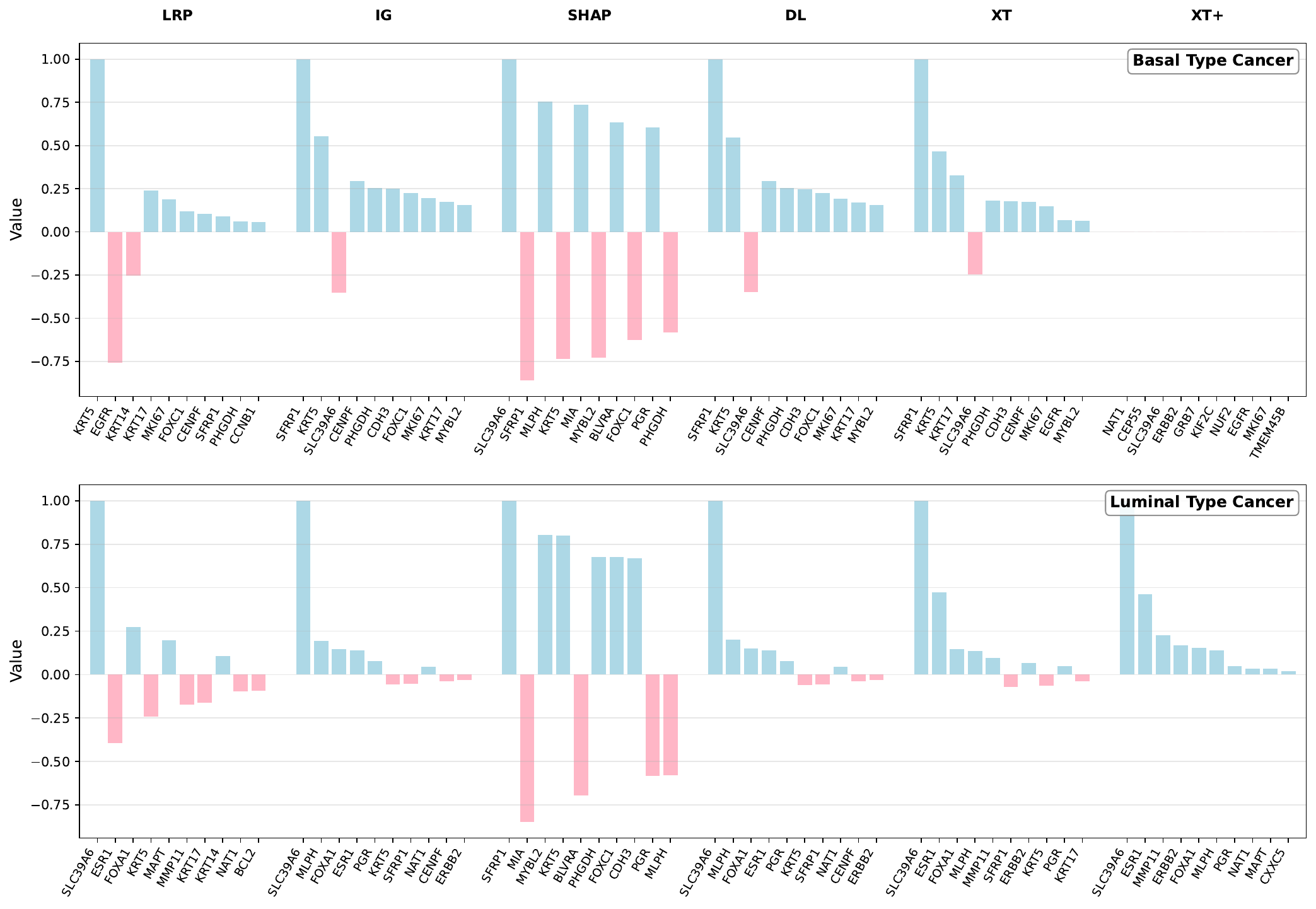} \\
    \vspace{-5pt}
    \caption{Bar plots for the top 10 normalized features of the aggregated attribution scores for each class, for the methods SHAP, LRP, IG, DL, $XtrAIn$ (XT) and $XtrAIn^+$ (XT+). In a controlled setting, we interrupt training at the second epoch, when only the Luminal Type class has been learned; $XtrAIn^+$ then successfully assigns zero importance to all features for this class.}
    \label{fig:bar_plot}
    \Description{A large plot with grouped bar plots for the following attribution methods: SHAP, LRP, IG, DL, $XtrAIn$ (XT) and $XtrAIn^+$ (XT+). 
    For each method the five most influential features it considers are plotted as bars. Most methods point to similar features.}
\end{figure*}

We select standard synthetic datasets for explainability of simple architectures while also satisfying the constraints of the CleanScore metric: Typeface MNIST (TMNIST) \cite{tmnist_vyawahare} with digits rendered from Google Fonts, Typeface MNIST Lines (TMNIST\_L) with random added lines on the TMNIST dataset, and AffineMNIST (AMNIST) \cite{affmnist_Mader} with affine transformations applied to MNIST. The latter is selected as a theoretical basis for interpretability of complex datasets where FCNNs cannot converge (achieving at best 25\% accuracy after 20 epochs) — thus excluded from quantitative evaluation. All images have size 28×28 with values normalized to [0.5, 1] to ensure positive input values across all pixels. 

We compare $XtrAIn$ ($XT$) and $Xstep$ ($X\_ST$) against common occlusion methods: Shapley values (SHAP) \cite{10.5555/3295222.3295230}, Integrated Gradients ($IG$) \cite{sundararajan2017axiomaticattributiondeepnetworks} (as it can be expressed as an occlusion-based method \cite{geiger2025causalabstractiontheoreticalfoundation}), DeepLift ($DL$) \cite{shrikumar2019learningimportantfeaturespropagating} and Randomized Input Sampling ($RISE$) \cite{petsiuk2018riserandomizedinputsampling}. The parameterization of these methods follows the standard setup of the Captum library. For SHAP, we use a uniform baseline fill, as it yielded the best results.

We use SRG \cite{bluecher2024decoupling} as the evaluation metric to compare attribution methods, selected for its robustness and stability in AUC score calculation. This score is computed as:
\vspace{-5pt}
\begin{equation}
    SRG(x) = LIF(x) - MIF(x),
\end{equation}

where LIF and MIF compute the AUC score of the deletion curve starting from the least and most important features respectively — according to the ranking produced by an attribution method. 

However, SRG is itself computed via occlusion, and so inherits the same OoD and artifact introduction problems that affect occlusion-based attribution methods. This creates a structural overlap between what these methods optimize for — the change in model output under feature masking — and what the metric measures. As a result, methods built on occlusion are expected to score well on SRG largely by construction, independent of whether they identify features the model actually relies on. 

We therefore do not consider the competitive, but not leading performance on SRG as a failure mode for our method. 
To break this circularity, we pair SRG with CleanScore, forming an orthogonal measure of interpretability accuracy and bias. We apply it to 100 test samples and report the norm of the two scores in \cref{tab:main_results}. The resulting Baseline-frontier Pareto plots are shown in \cref{fig:pareto_tmnist,fig:pareto_tmnist_l}. Additional attribution visualizations for popular occlusion-based methods are shown in \cref{fig:both_res}, together with $XtrAIn$ and its variants.

For the real-world dataset, we selected the Prediction Analysis of Microarray 50 (PAM50) dataset \cite{doi:10.1200/JCO.2008.18.1370, gdc_portal}, a 50-gene signature assay used to classify breast cancer based on its intrinsic molecular subtype. We followed the same process as in \cite{METSCH2025110124} to prepare the dataset for binary classification with basal-like or luminal-like classes. In this setting, the model converges within four epochs, reaching perfect accuracy. However, we observe a spurious strategy: for the first two epochs, it learns only the luminal-like class while ignoring the basal class, and only later starts learning the second class. We use this behavior to test whether the methods detect dynamics. The top 10 aggregated features per class are shown in \cref{fig:bar_plot}.

\begin{figure*}[b!]
    \centering
    \setlength{\tabcolsep}{2pt}  

    \begin{tabular}{c c c c c c c c}
        \textbf{Sample} & \textbf{OCCL} & \textbf{SHAP} & \textbf{DL} & \textbf{IG} &
        \textbf{X\_ST} & \textbf{XT} & \textbf{XT+} \\[3pt]

        \includegraphics[width=0.115\textwidth]{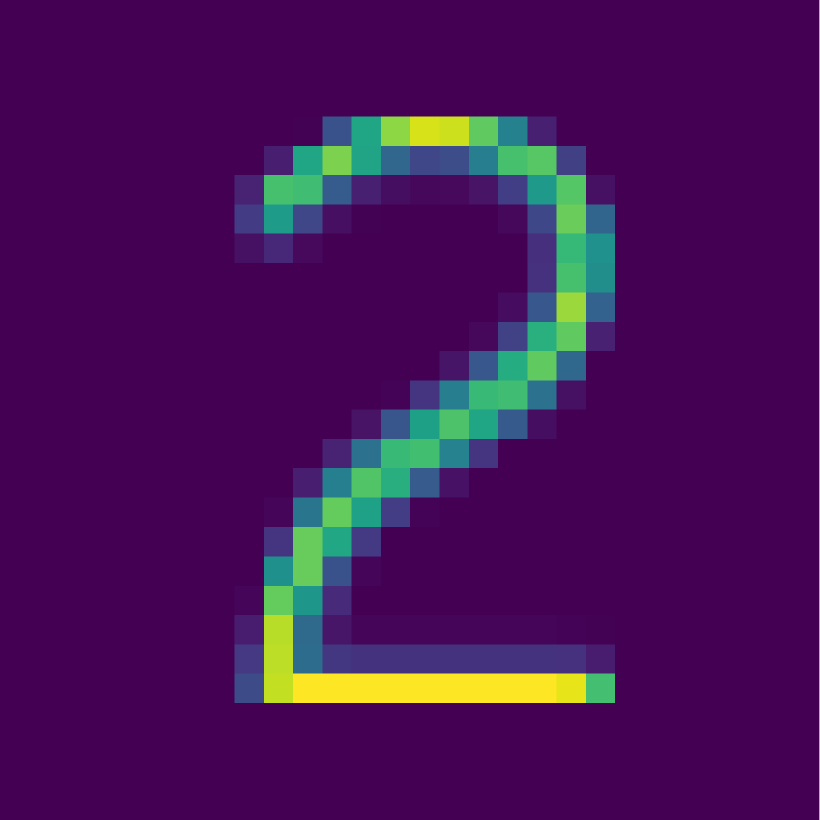} &
        \includegraphics[width=0.115\textwidth]{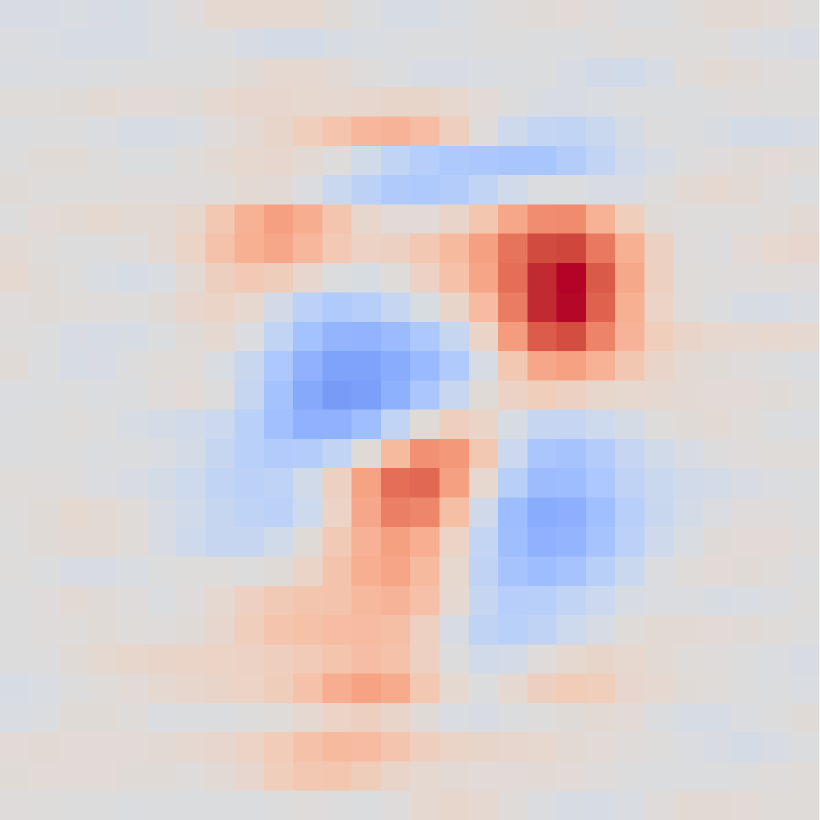} &
        \includegraphics[width=0.115\textwidth]{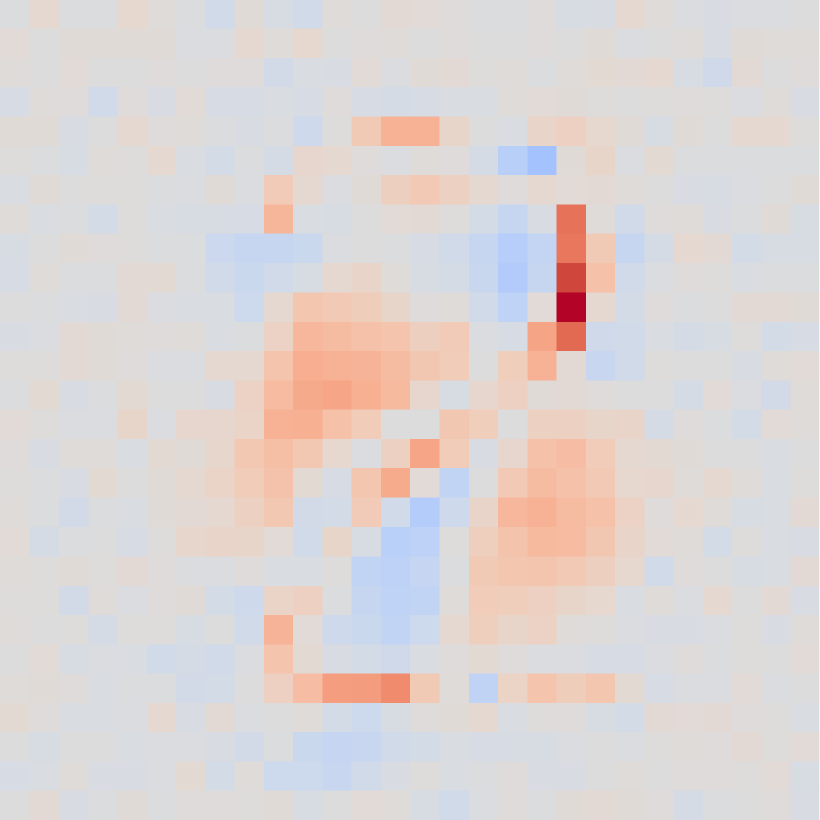} &
        \includegraphics[width=0.115\textwidth]{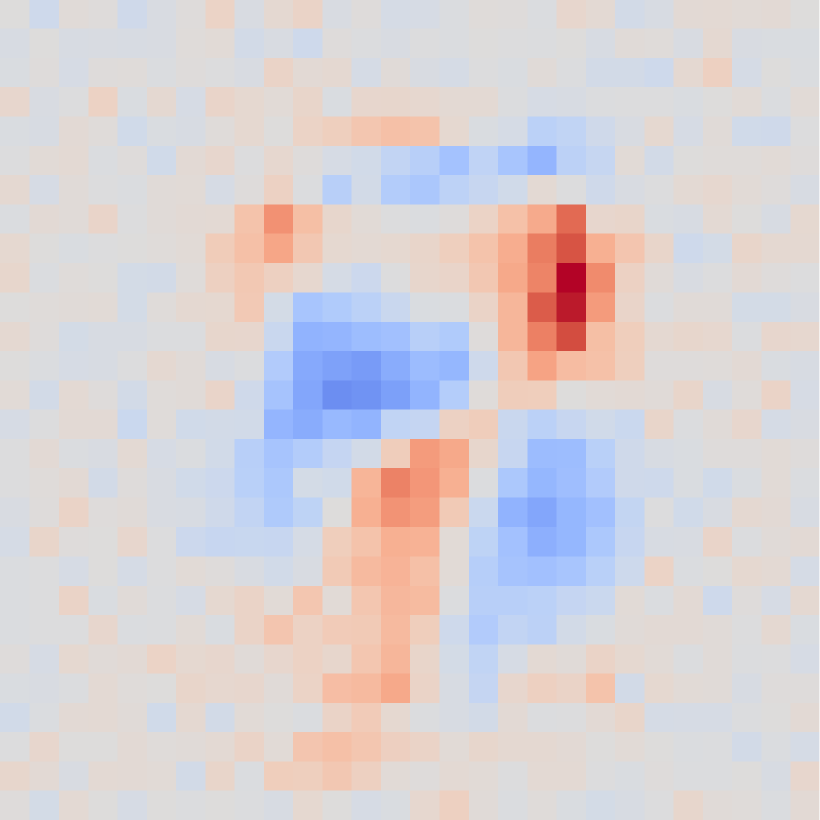} &
        \includegraphics[width=0.115\textwidth]{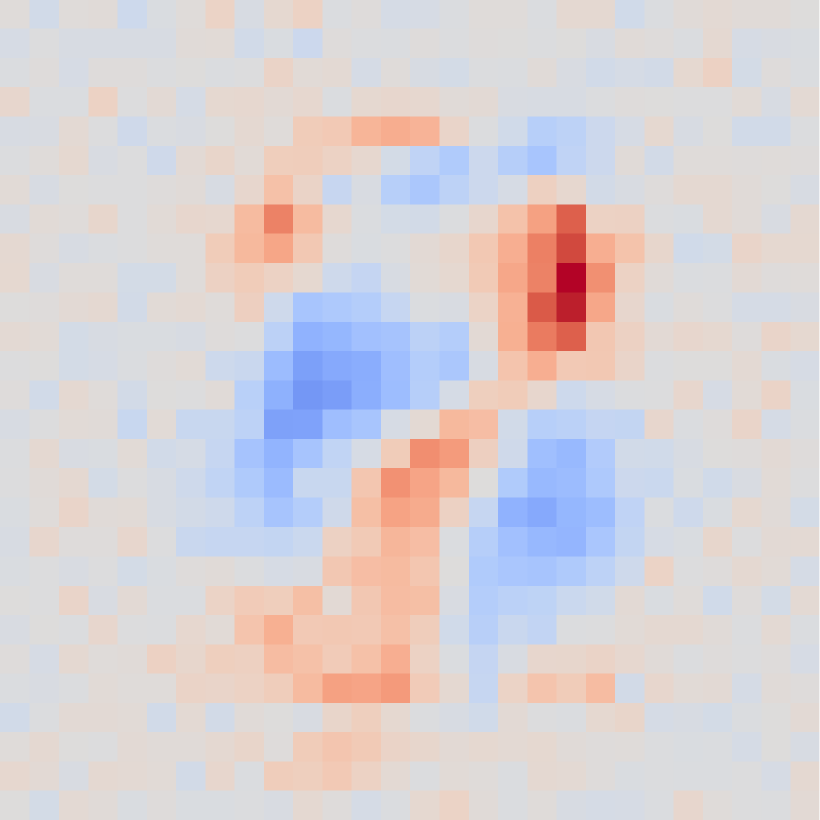} &
        \includegraphics[width=0.115\textwidth]{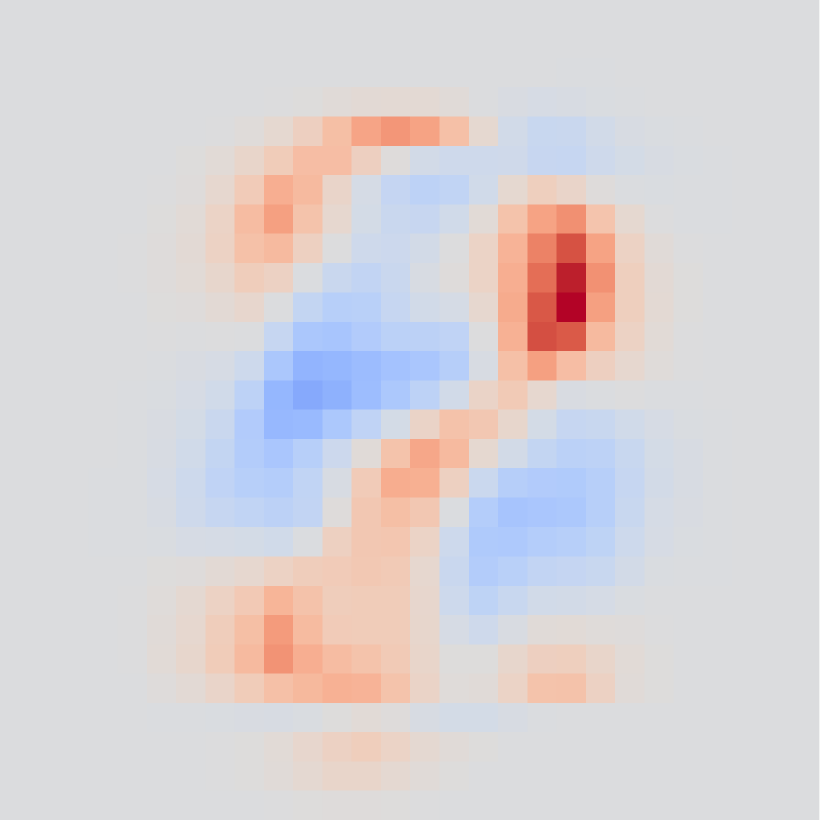} &
        \includegraphics[width=0.115\textwidth]{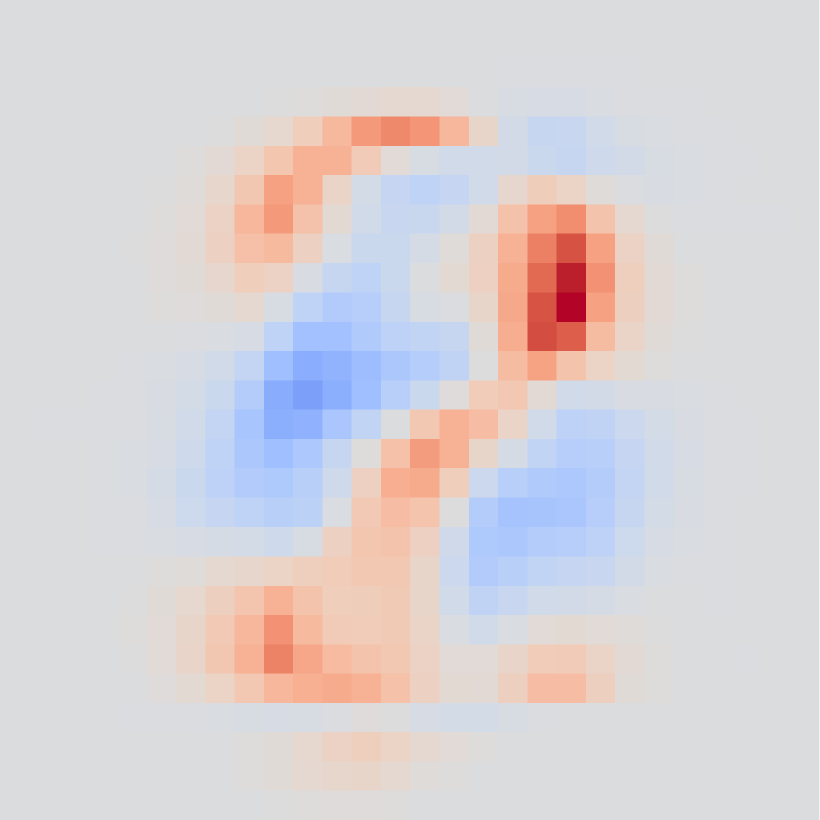} &
        \includegraphics[width=0.115\textwidth]{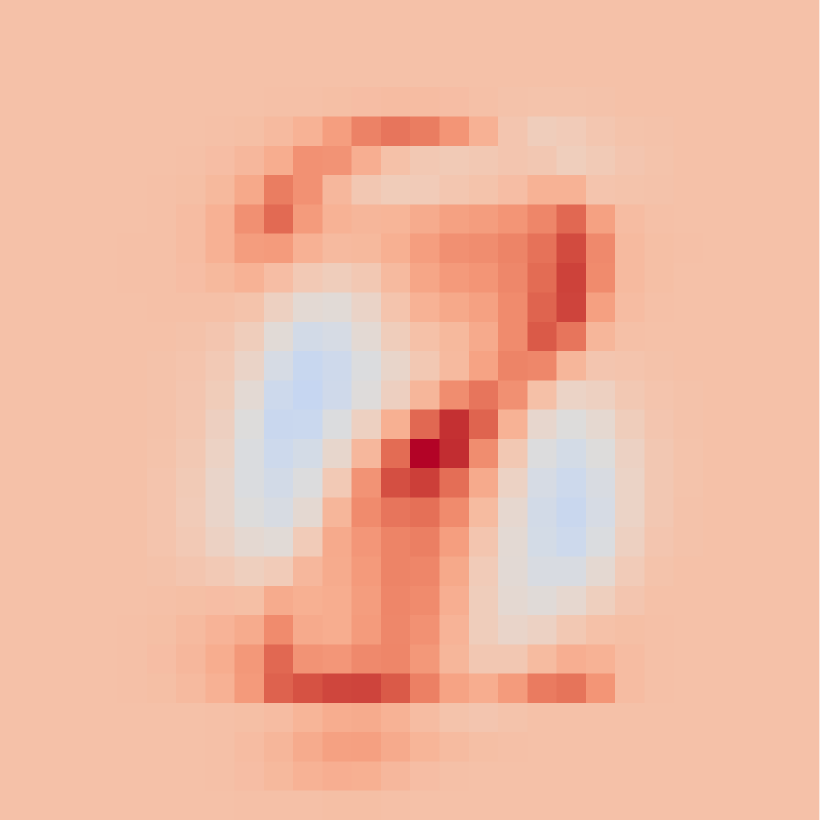} \\[3pt]

        \includegraphics[width=0.115\textwidth]{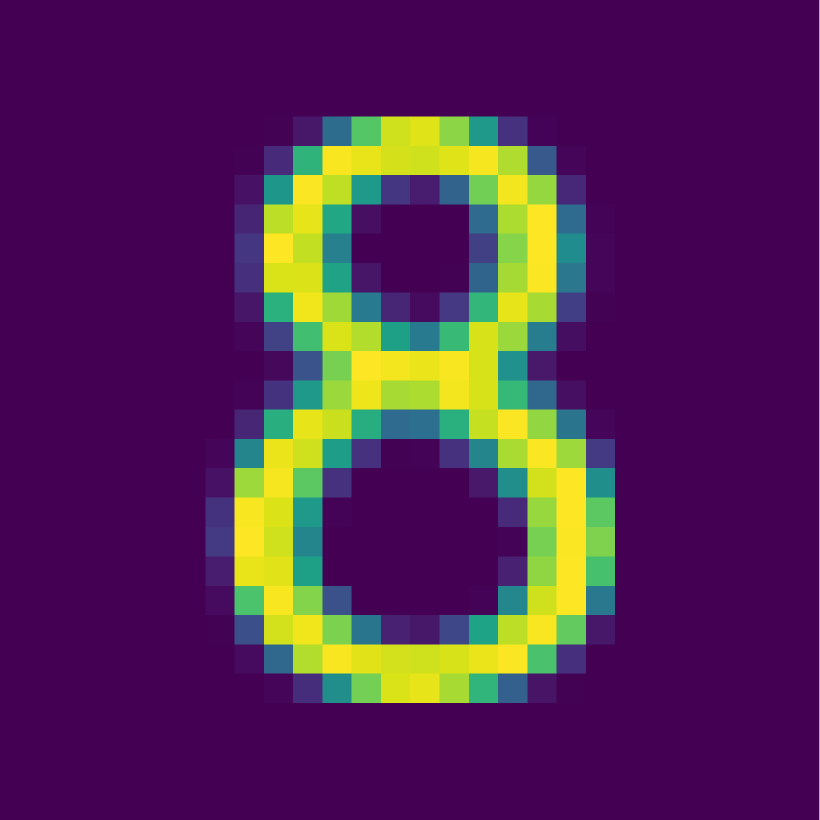} &
        \includegraphics[width=0.115\textwidth]{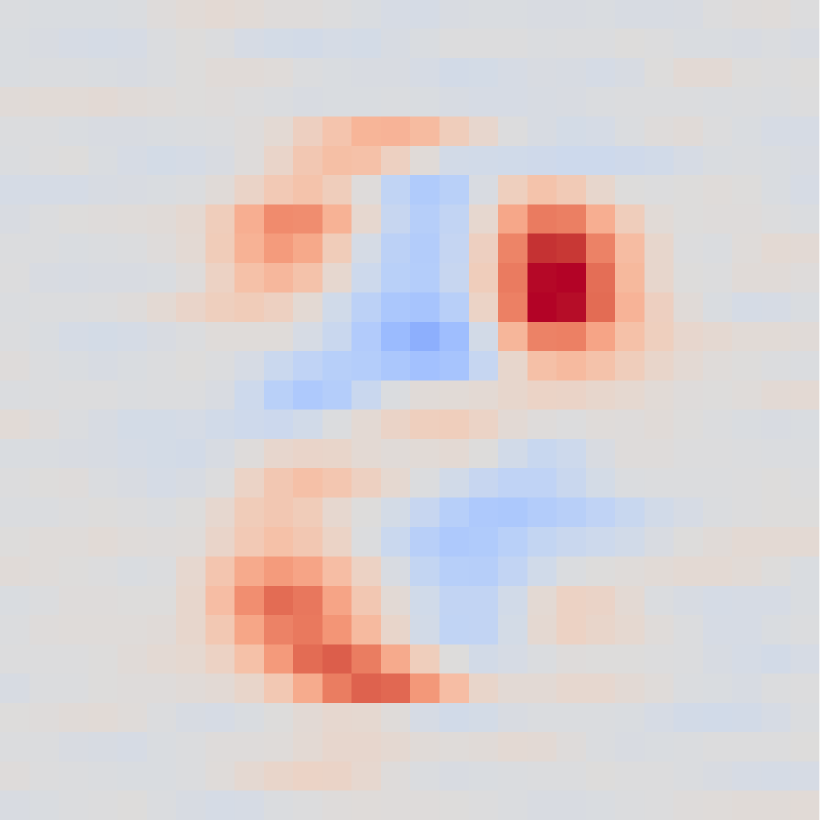} &
        \includegraphics[width=0.115\textwidth]{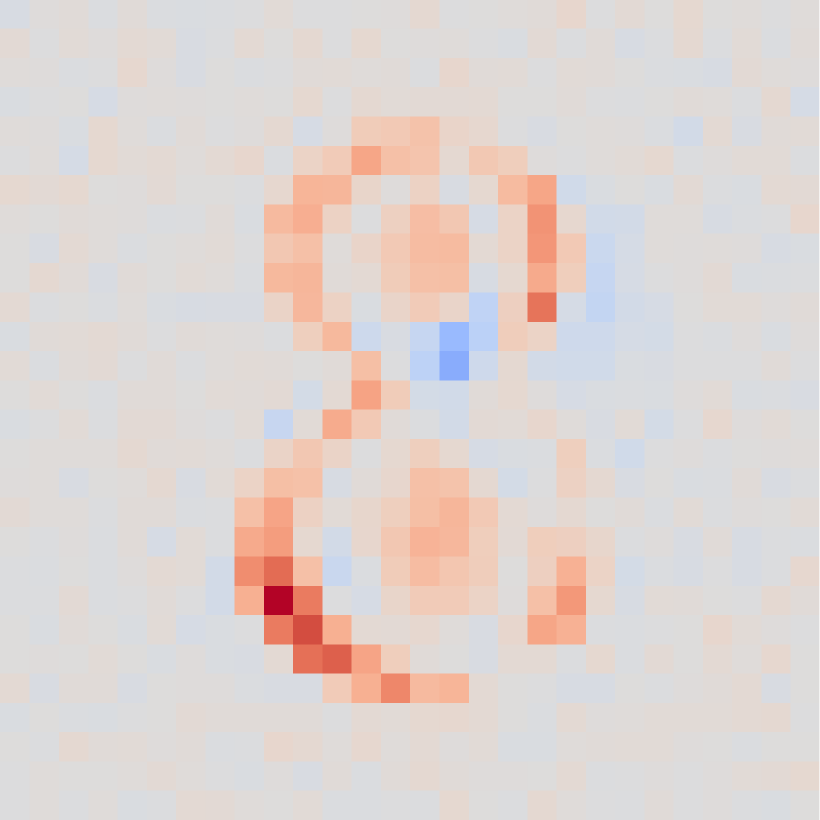} &
        \includegraphics[width=0.115\textwidth]{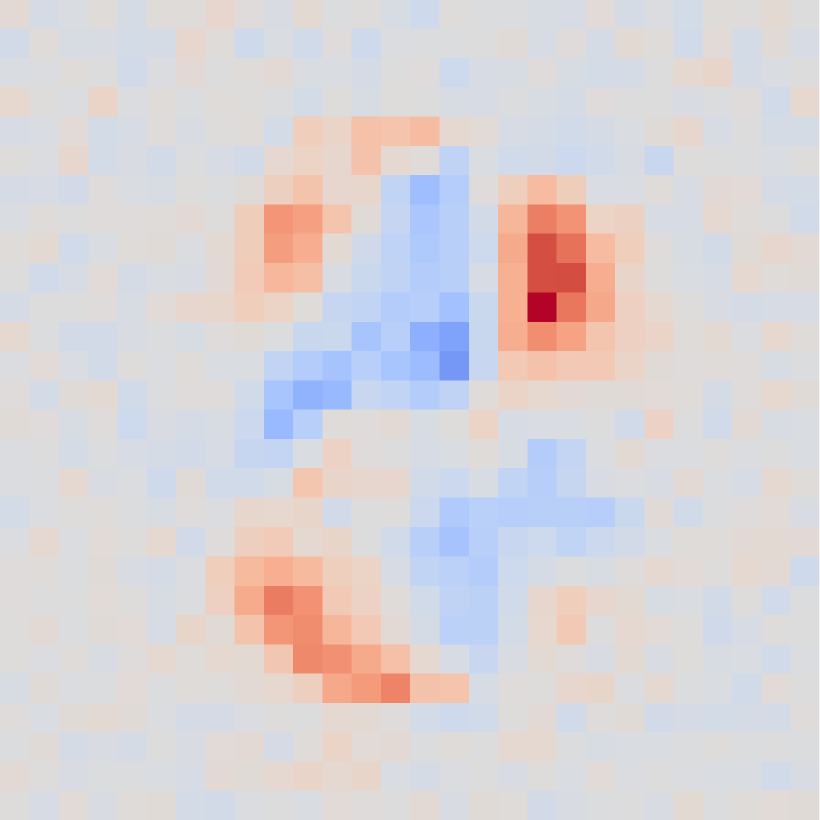} &
        \includegraphics[width=0.115\textwidth]{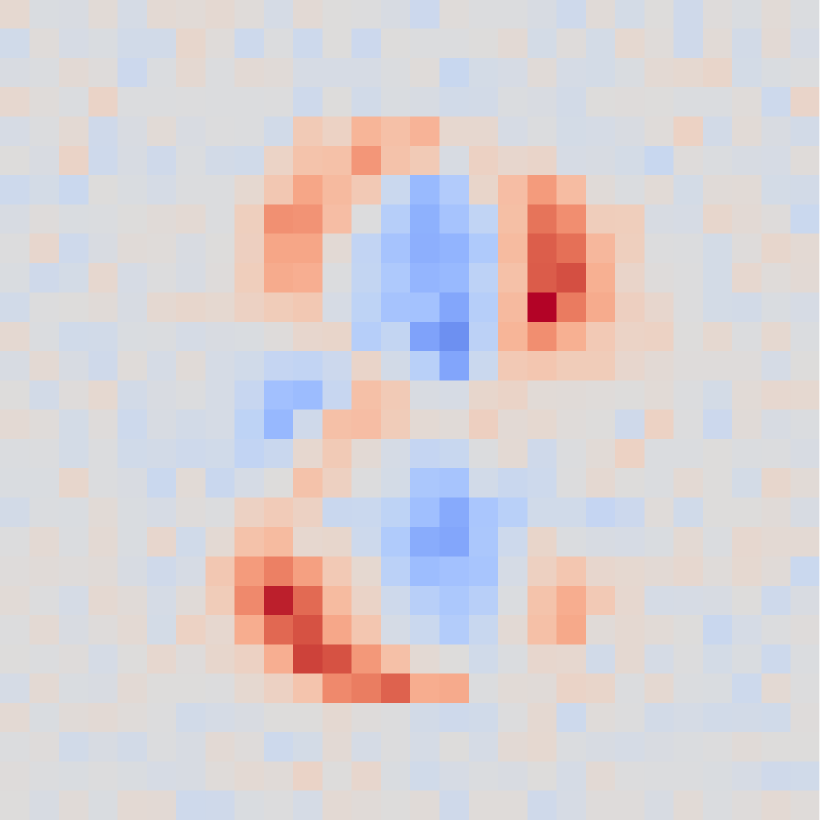} &
        \includegraphics[width=0.115\textwidth]{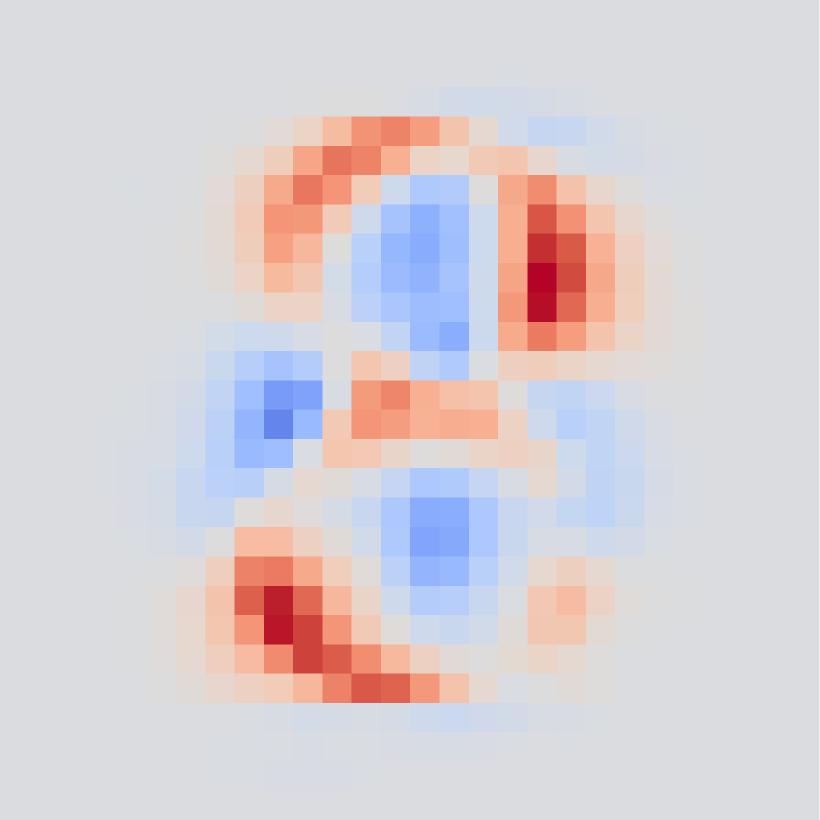} &
        \includegraphics[width=0.115\textwidth]{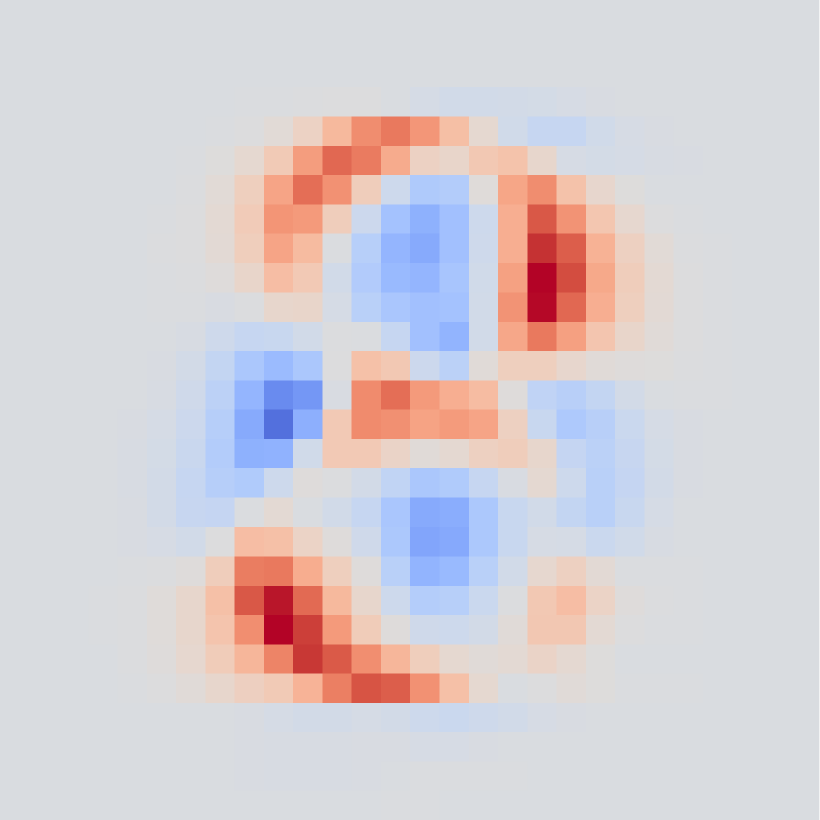} &
        \includegraphics[width=0.115\textwidth]{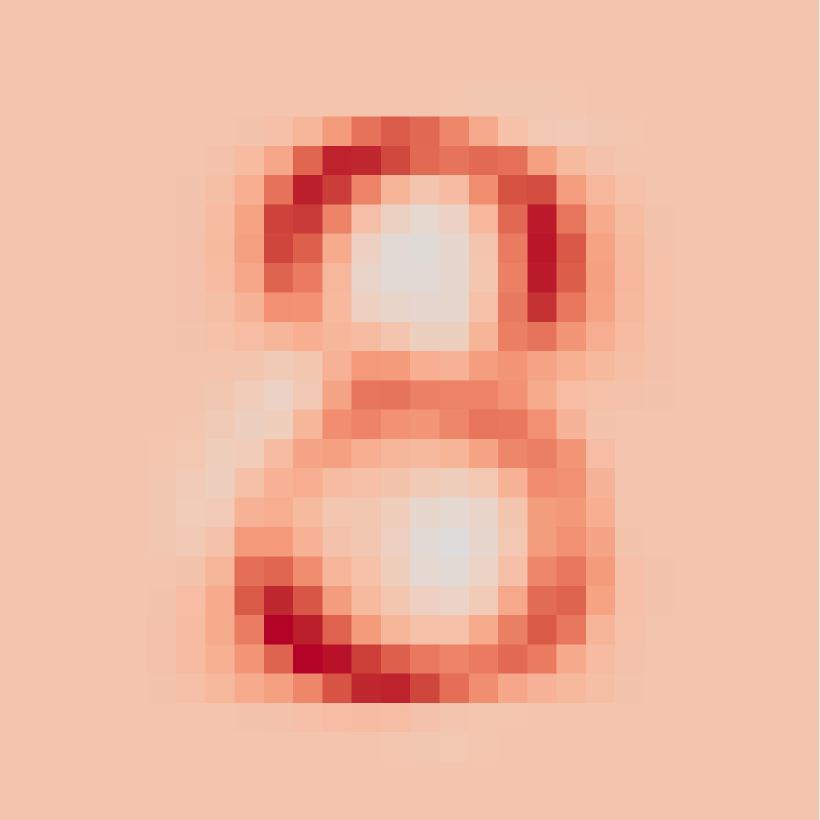} \\[3pt]

        \includegraphics[width=0.115\textwidth]{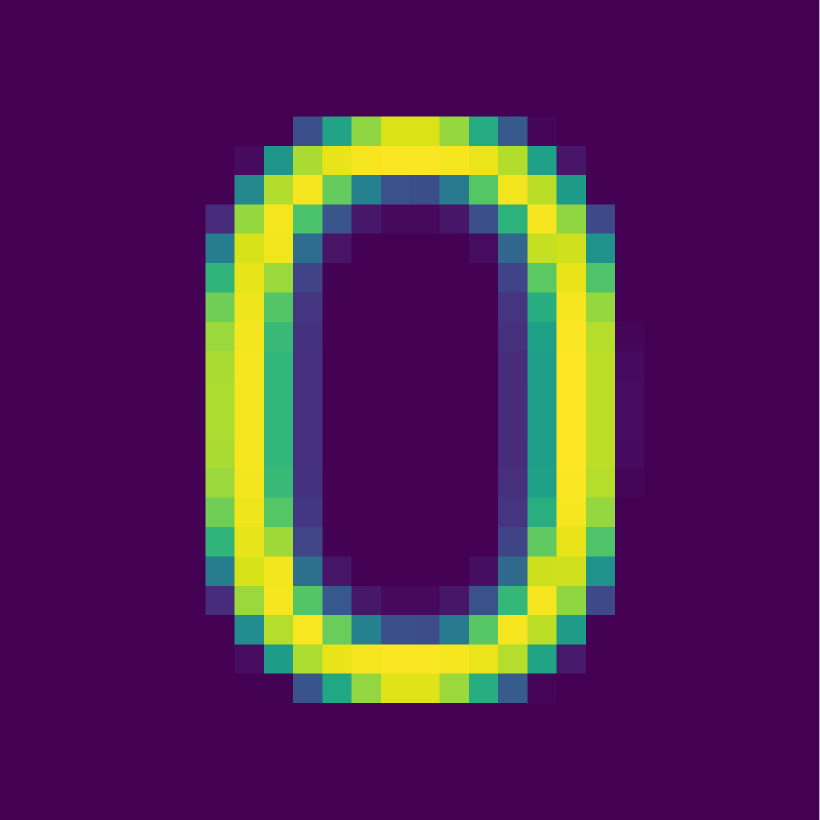} &
        \includegraphics[width=0.115\textwidth]{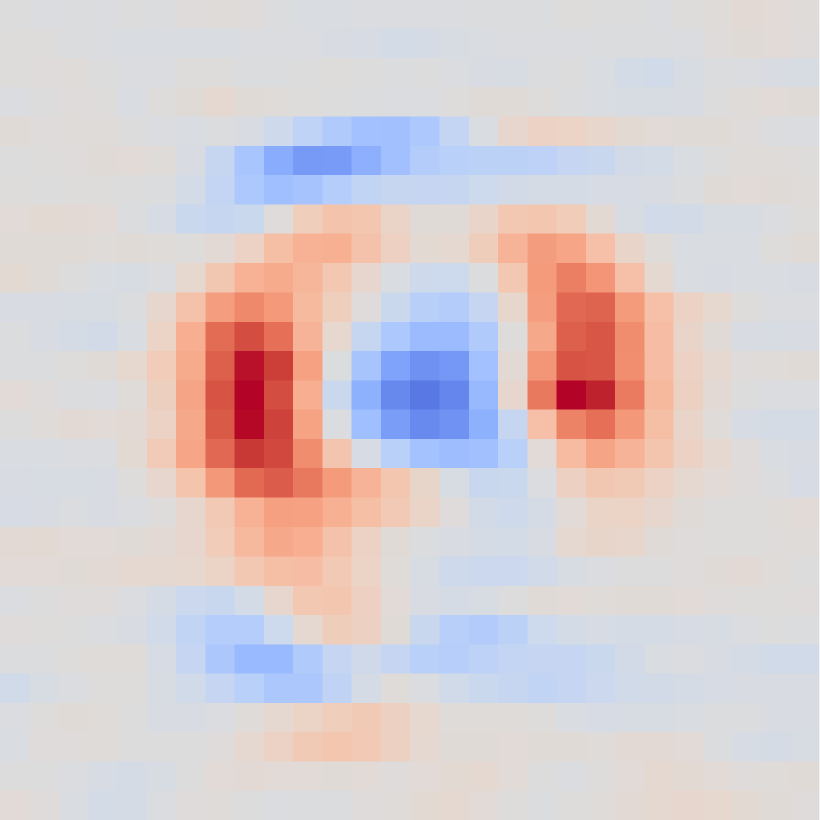} &
        \includegraphics[width=0.115\textwidth]{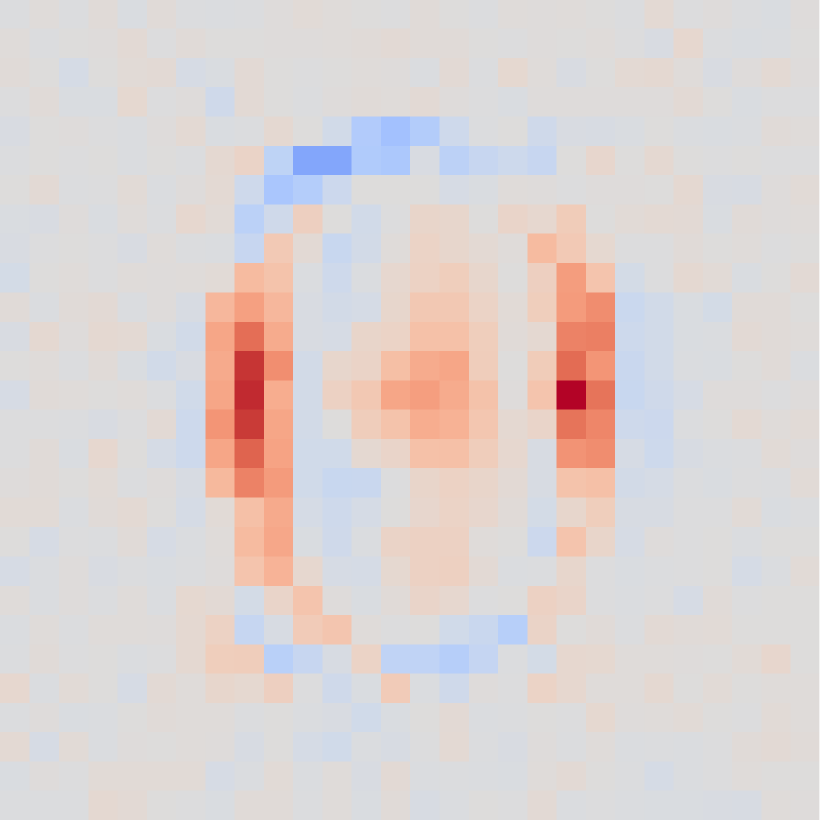} &
        \includegraphics[width=0.115\textwidth]{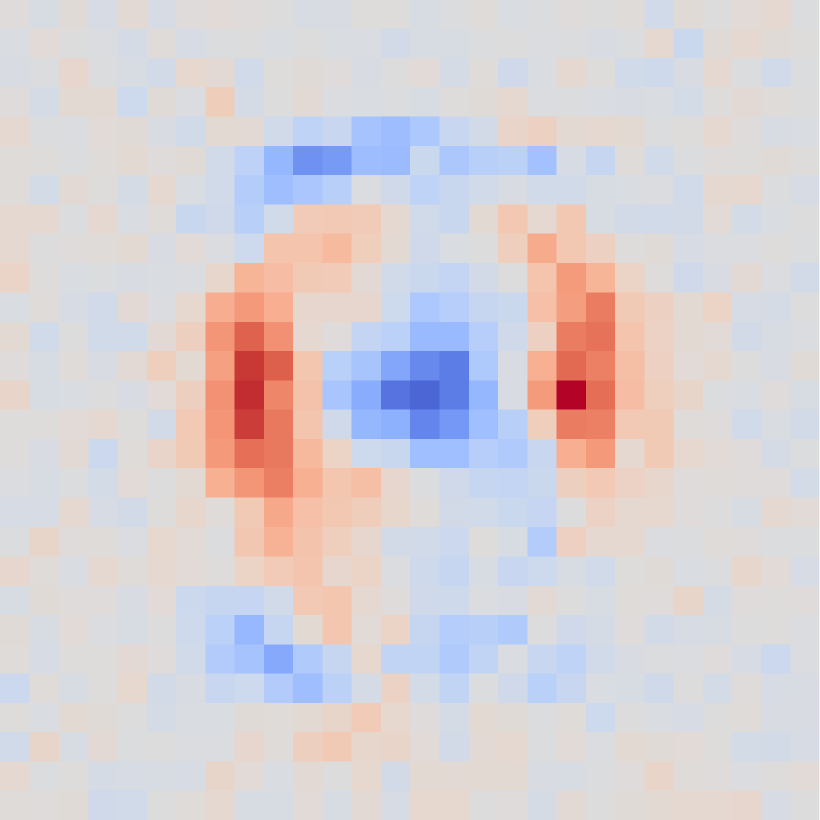} &
        \includegraphics[width=0.115\textwidth]{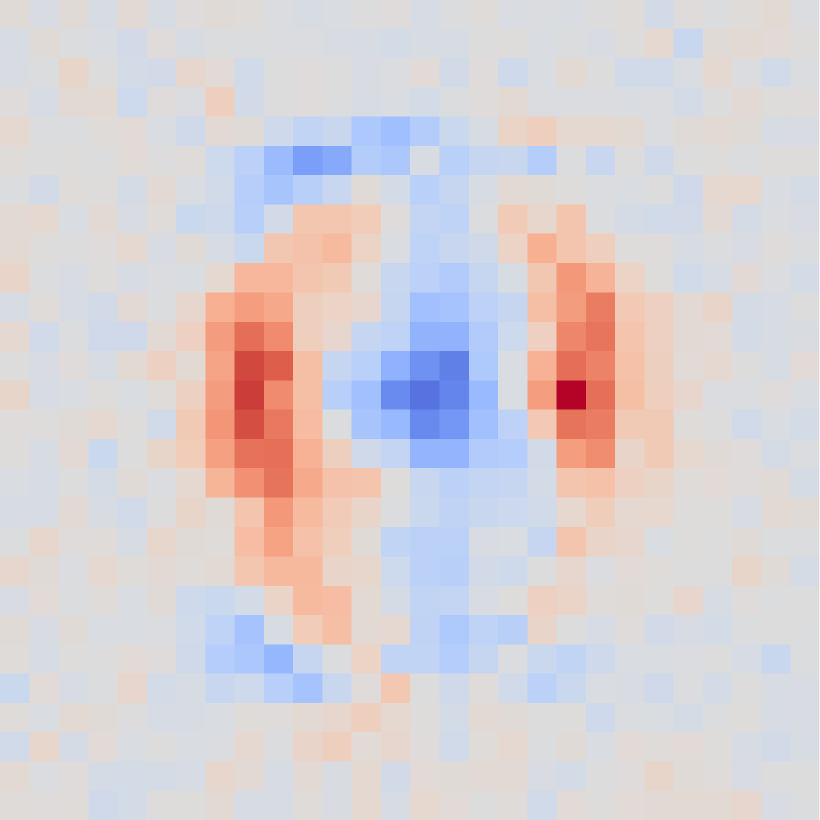} &
        \includegraphics[width=0.115\textwidth]{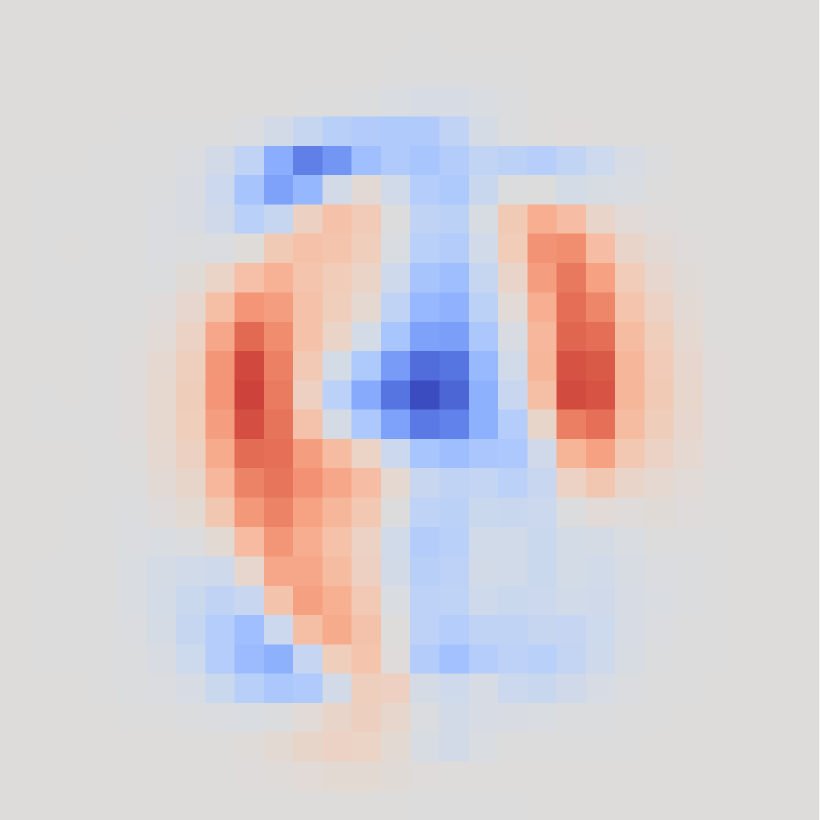} &
        \includegraphics[width=0.115\textwidth]{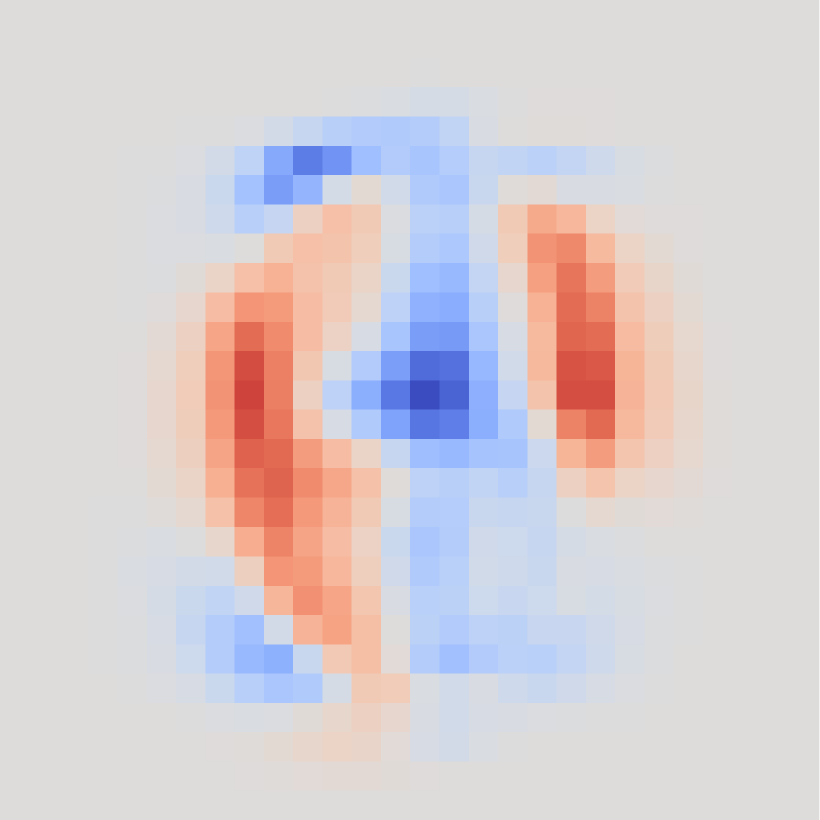} &
        \includegraphics[width=0.115\textwidth]{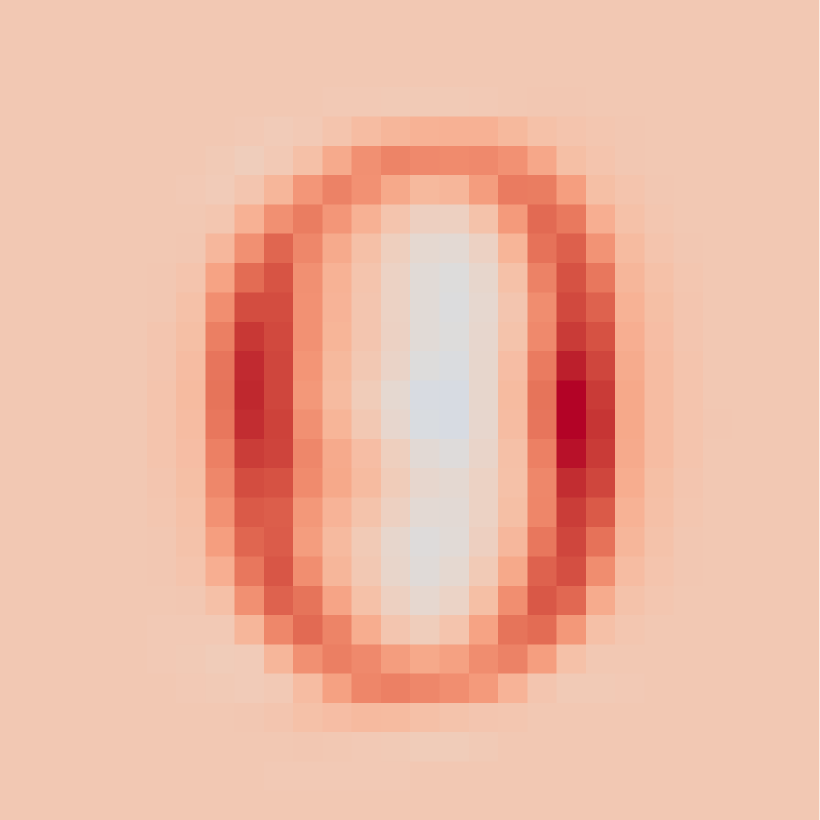} \\[3pt]

        \includegraphics[width=0.115\textwidth]{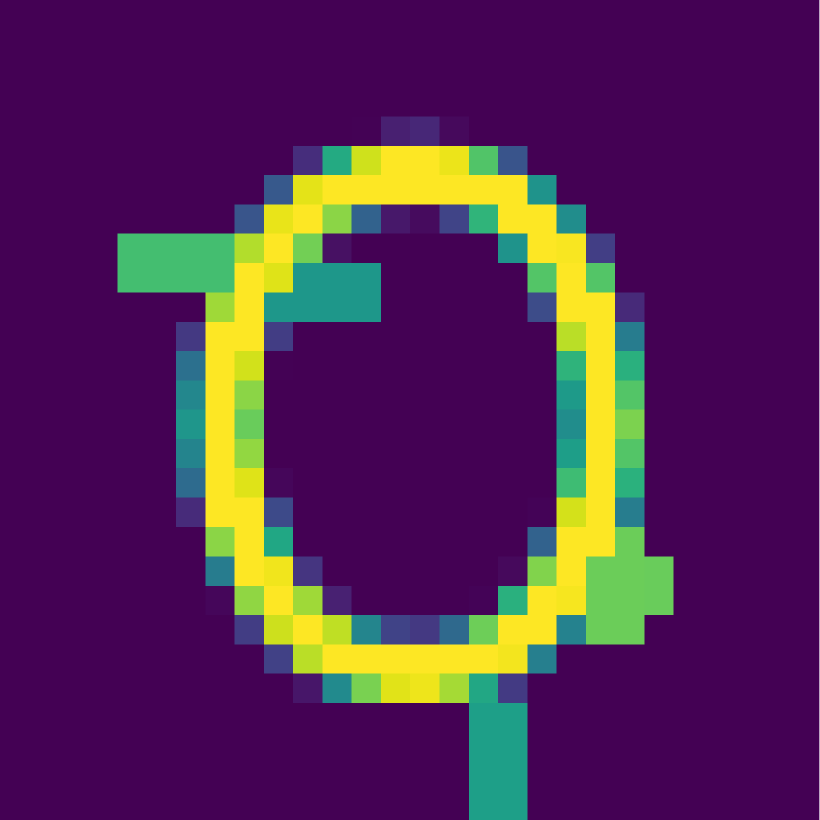} &
        \includegraphics[width=0.115\textwidth]{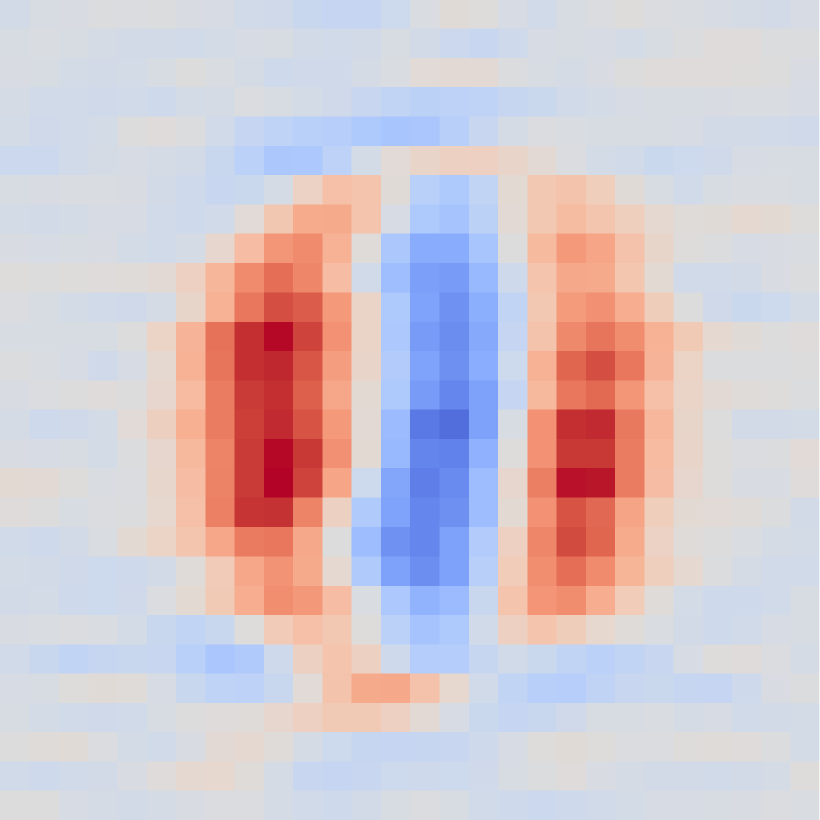} &
        \includegraphics[width=0.115\textwidth]{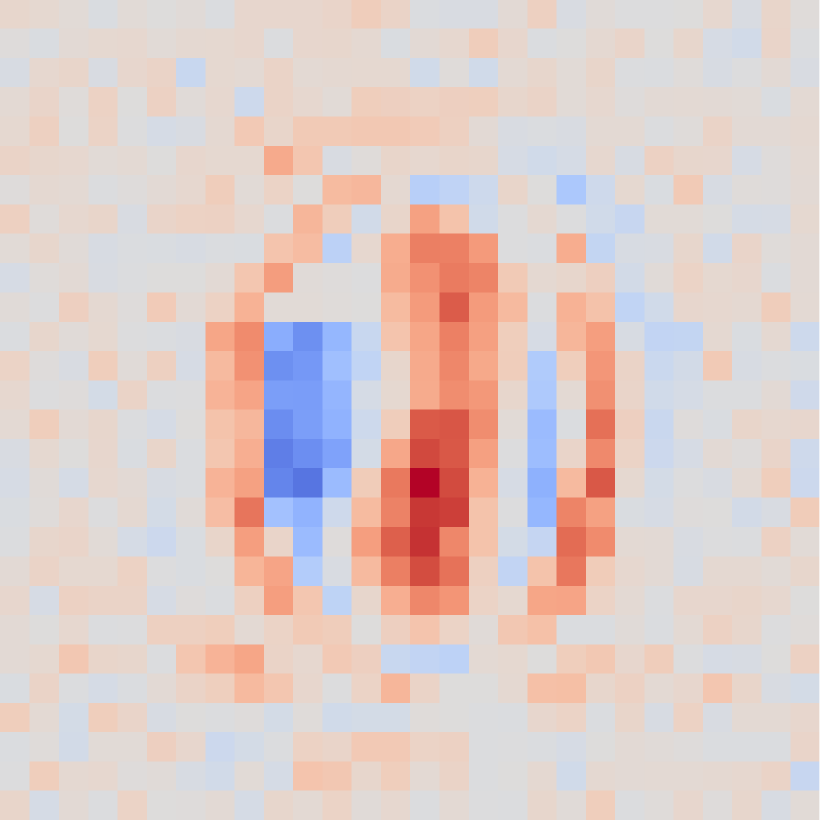} &
        \includegraphics[width=0.115\textwidth]{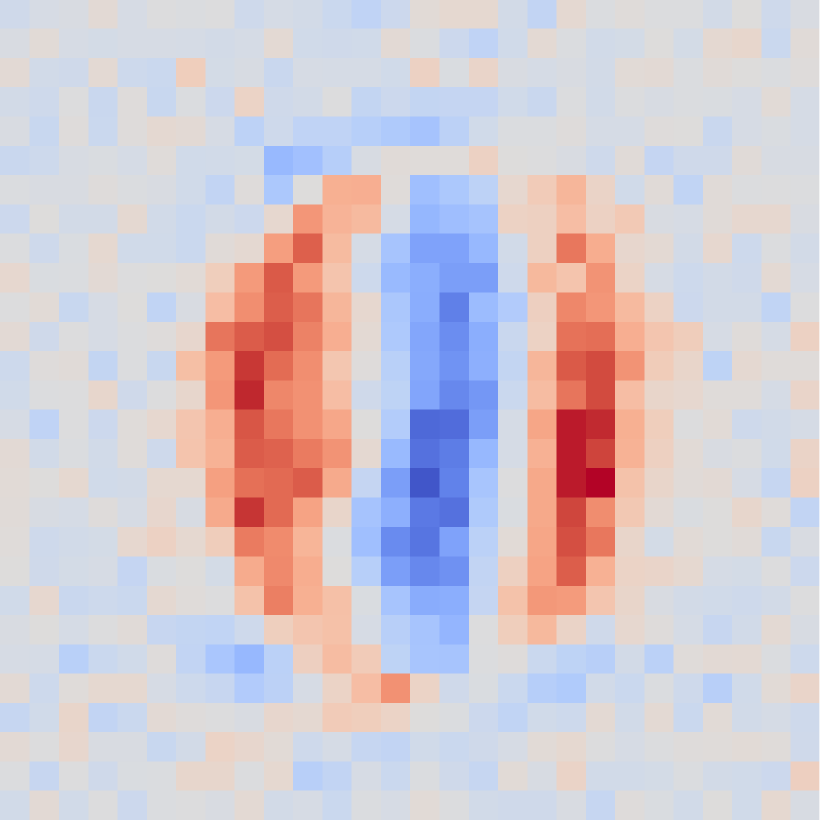} &
        \includegraphics[width=0.115\textwidth]{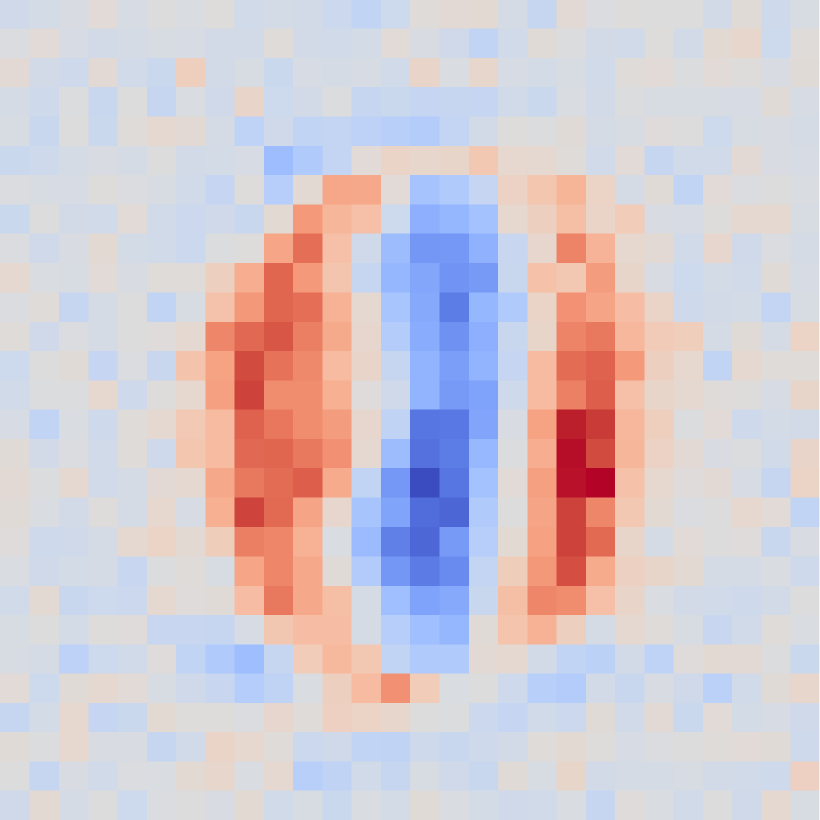} &
        \includegraphics[width=0.115\textwidth]{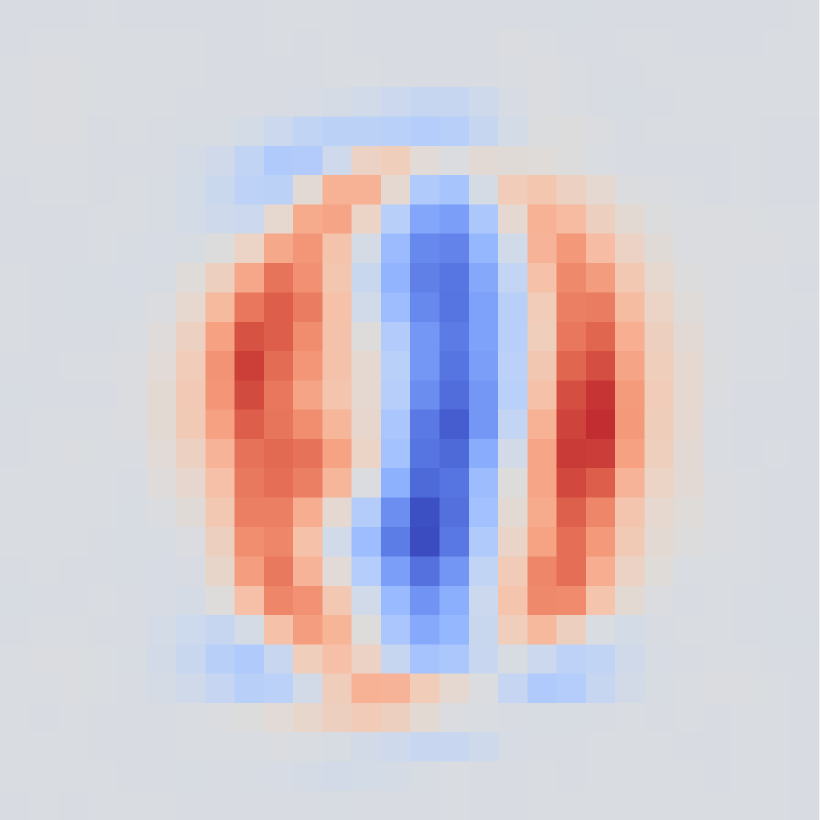} &
        \includegraphics[width=0.115\textwidth]{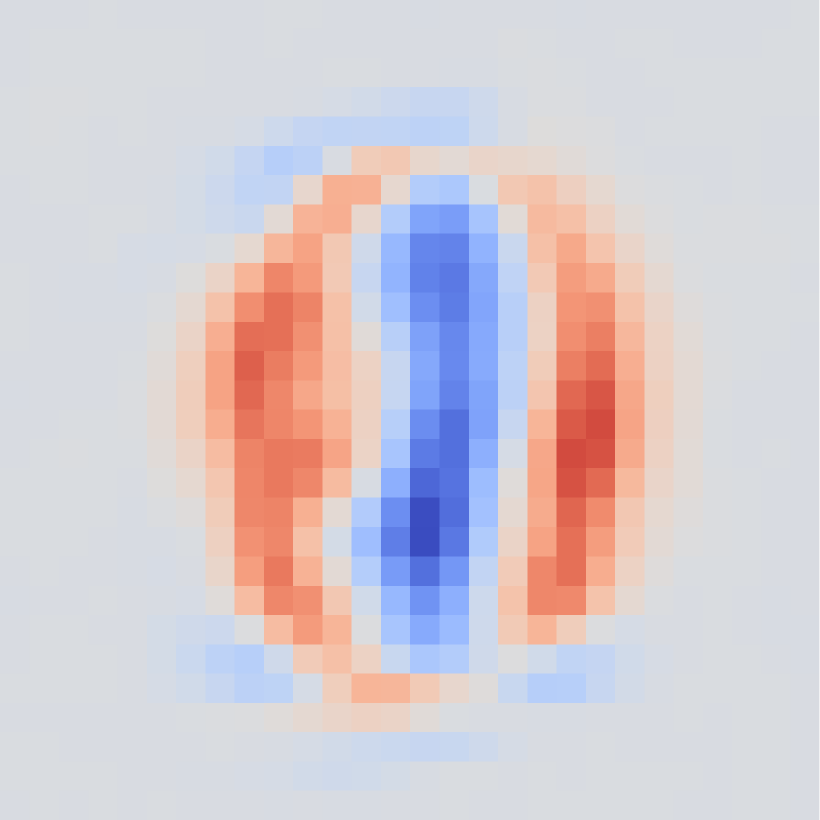} &
        \includegraphics[width=0.115\textwidth]{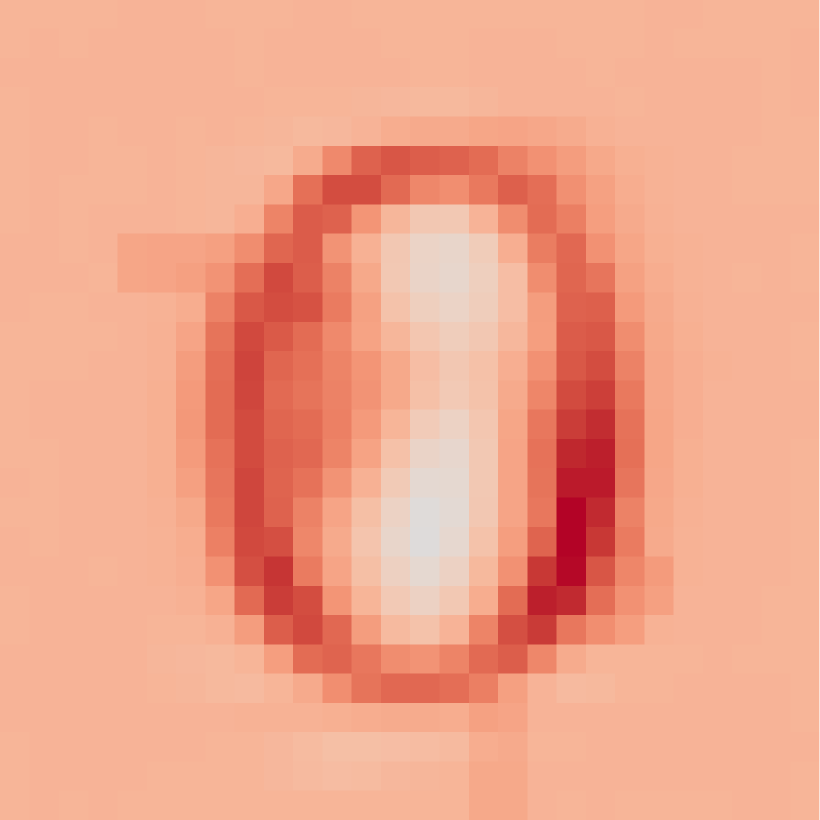} \\[3pt]

        \includegraphics[width=0.115\textwidth]{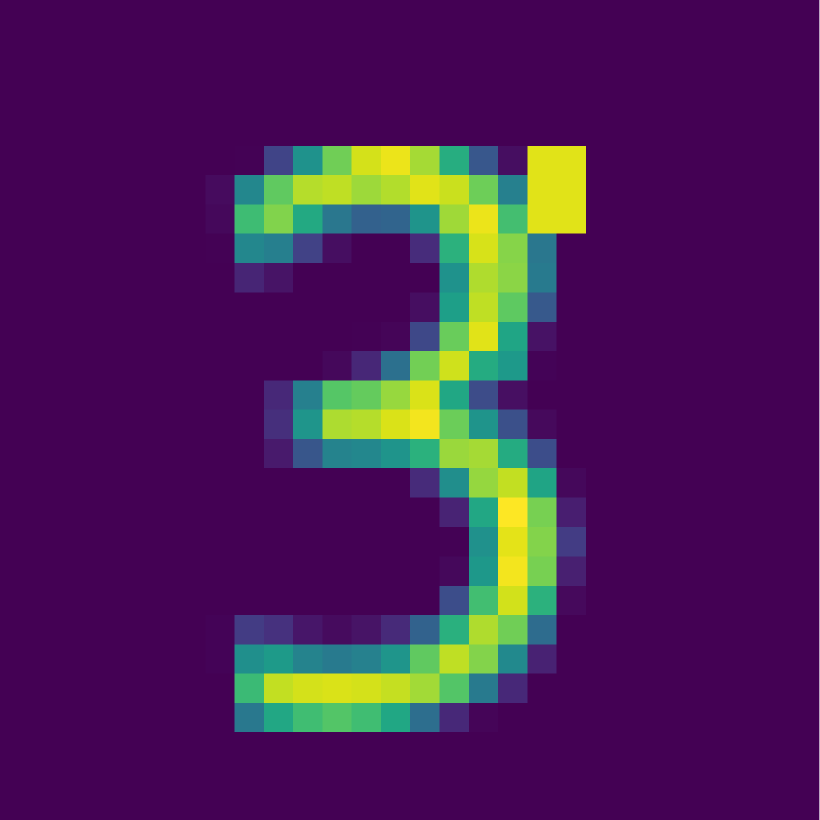} &
        \includegraphics[width=0.115\textwidth]{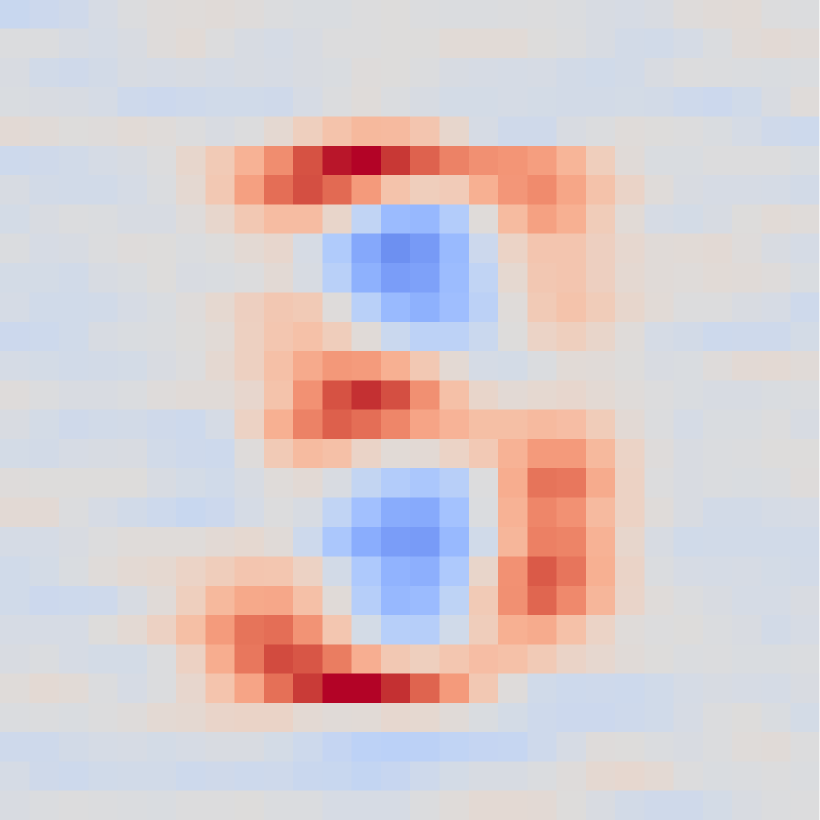} &
        \includegraphics[width=0.115\textwidth]{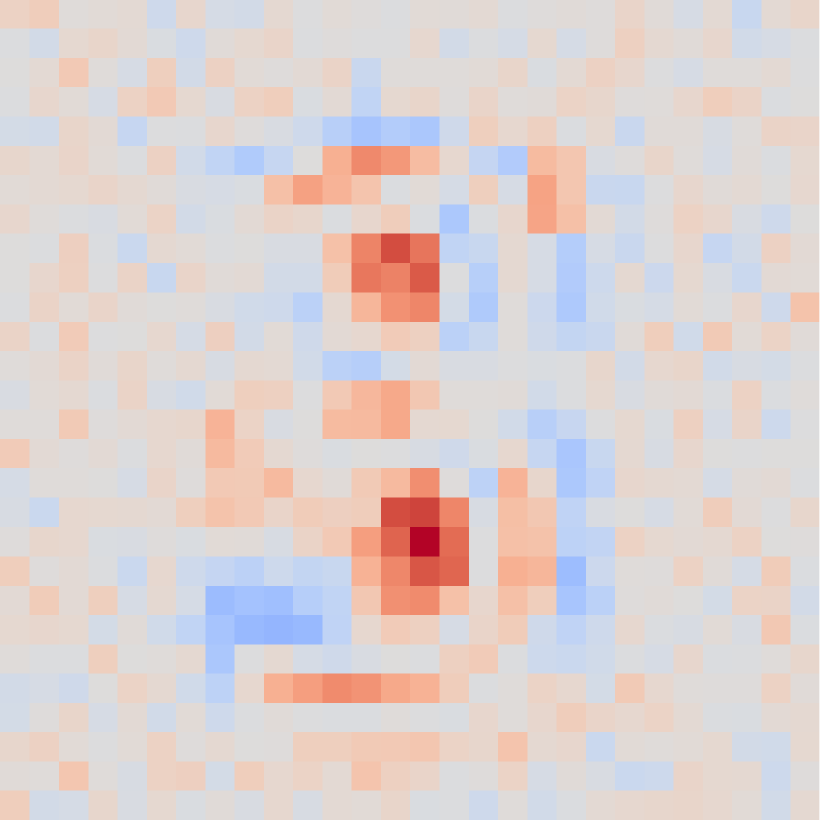} &
        \includegraphics[width=0.115\textwidth]{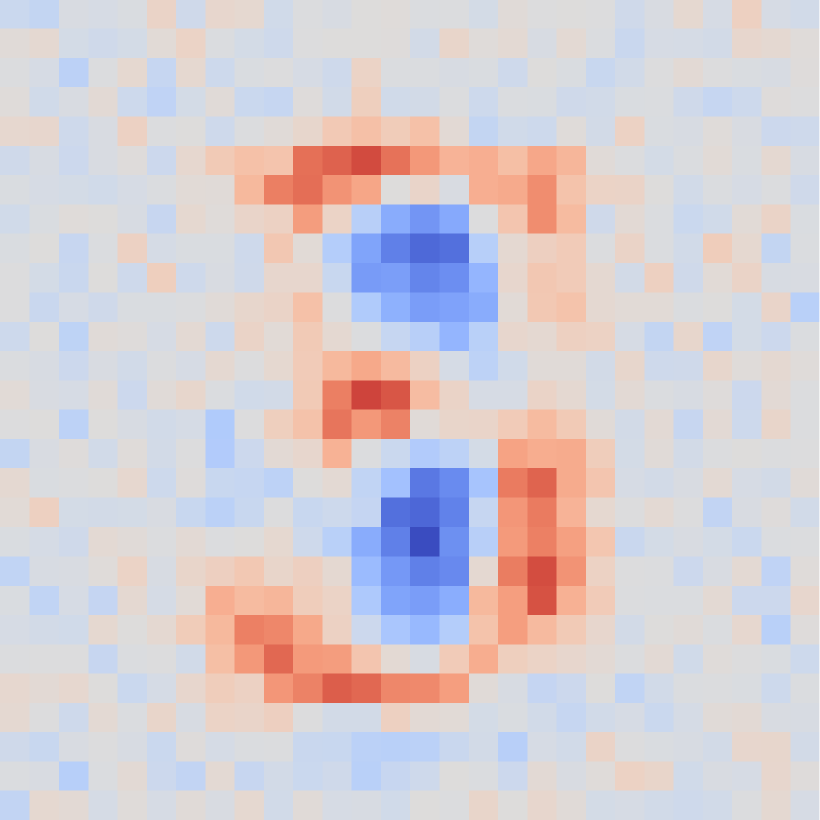} &
        \includegraphics[width=0.115\textwidth]{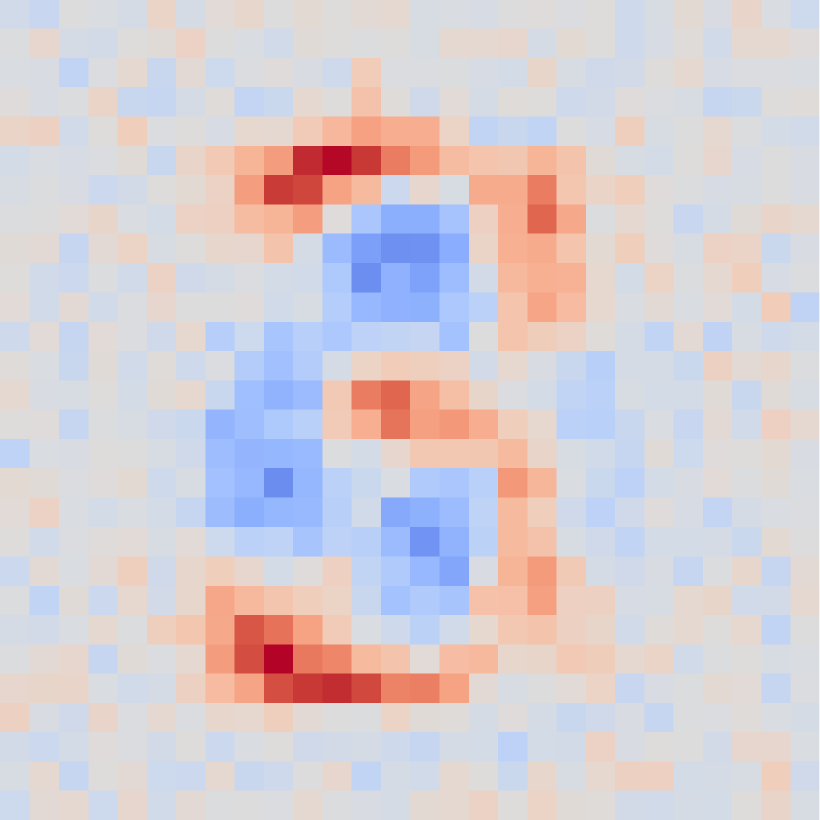} &
        \includegraphics[width=0.115\textwidth]{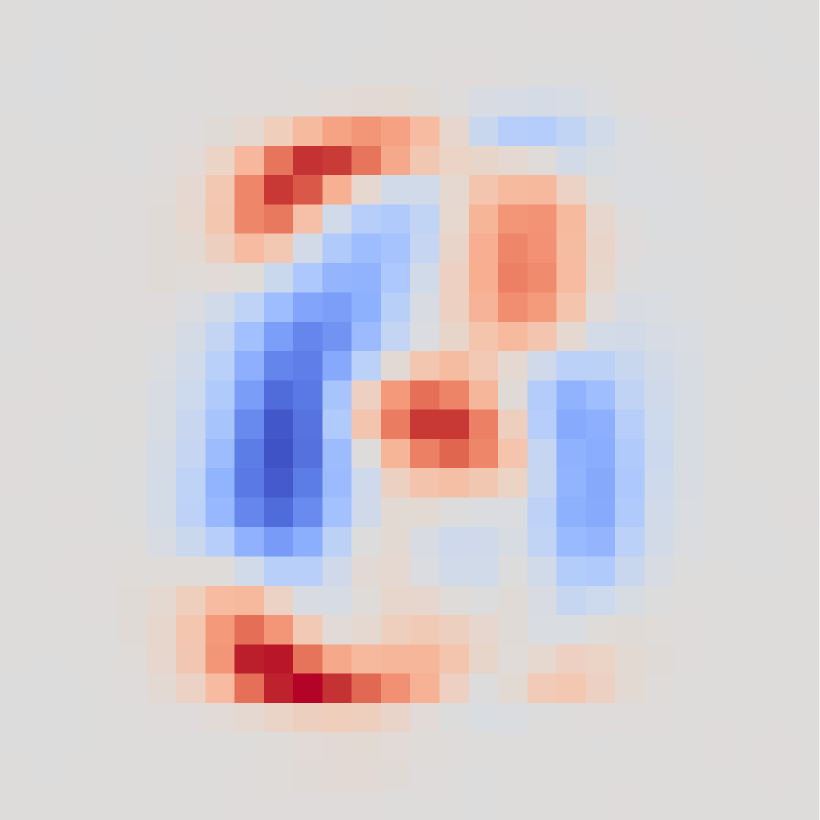} &
        \includegraphics[width=0.115\textwidth]{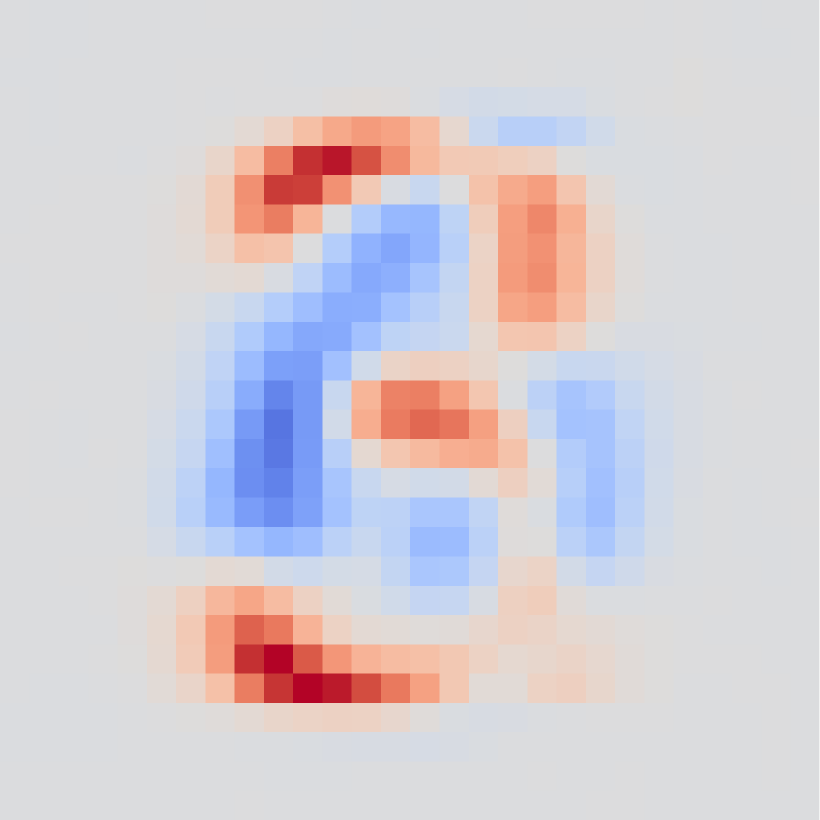} &
        \includegraphics[width=0.115\textwidth]{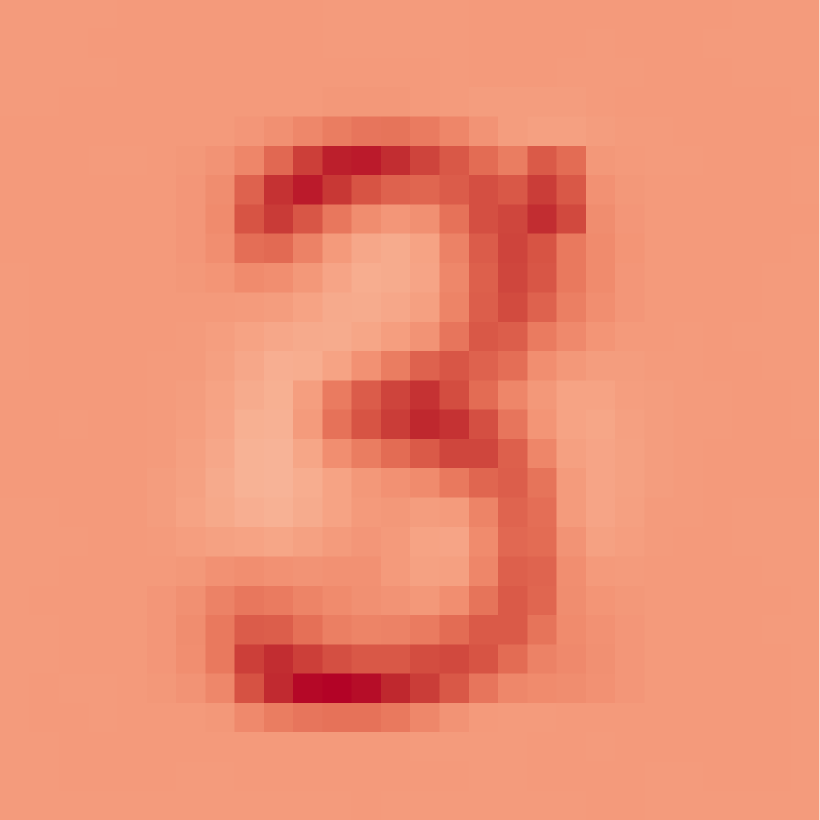} \\[3pt]

        \includegraphics[width=0.115\textwidth]{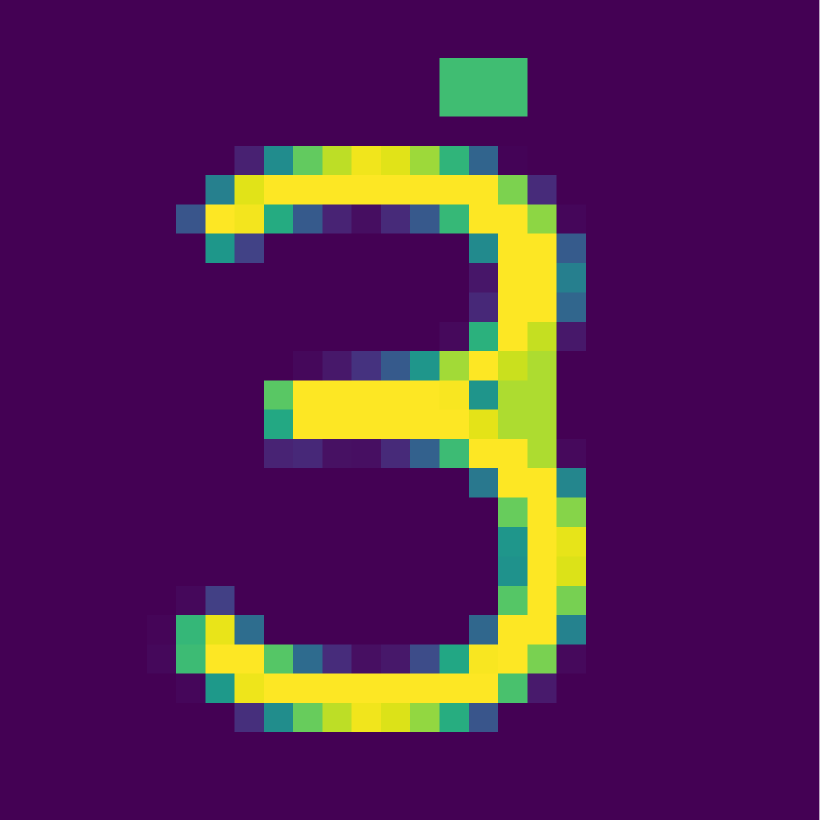} &
        \includegraphics[width=0.115\textwidth]{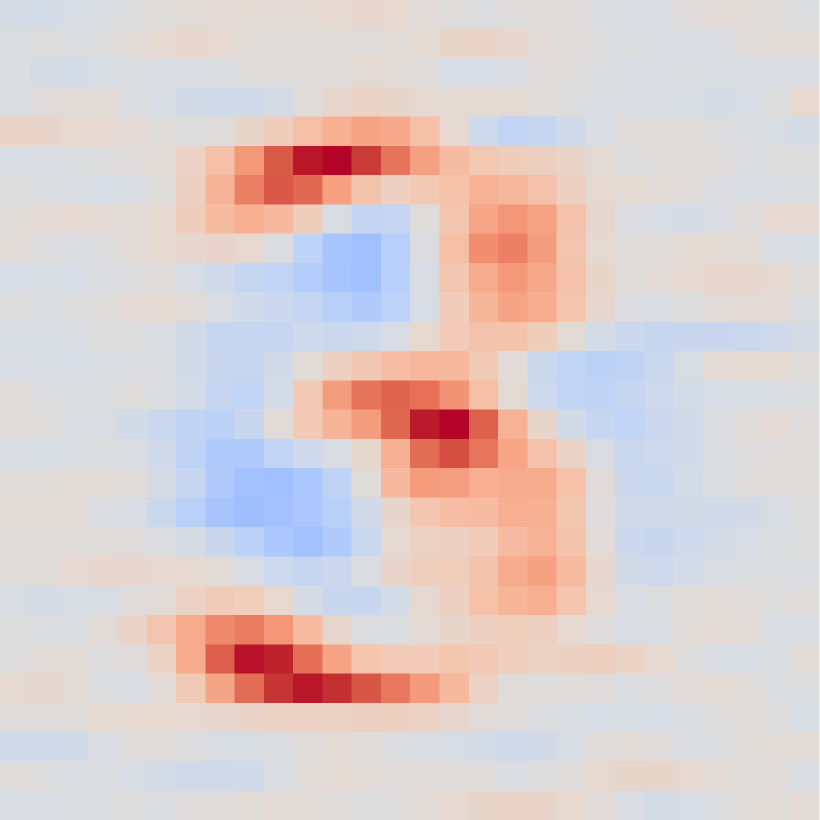} &
        \includegraphics[width=0.115\textwidth]{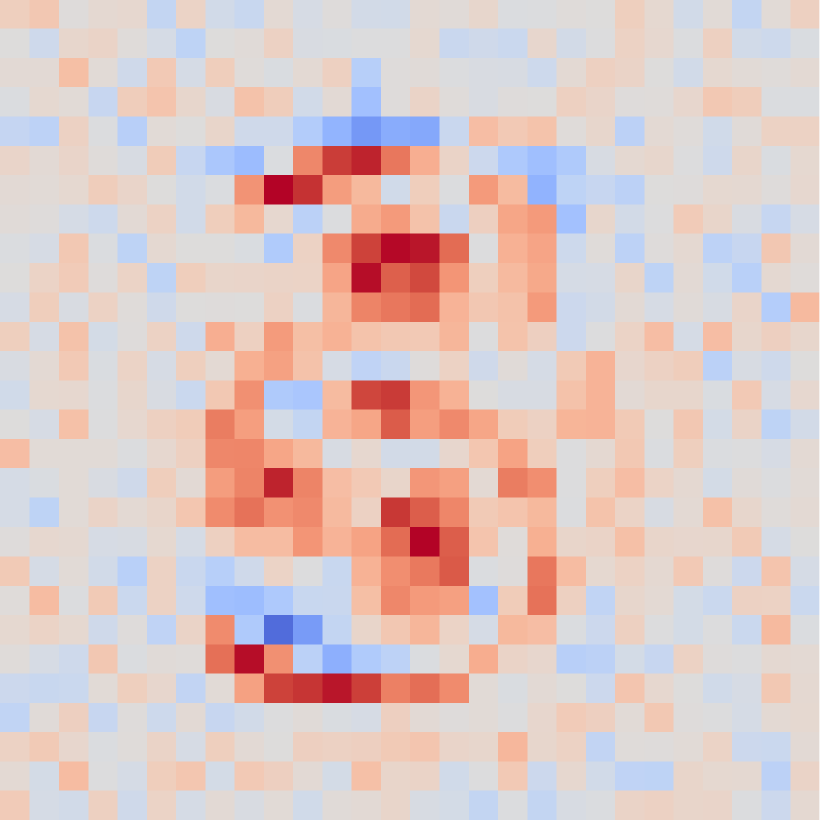} &
        \includegraphics[width=0.115\textwidth]{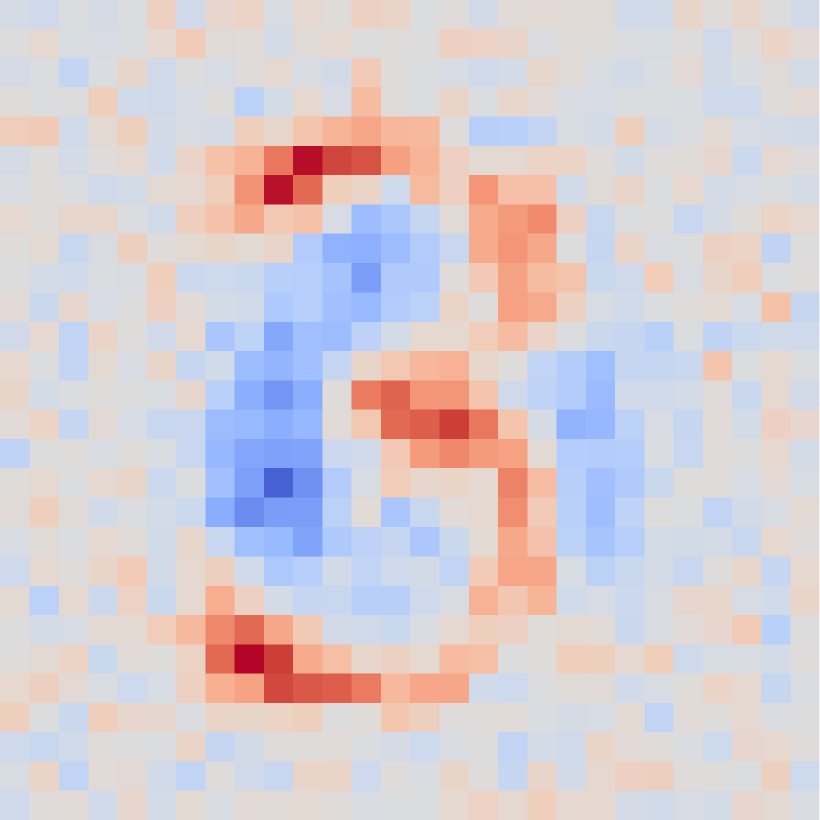} &
        \includegraphics[width=0.115\textwidth]{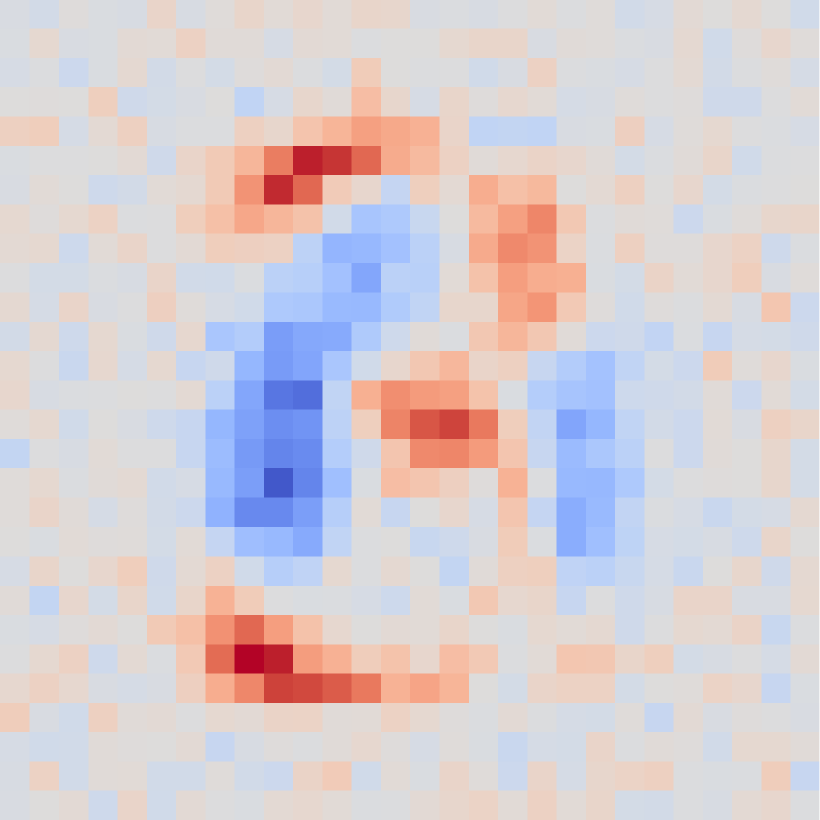} &
        \includegraphics[width=0.115\textwidth]{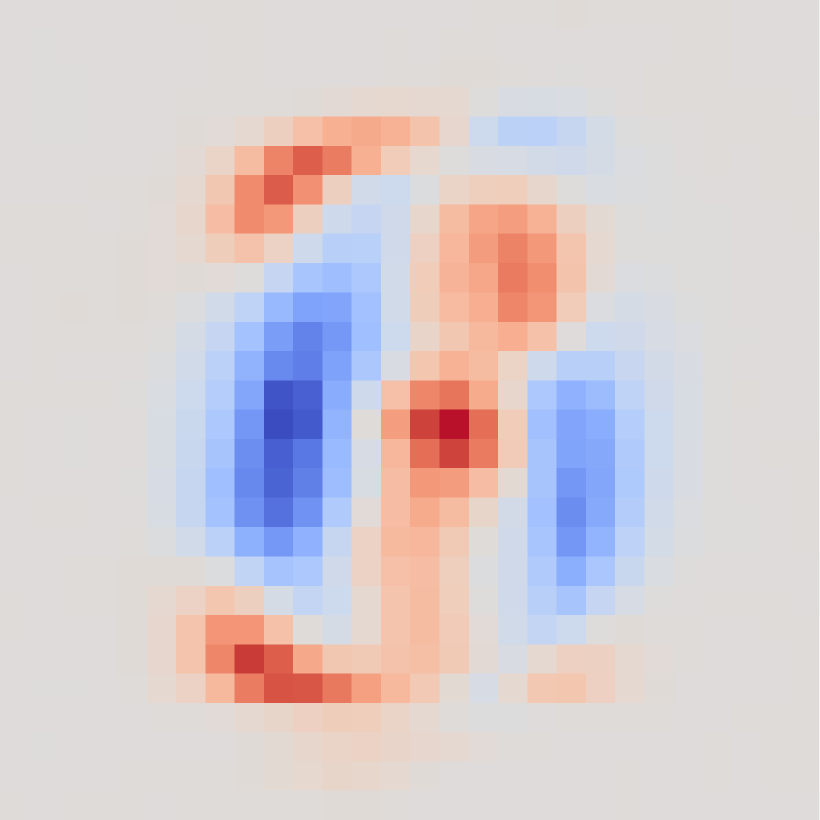} &
        \includegraphics[width=0.115\textwidth]{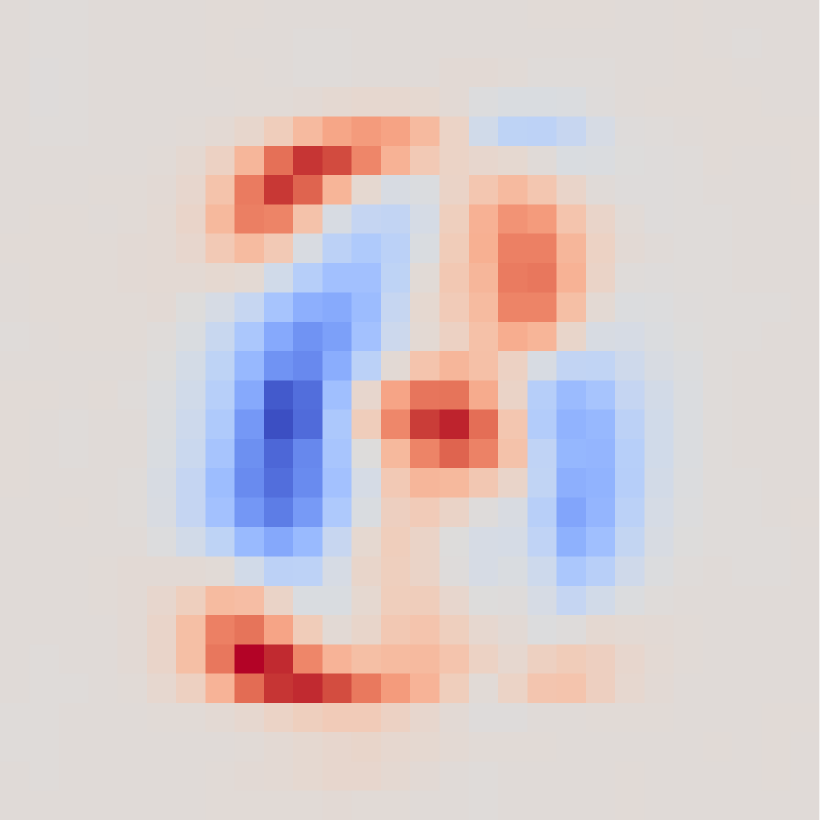} &
        \includegraphics[width=0.115\textwidth]{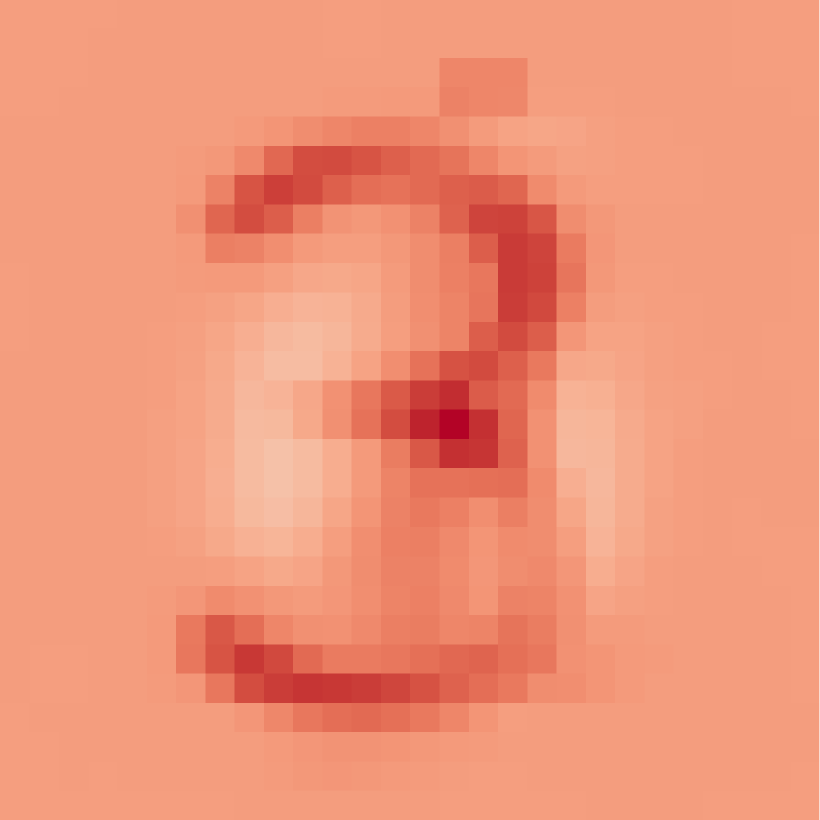} \\[3pt]

        \includegraphics[width=0.115\textwidth]{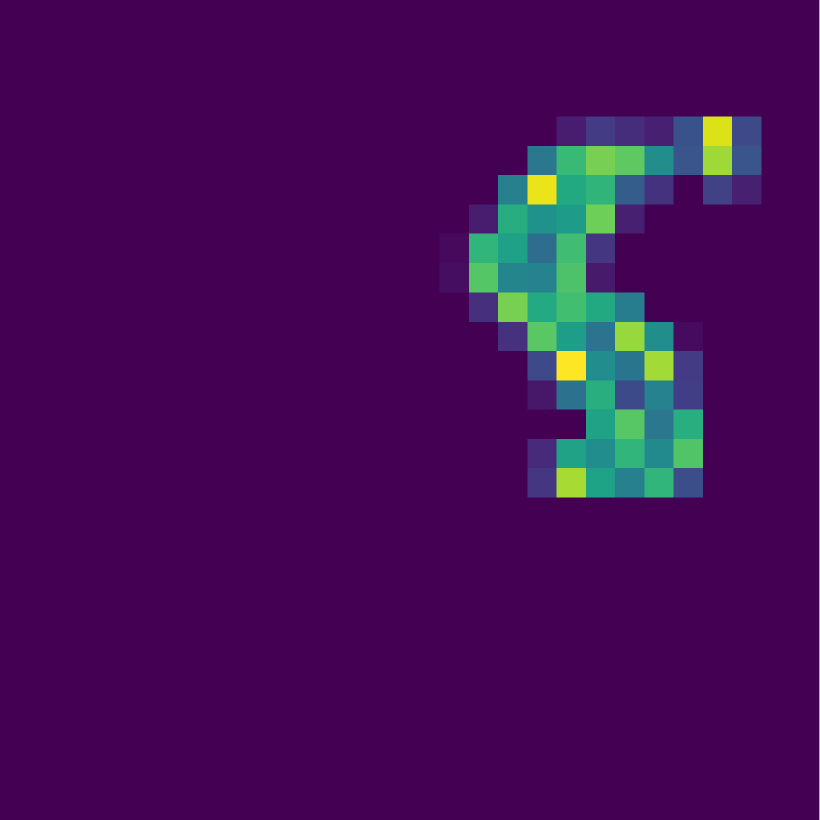} &
        \includegraphics[width=0.115\textwidth]{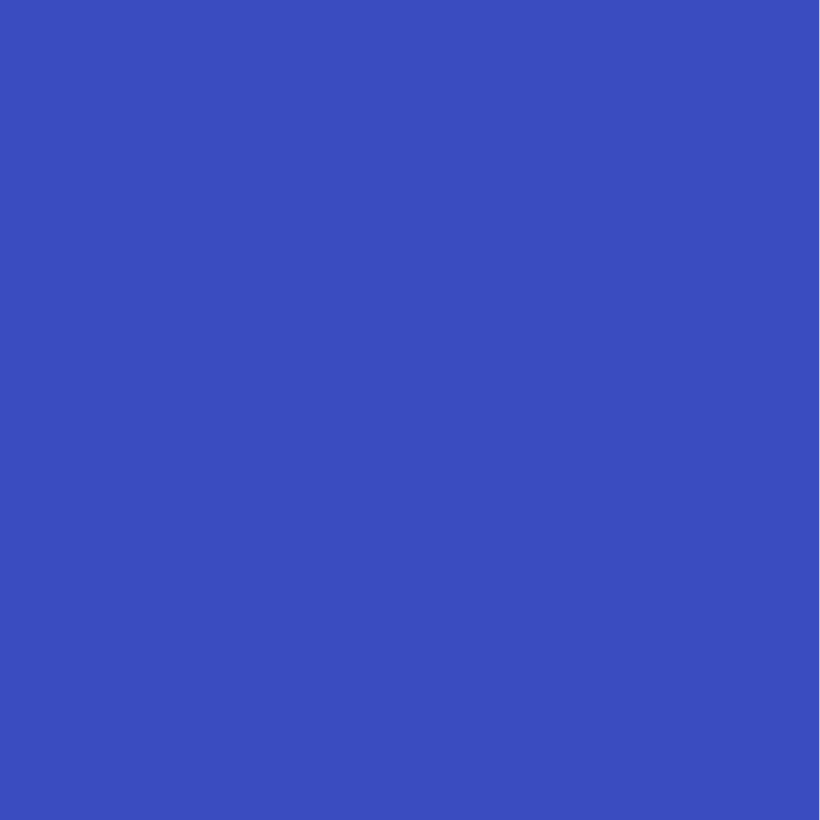} &
        \includegraphics[width=0.115\textwidth]{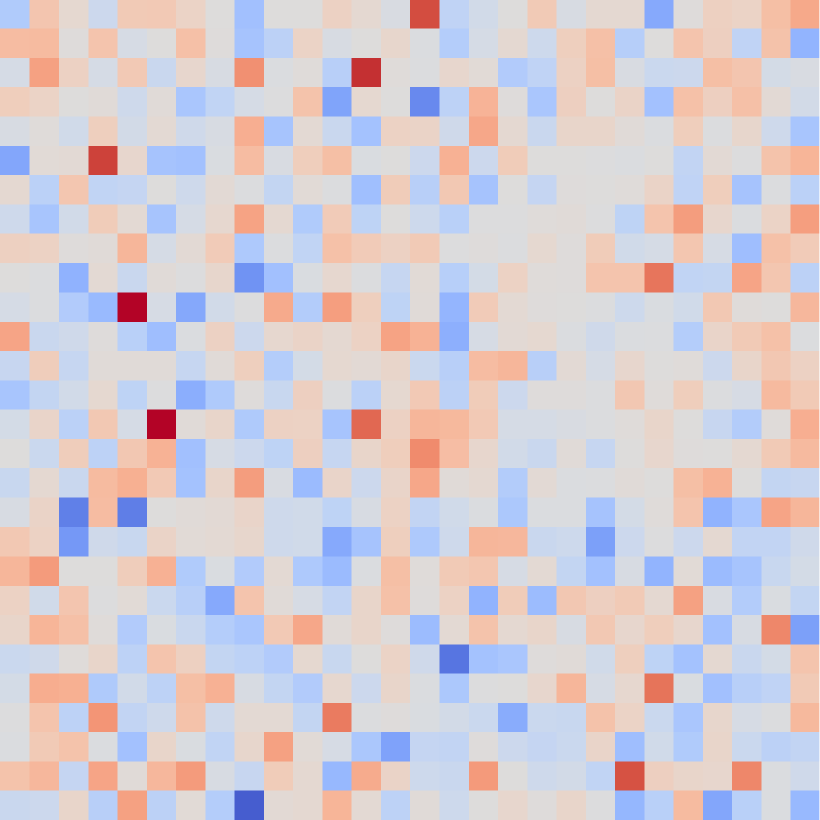} &
        \includegraphics[width=0.115\textwidth]{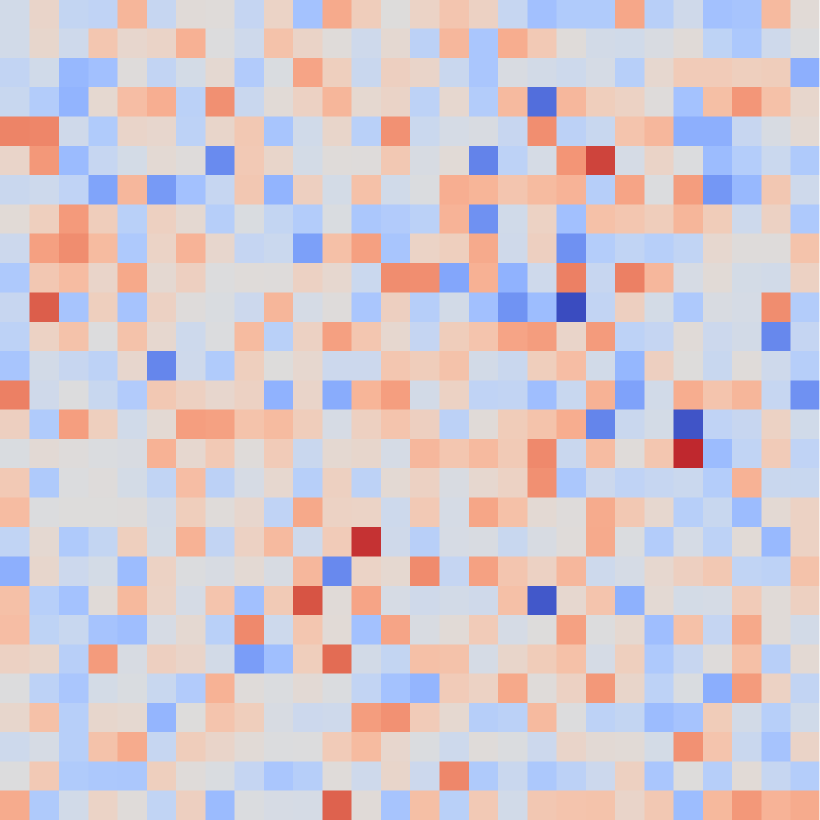} &
        \includegraphics[width=0.115\textwidth]{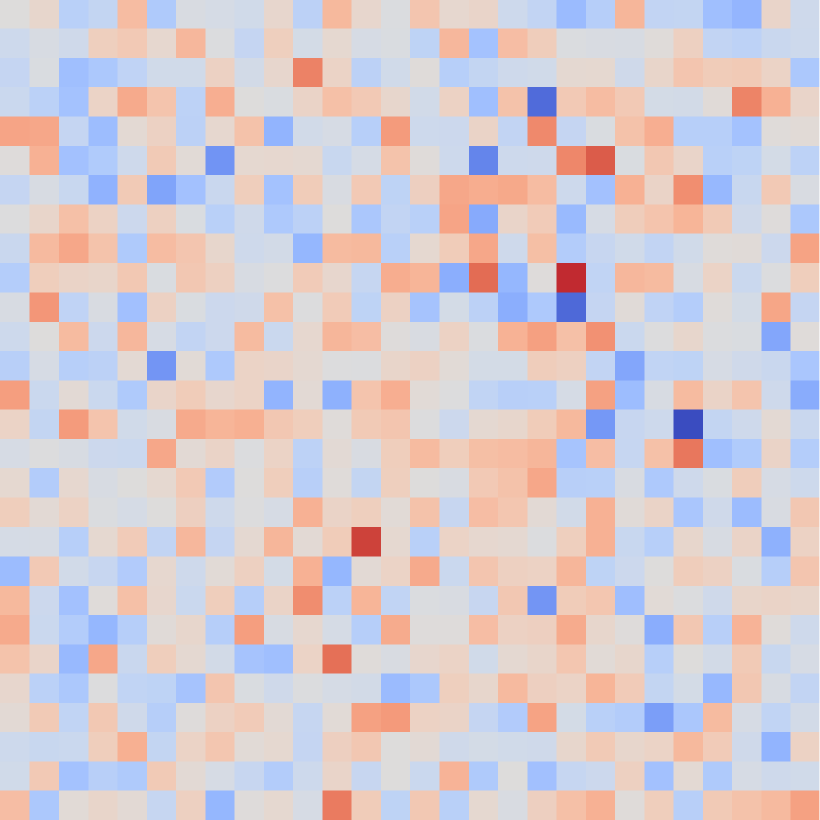} &
        \includegraphics[width=0.115\textwidth]{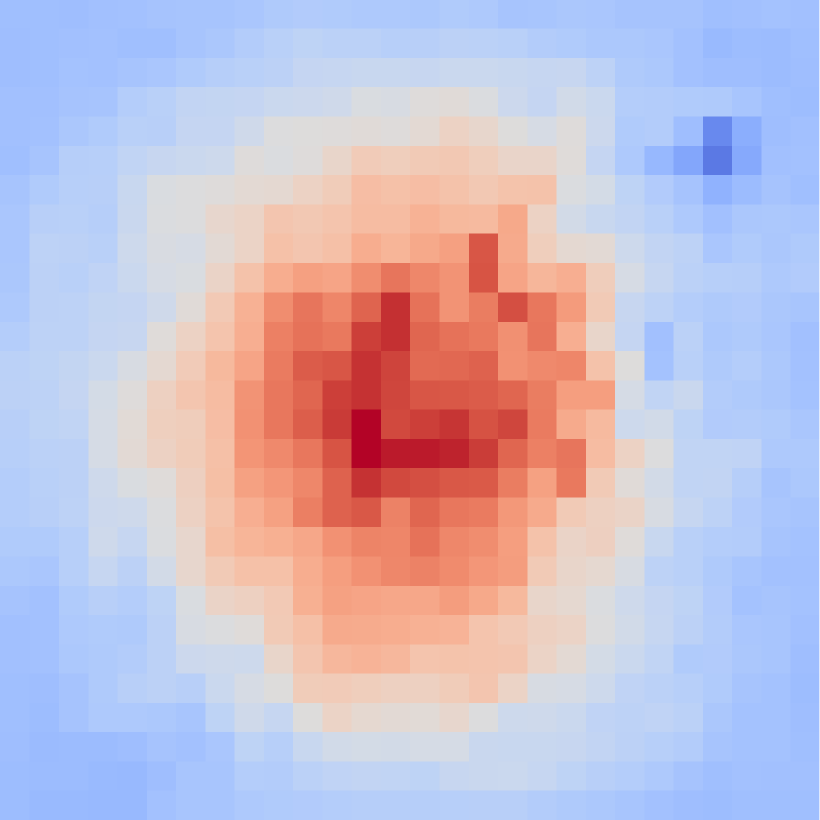} &
        \includegraphics[width=0.115\textwidth]{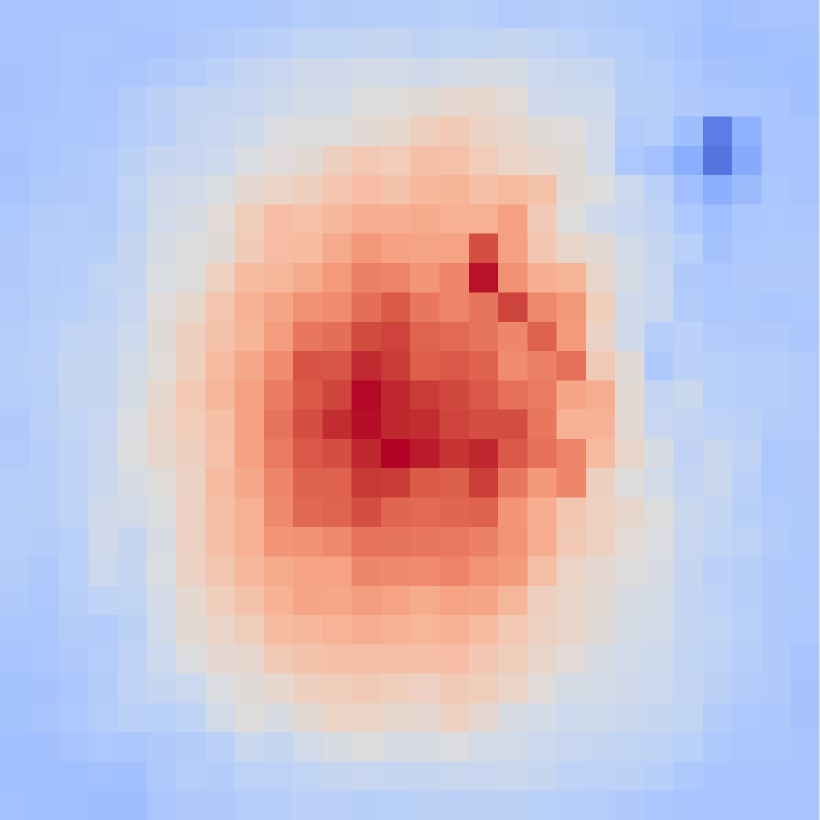} &
        \includegraphics[width=0.115\textwidth]{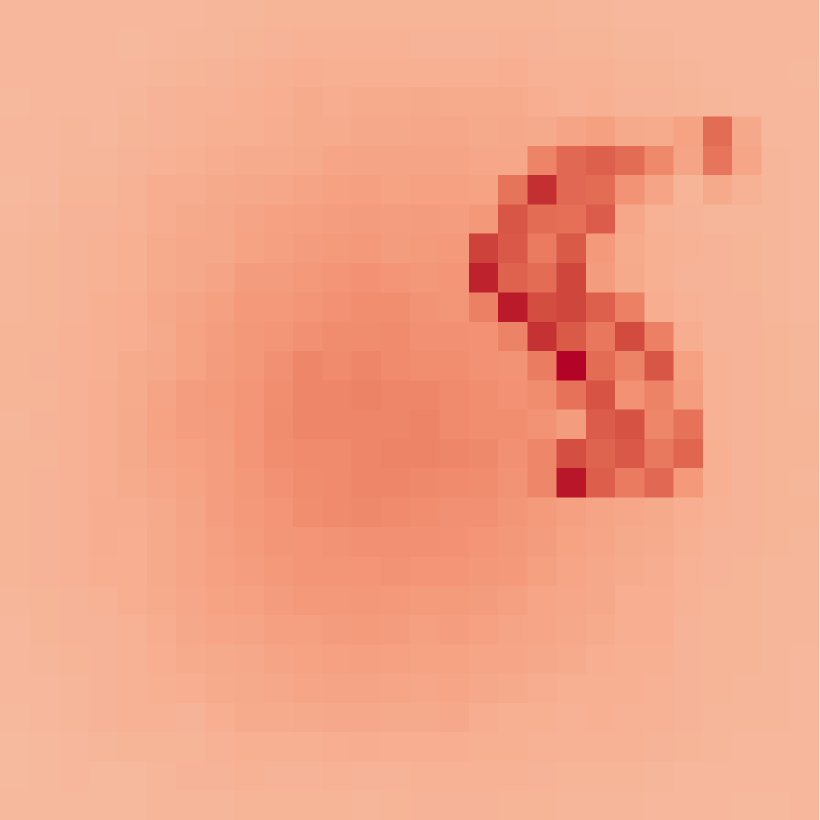} \\[3pt]

        \includegraphics[width=0.115\textwidth]{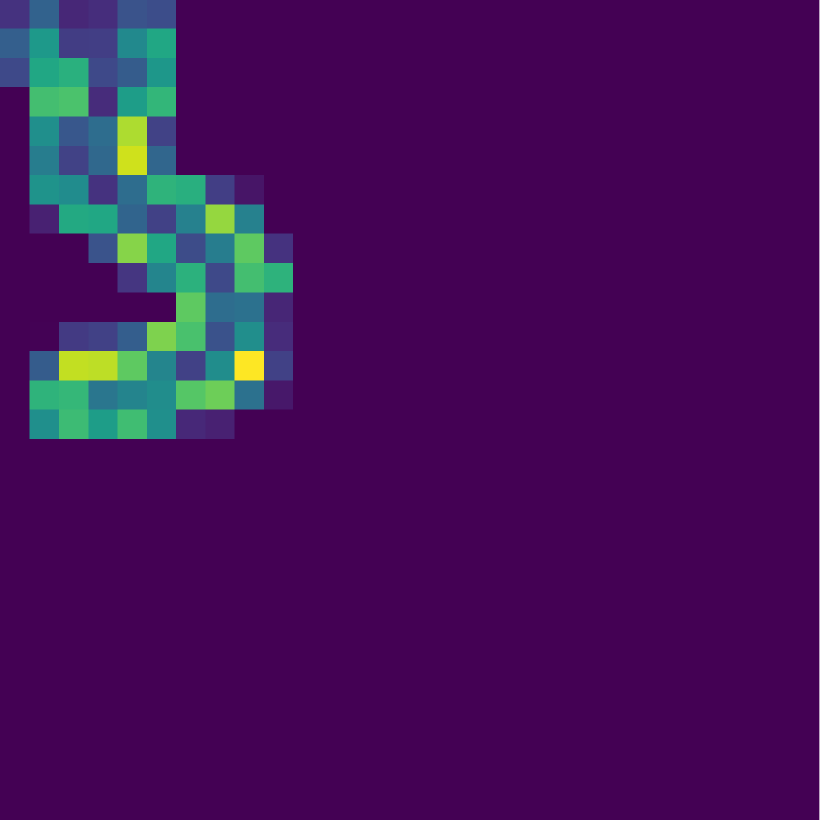} &
        \includegraphics[width=0.115\textwidth]{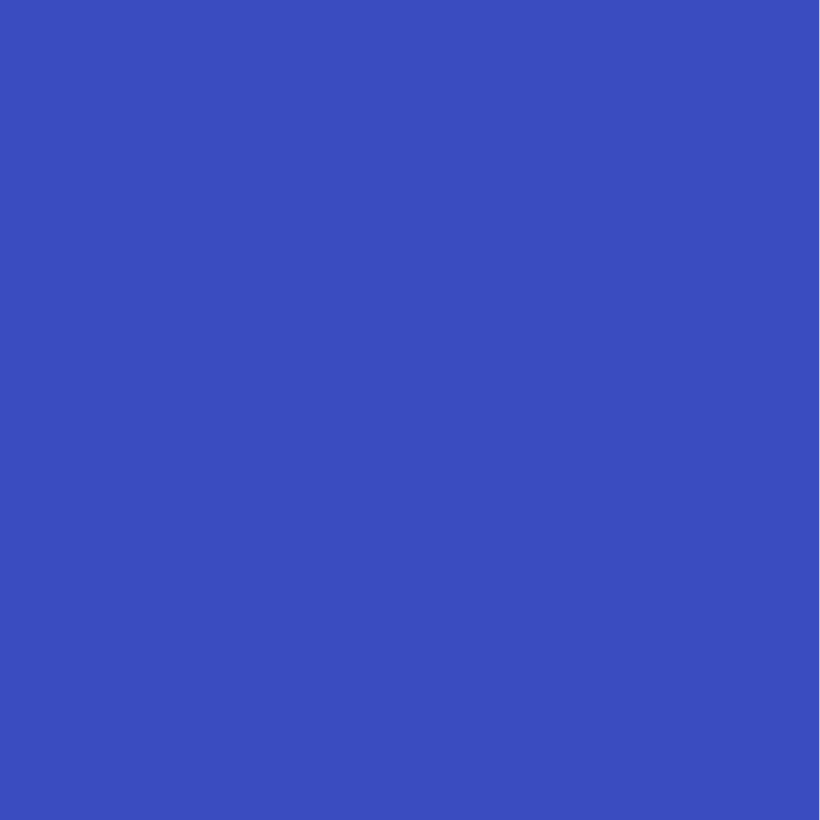} &
        \includegraphics[width=0.115\textwidth]{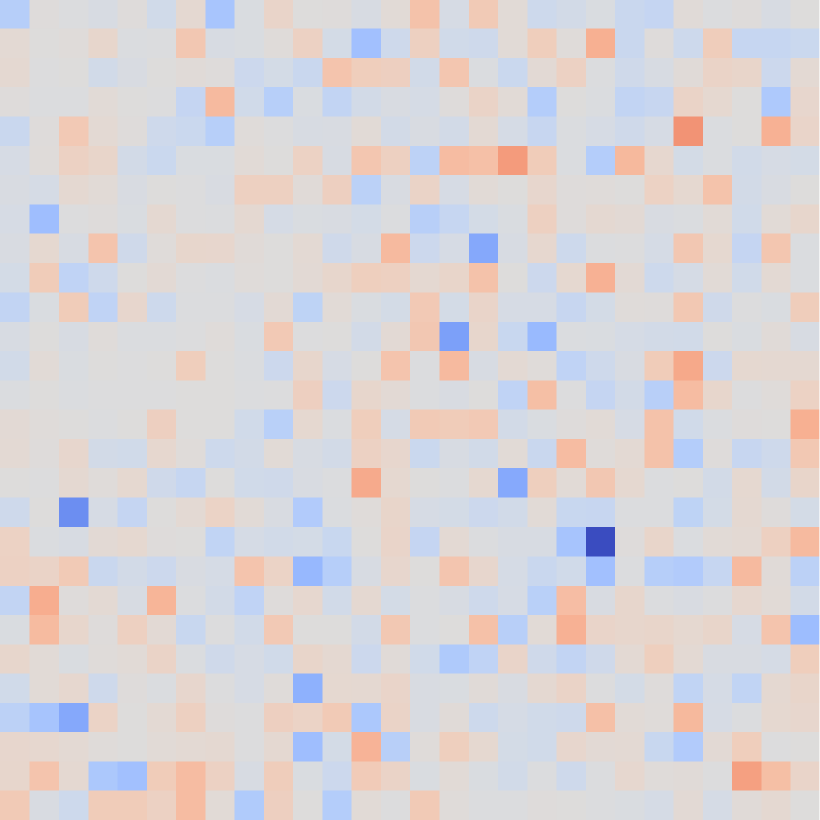} &
        \includegraphics[width=0.115\textwidth]{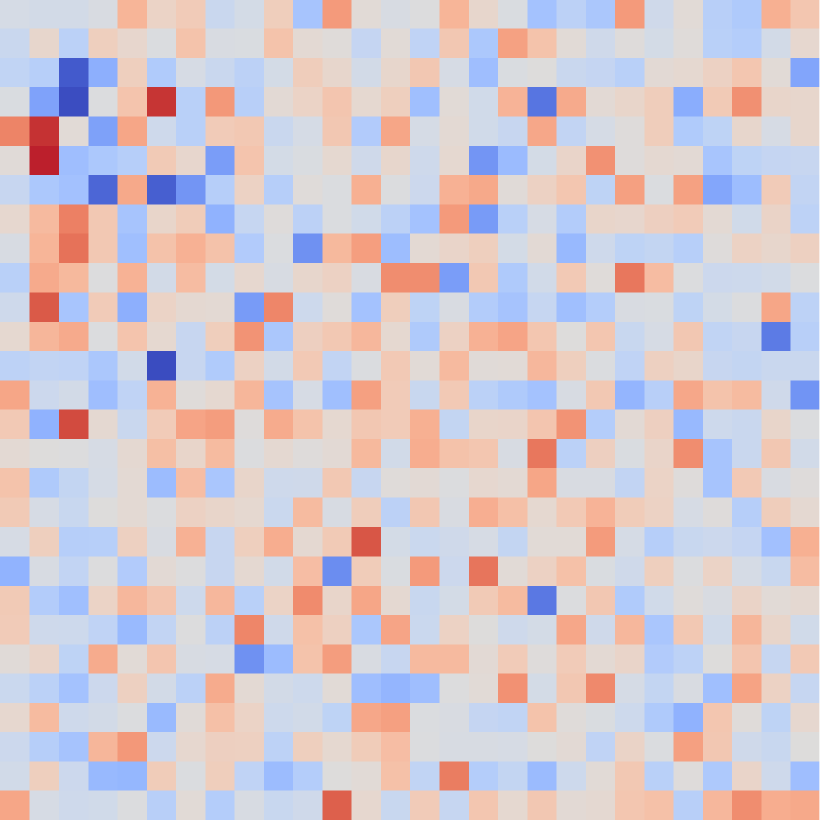} &
        \includegraphics[width=0.115\textwidth]{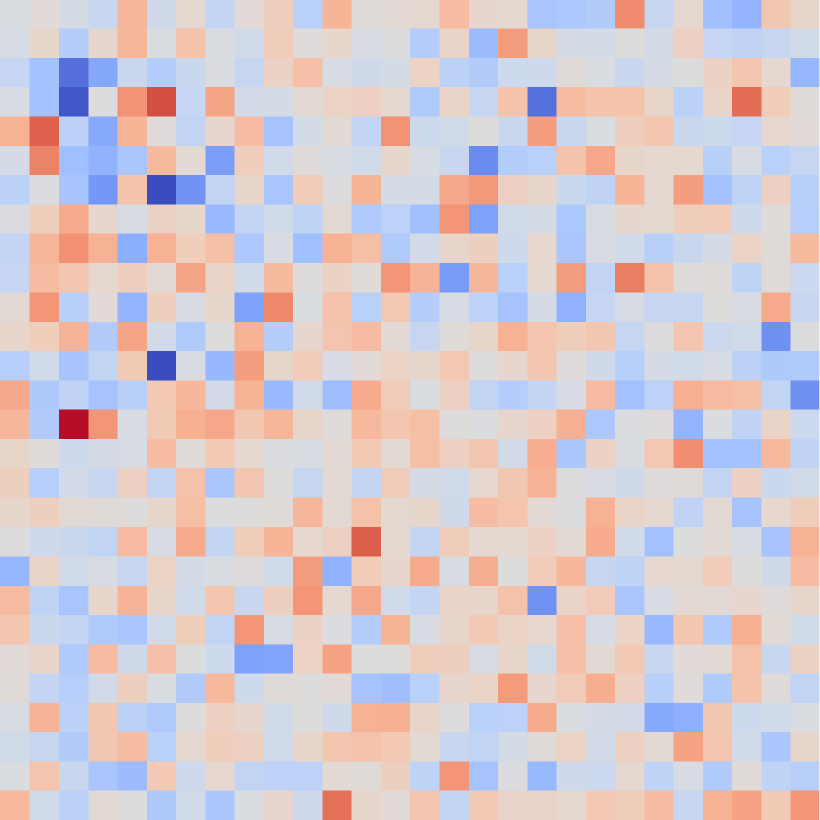} &
        \includegraphics[width=0.115\textwidth]{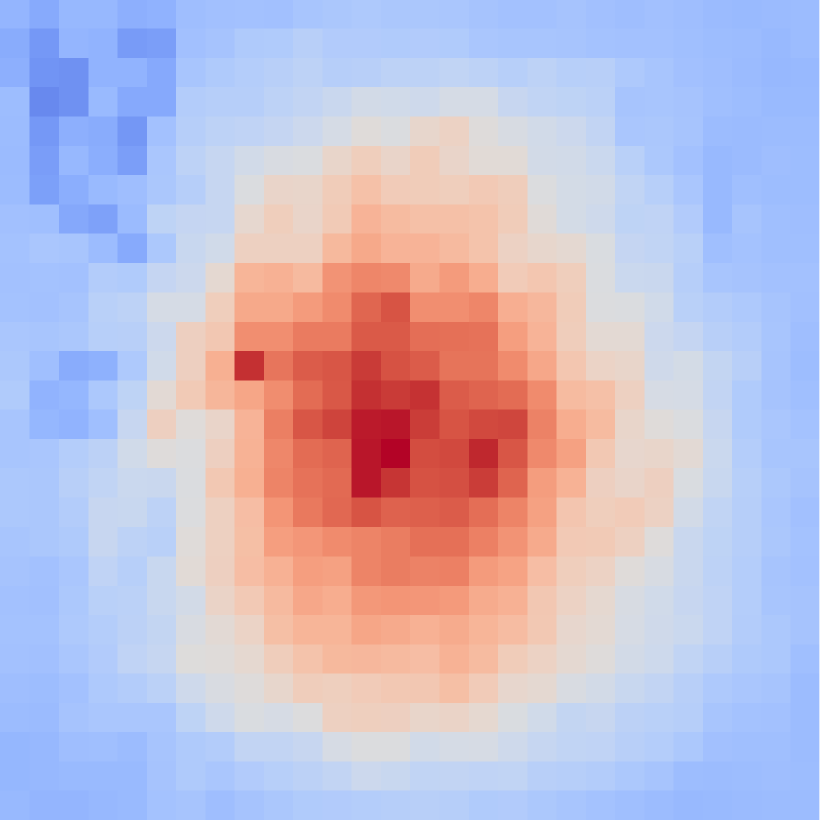} &
        \includegraphics[width=0.115\textwidth]{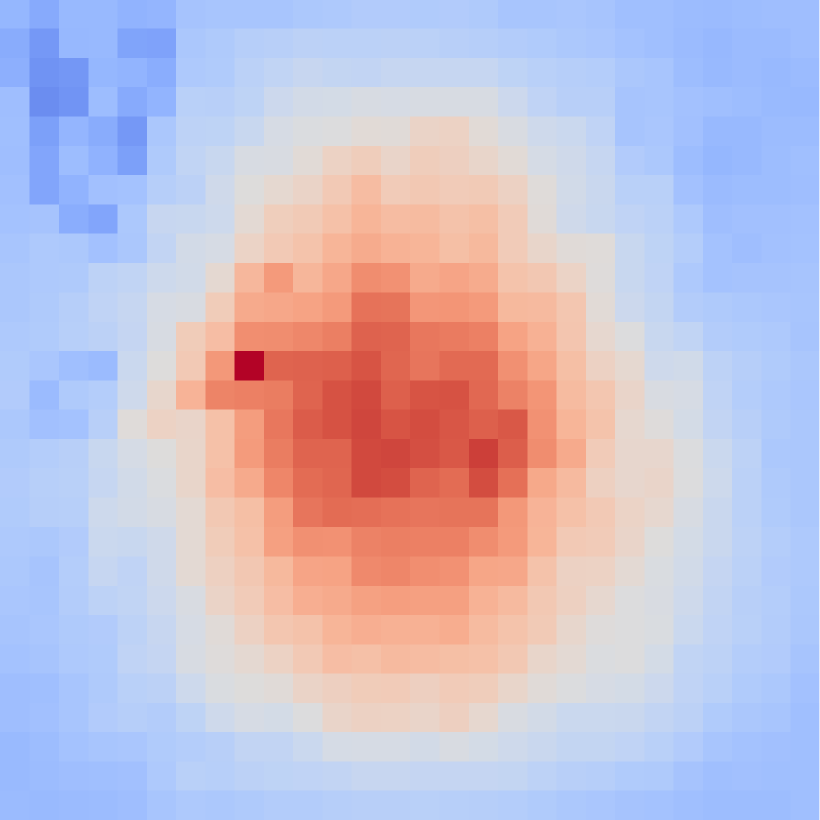} &
        \includegraphics[width=0.115\textwidth]{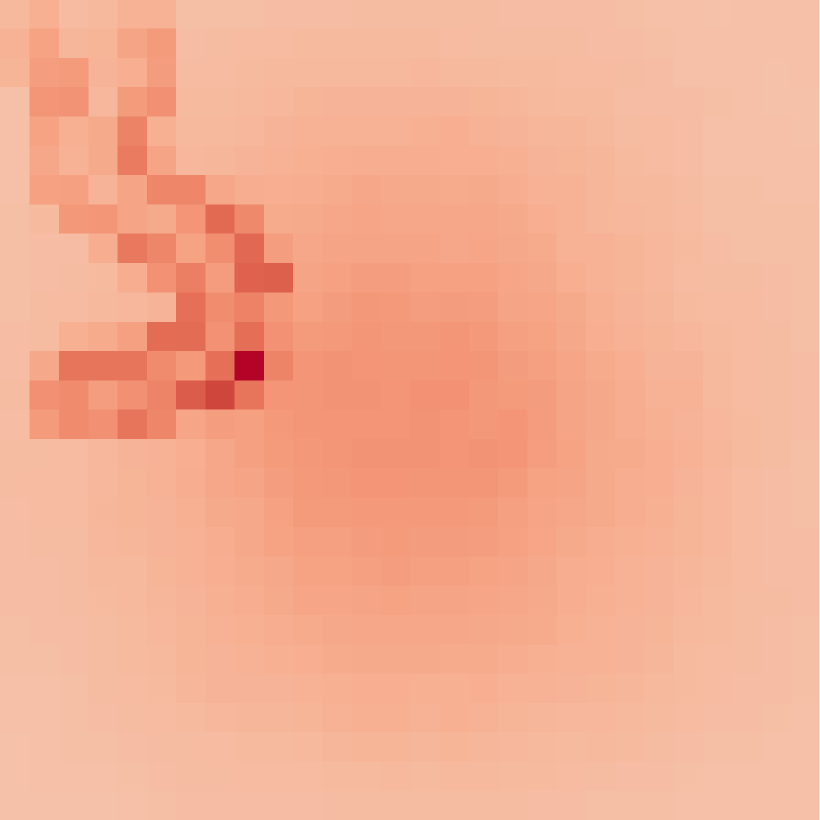} \\[3pt]

        \includegraphics[width=0.115\textwidth]{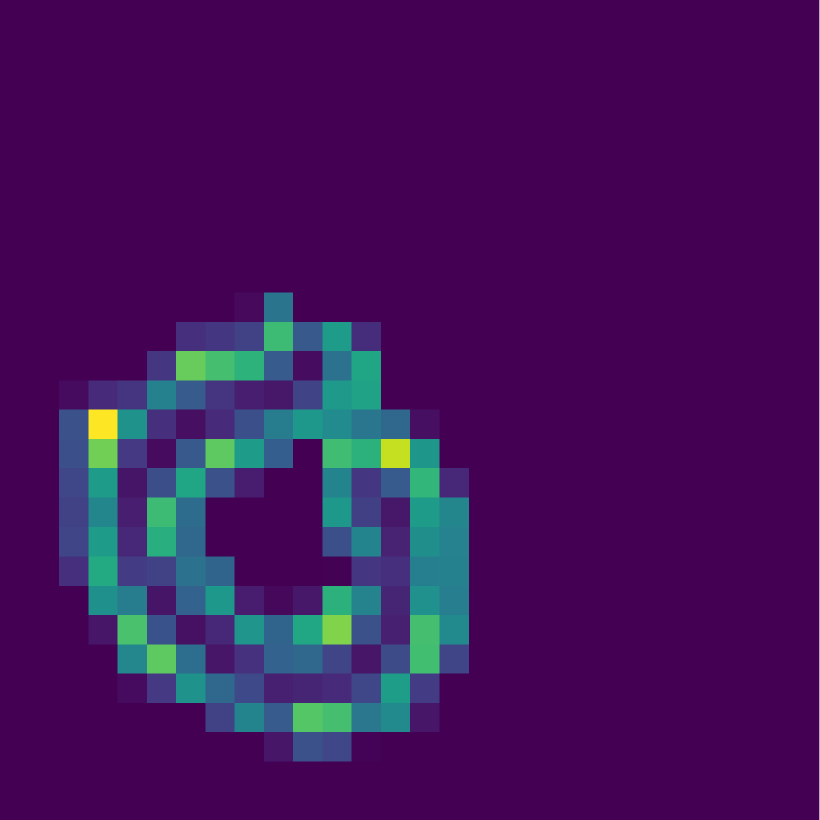} &
        \includegraphics[width=0.115\textwidth]{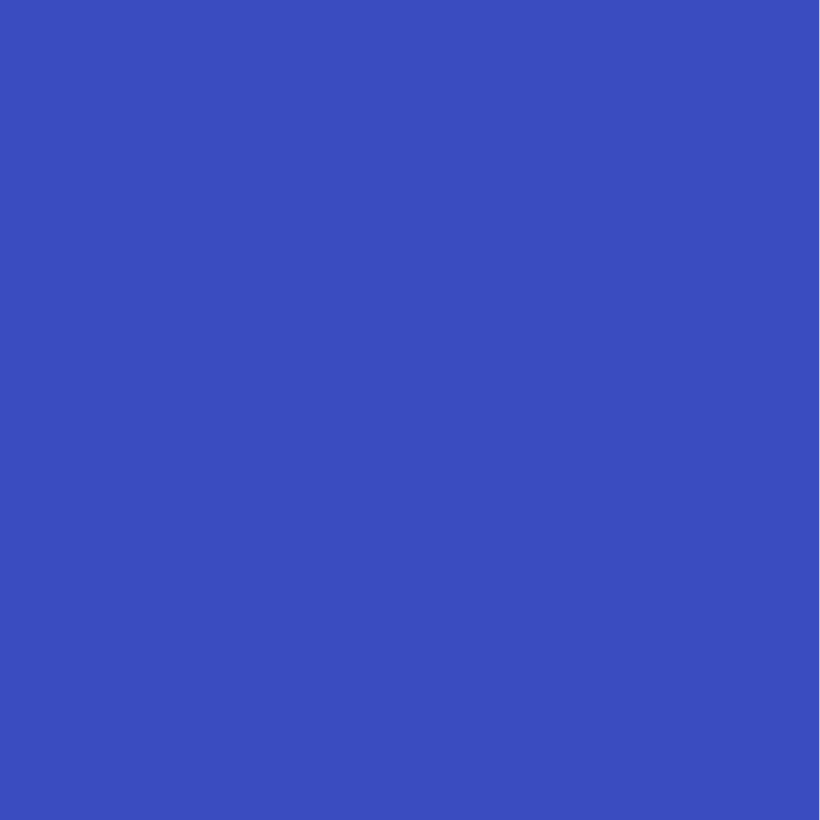} &
        \includegraphics[width=0.115\textwidth]{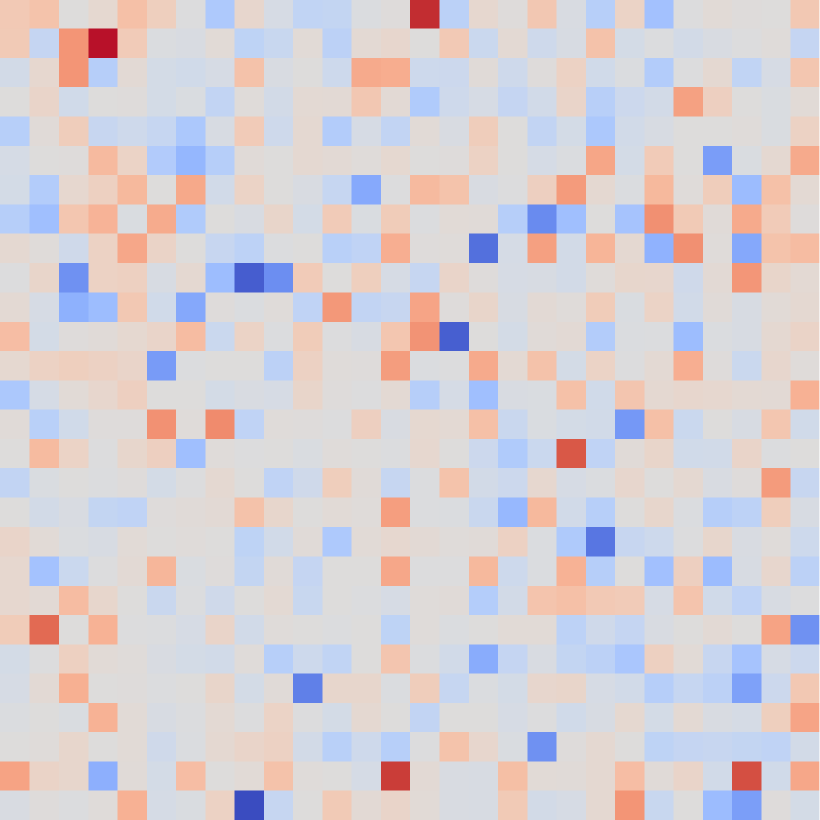} &
        \includegraphics[width=0.115\textwidth]{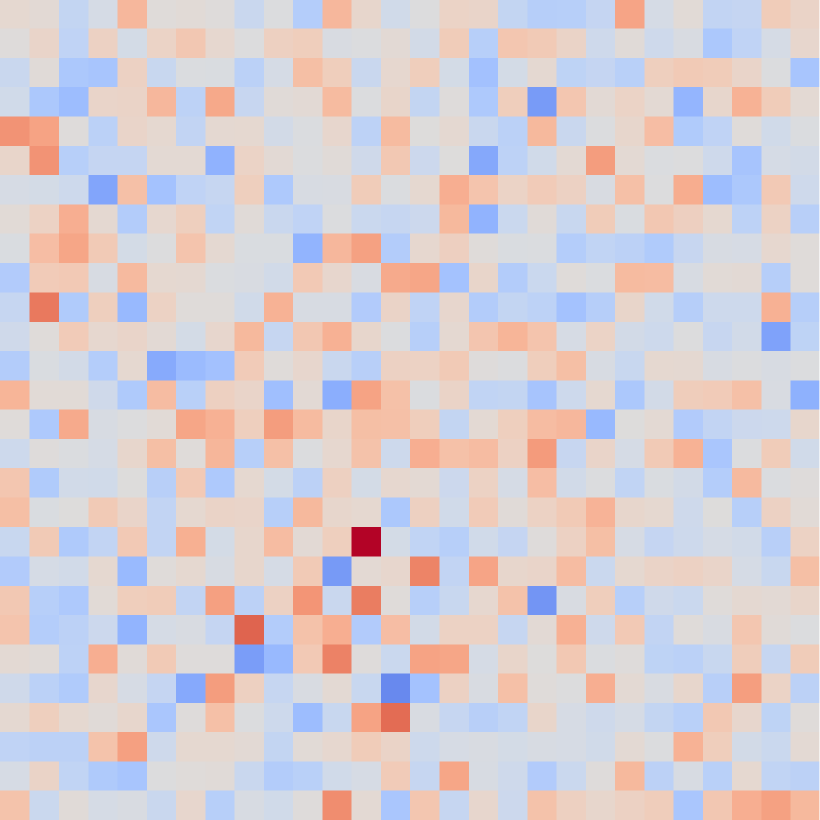} &
        \includegraphics[width=0.115\textwidth]{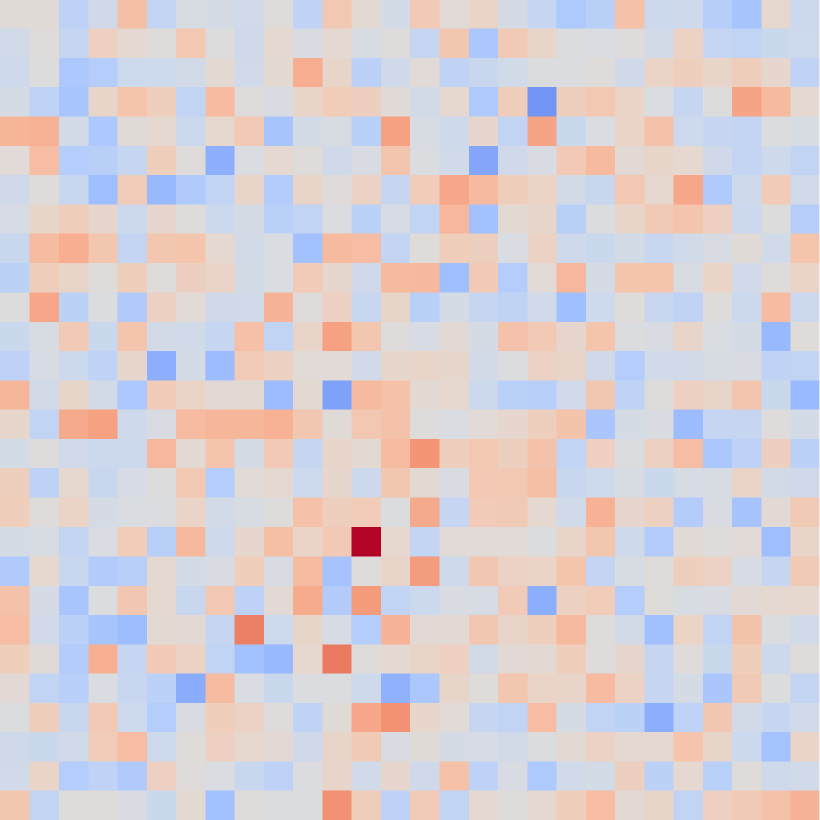} &
        \includegraphics[width=0.115\textwidth]{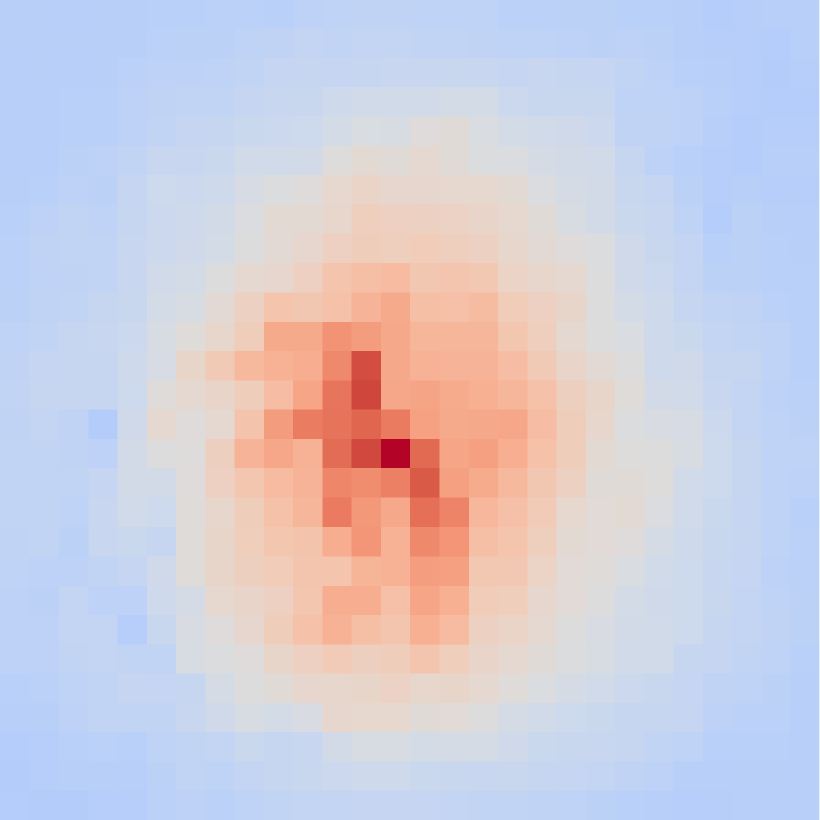} &
        \includegraphics[width=0.115\textwidth]{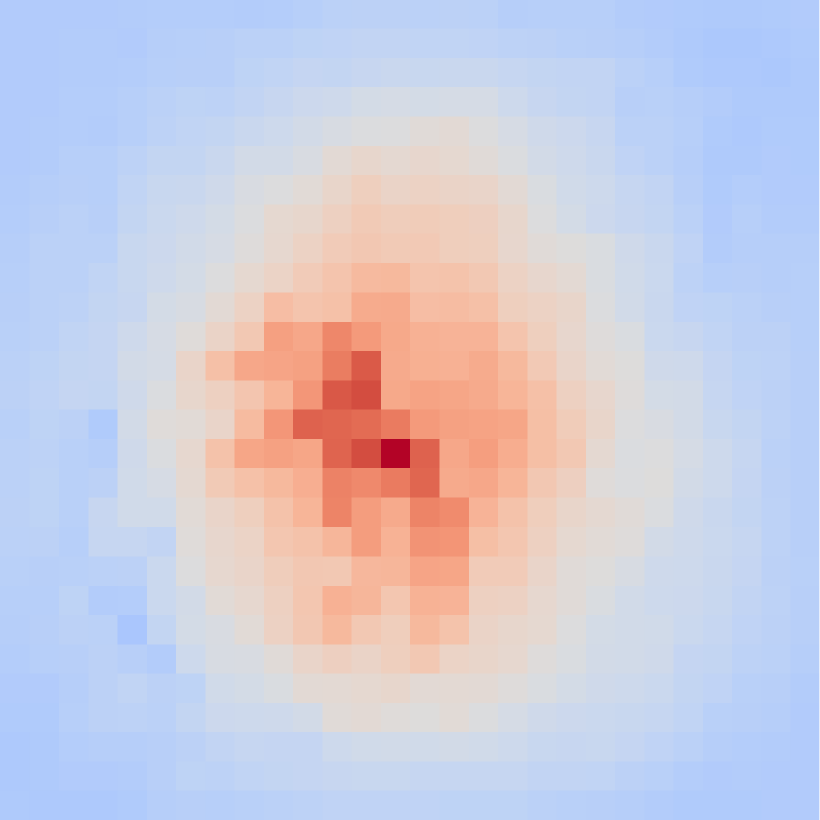} &
        \includegraphics[width=0.115\textwidth]{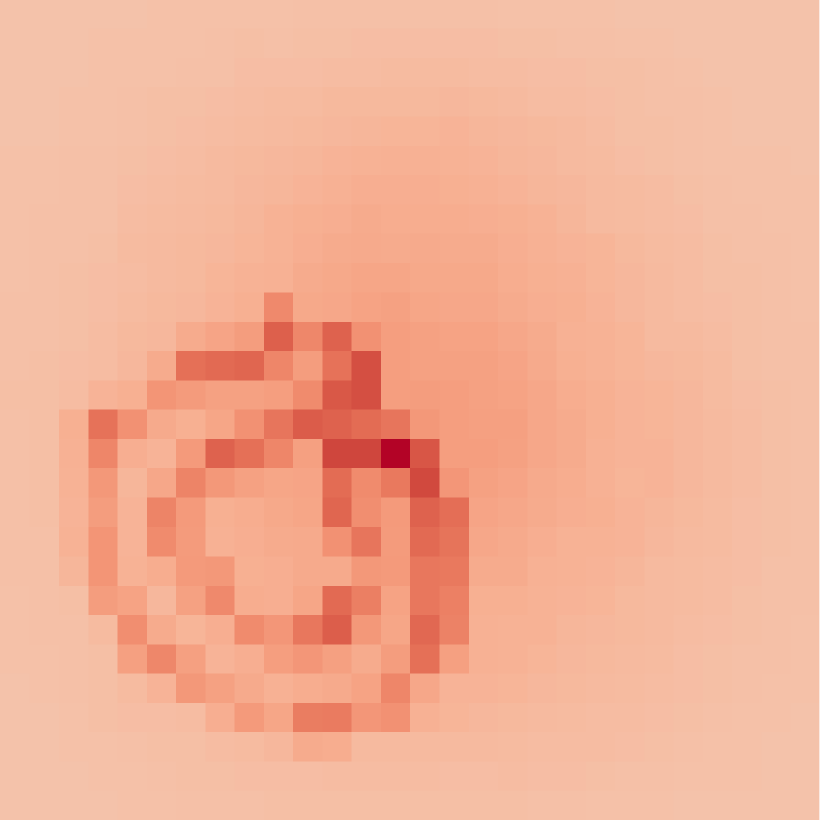} \\[3pt]
       
    \end{tabular}

    \caption{\emph{Attribution maps produced by different methods on TMNIST, TMNIST\_L, and AMNIST.} $XtrAIn$ (XT), $Xstep$ (X\_ST) and $XtrAIn^+$ (XT+) produce visually cleaner explanations in these examples, featuring a suppressed background and smooth foreground patterns.}
    \label{fig:both_res}

    \Description{A grid of 7x8 importance scores of different attribution methods for the Typeface MNIST, Fashion MNIST, Corrupted Mnist and Affine MNIST datasets. Starting from the leftmost column the image sample is presented, followed by different attribution methods: Deep Lift, Shapley Values, RISE, Integrated Gradients, $Xstep$, $XtrAIn$ and $XtrAIn^+$. $Xstep$ produces a sufficient approximation of $XtrAIn$, the heatmaps of which are more accurate, while $XtrAIn^+$ almost identifies the input signal. They jointly produce the cleanest heatmaps with respect to other techniques.}
\end{figure*}

\section{Discussion}
\label{sec:discussion}

A visual inspection of the attribution maps in \cref{fig:both_res} 
shows that $XtrAIn$ produces clean and interpretable explanations. Patterns within the central area correspond to the inputs at hand, while the predominantly non-activated background indicates the static, non-contributing nature of the corresponding neurons.

Different occlusion-based methods produce broadly similar patterns. However, $XtrAIn$ robustly generalizes these patterns: while other techniques produce artifacts across image regions $XtrAIn$ silences the background and generates clean patterns with gradual fill in the central region. These observations support our claims about feature dependence in standard occlusion-based methods and feature independence for $XtrAIn$. 

Furthermore, a comparison between $XtrAIn$ and its variant $Xstep$ reveals strong similarities in the results, providing evidence for a sufficient approximation for relatively simple models and datasets. Additionally, the consistency across samples of each class suggests these methods capture dominant, \textit{class-specific patterns} — which prevail over the more intricate patterns discovered during training (\cref{fig:upd_evol_patterns}). In contrast, $XtrAIn^+$ reveals explanations driven by \textit{input-specific patterns} that better fit the image signals. 

We provide quantitative validation in \cref{tab:main_results}, where $XtrAIn$ and $Xstep$ outperform baseline occlusion-based methods according to our proposed metric. Encouraged by these results, we conducted a deeper analysis by applying our methods to Affine MNIST, a dataset without spatial dependencies. While input-based occlusion methods only detect traces of the input signal, our variants confirm the model’s clean signal identification. The results suggest the model cannot identify dependencies between patterns and spatial location—a known limitation of FCNNs.

Applying our framework to the PAM50 breast cancer dataset reveals similarities with other attribution methods. Most methods, including $XtrAIn$ identify common features for classification to some extent. Despite this, they all provide evidence for features positively rewarding the Basal Type Cancer class for two epochs of training. On the other hand, $XtrAIn^+$ is only triggered by positive target class updates, thus resulting in a blank feature map. This application demonstrates its capacity in fostering safety in  
critical applications and system diagnostics.

Overall, $XtrAIn$ and its variants provide a powerful diagnostic lens for model explainability, revealing clean, interpretable patterns while uncovering intricate learning strategies within the model.

\section{Conclusion}
\label{sec:discussion}

In this work, we presented a novel methodology for estimating attribution scores, operating by chaining evolution, occlusion through weight 
perturbation and importance. We introduced $XtrAIn$ and its light variant $Xstep$, and demonstrated their complementary effectiveness in explaining 
predictions and revealing previously unknown insights into the model's capabilities and classification strategies, offering deeper insight into its decision-making
process.

This methodology establishes a new, unexplored path for transparent and granular model interpretation. By extending the usage of update mechanisms to more complex model architectures or functional aspects, its explanatory potential grows substantially. 

\clearpage
\bibliographystyle{ACM-Reference-Format}
\bibliography{sample-base}

@String{Computing = "Computing" }

@String{Computer = "{IEEE} Computer" }

@String{Academic = "Academic Press" }

@String{Springer = "Springer-Verlag" }

@article{10.1613/jair.1.12228,
      author = {Burkart, Nadia and Huber, Marco F.},
      title = {A Survey on the Explainability of Supervised Machine Learning},
      year = {2021},
      issue_date = {May 2021},
      publisher = {AI Access Foundation},
      address = {El Segundo, CA, USA},
      volume = {70},
      issn = {1076-9757},
      url = {https://doi.org/10.1613/jair.1.12228},
      doi = {10.1613/jair.1.12228},
      journal = {J. Artif. Int. Res.},
      month = may,
      pages = {245–317},
      numpages = {73}
}

@misc{vilone2020explainableartificialintelligencesystematic,
      title={Explainable Artificial Intelligence: a Systematic Review}, 
      author={Giulia Vilone and Luca Longo},
      year={2020},
      eprint={2006.00093},
      archivePrefix={arXiv},
      primaryClass={cs.AI},
      url={https://arxiv.org/abs/2006.00093}, 
}

@ARTICLE{9369420,
      author={Samek, Wojciech and Montavon, Grégoire and Lapuschkin, Sebastian and Anders, Christopher J. and Müller, Klaus-Robert},
      journal={Proceedings of the IEEE}, 
      title={Explaining Deep Neural Networks and Beyond: A Review of Methods and Applications}, 
      year={2021},
      volume={109},
      number={3},
      pages={247-278},
      doi={10.1109/JPROC.2021.3060483}
}

@article{Mumuni2025ExplainableAI,
      title={Explainable artificial intelligence (XAI): from inherent explainability to large language models},
      author={Fuseini Mumuni and Alhassan G. Mumuni},
      journal={ArXiv},
      year={2025},
      volume={abs/2501.09967},
      url={https://api.semanticscholar.org/CorpusID:275606857}
}

@misc{covert2023learningestimateshapleyvalues,
      title={Learning to Estimate Shapley Values with Vision Transformers}, 
      author={Ian Covert and Chanwoo Kim and Su-In Lee},
      year={2023},
      eprint={2206.05282},
      archivePrefix={arXiv},
      primaryClass={cs.CV},
      url={https://arxiv.org/abs/2206.05282}, 
}

@InProceedings{10.1007/978-3-319-10590-1_53,
      author="Zeiler, Matthew D.
      and Fergus, Rob",
      editor="Fleet, David
      and Pajdla, Tomas
      and Schiele, Bernt
      and Tuytelaars, Tinne",
      title="Visualizing and Understanding Convolutional Networks",
      booktitle="Computer Vision -- ECCV 2014",
      year="2014",
      publisher="Springer International Publishing",
      address="Cham",
      pages="818--833"
}

@InProceedings{10.5555/3295222.3295230,
      author = {Lundberg, Scott M. and Lee, Su-In},
      title = {A unified approach to interpreting model predictions},
      year = {2017},
      isbn = {9781510860964},
      publisher = {Curran Associates Inc.},
      address = {Red Hook, NY, USA},
      booktitle = {Proceedings of the 31st International Conference on Neural Information Processing Systems},
      pages = {4768–4777},
      numpages = {10},
      location = {Long Beach, California, USA},
      series = {NIPS'17}
}

@misc{petsiuk2018riserandomizedinputsampling,
      title={RISE: Randomized Input Sampling for Explanation of Black-box Models}, 
      author={Vitali Petsiuk and Abir Das and Kate Saenko},
      year={2018},
      eprint={1806.07421},
      archivePrefix={arXiv},
      primaryClass={cs.CV},
      url={https://arxiv.org/abs/1806.07421}, 
}

@INPROCEEDINGS{8237633,
      author={Fong, Ruth C. and Vedaldi, Andrea},
      booktitle={2017 IEEE International Conference on Computer Vision (ICCV)}, 
      publisher = {Association for Computing Machinery},
      title={Interpretable Explanations of Black Boxes by Meaningful Perturbation}, 
      year={2017},
      volume={},
      number={},
      pages={3449-3457},
      doi={10.1109/ICCV.2017.371}
}

@inproceedings{NEURIPS2021_4f5c422f,
      author = {Geiger, Atticus and Lu, Hanson and Icard, Thomas and Potts, Christopher},
      booktitle = {Advances in Neural Information Processing Systems},
      editor = {M. Ranzato and A. Beygelzimer and Y. Dauphin and P.S. Liang and J. Wortman Vaughan},
      pages = {9574--9586},
      publisher = {Curran Associates, Inc.},
      title = {Causal Abstractions of Neural Networks},
      url = {https://proceedings.neurips.cc/paper_files/paper/2021/file/4f5c422f4d49a5a807eda27434231040-Paper.pdf},
      volume = {34},
      year = {2021}
}

@InProceedings{nnsattr_causal,
      title = 	 {Neural Network Attributions: A Causal Perspective},
      author =       {Chattopadhyay, Aditya and Manupriya, Piyushi and Sarkar, Anirban and Balasubramanian, Vineeth N},
      booktitle = 	 {Proceedings of the 36th International Conference on Machine Learning},
      pages = 	 {981--990},
      year = 	 {2019},
      editor = 	 {Chaudhuri, Kamalika and Salakhutdinov, Ruslan},
      volume = 	 {97},
      series = 	 {Proceedings of Machine Learning Research},
      month = 	 {09--15 Jun},
      publisher =    {PMLR},
      url = 	 {https://proceedings.mlr.press/v97/chattopadhyay19a.html}
}

@InProceedings{10.1007/978-3-032-08333-3_8,
      author="Schmeisser, Fabian
      and Lucieri, Adriano
      and Dengel, Andreas
      and Ahmed, Sheraz",
      editor="Guidotti, Riccardo
      and Schmid, Ute
      and Longo, Luca",
      title="Spectral Occlusion - Attribution Beyond Spatial Relevance Heatmaps",
      booktitle="Explainable Artificial Intelligence",
      year="2026",
      publisher="Springer Nature Switzerland",
      address="Cham",
      pages="159--183"
}

@article{bluecher2024decoupling,
	title={Decoupling Pixel Flipping and Occlusion Strategy for Consistent {XAI} Benchmarks},
	author={Stefan Bluecher and Johanna Vielhaben and Nils Strodthoff},
	journal={Transactions on Machine Learning Research},
	issn={2835-8856},
	year={2024},
	url={https://openreview.net/forum?id=bIiLXdtUVM},
	note={}
}

@article{JMLR:v22:20-1316,
  author  = {Ian Covert and Scott Lundberg and Su-In Lee},
  title   = {Explaining by Removing: A Unified Framework for Model Explanation},
  journal = {Journal of Machine Learning Research},
  year    = {2021},
  volume  = {22},
  number  = {209},
  pages   = {1--90},
  url     = {http://jmlr.org/papers/v22/20-1316.html}
}

@Article{Gevaert2024,
    author={Gevaert, Arne
    and Rousseau, Axel-Jan
    and Becker, Thijs
    and Valkenborg, Dirk
    and De Bie, Tijl
    and Saeys, Yvan},
    title={Evaluating feature attribution methods in the image domain},
    journal={Machine Learning},
    year={2024},
    month={Sep},
    day={01},
    volume={113},
    number={9},
    pages={6019-6064},
    issn={1573-0565},
    doi={10.1007/s10994-024-06550-x},
    url={https://doi.org/10.1007/s10994-024-06550-x}
}

@InProceedings{pmlr-v162-rong22a,
  title = 	 {A Consistent and Efficient Evaluation Strategy for Attribution Methods},
  author =       {Rong, Yao and Leemann, Tobias and Borisov, Vadim and Kasneci, Gjergji and Kasneci, Enkelejda},
  booktitle = 	 {Proceedings of the 39th International Conference on Machine Learning},
  pages = 	 {18770--18795},
  year = 	 {2022},
  editor = 	 {Chaudhuri, Kamalika and Jegelka, Stefanie and Song, Le and Szepesvari, Csaba and Niu, Gang and Sabato, Sivan},
  volume = 	 {162},
  series = 	 {Proceedings of Machine Learning Research},
  month = 	 {17--23 Jul},
  publisher =    {PMLR},
  url = 	 {https://proceedings.mlr.press/v162/rong22a.html},
}

@article{lipton2017mythosmodelinterpretability,
      author = {Lipton, Zachary C.},
      title = {The mythos of model interpretability},
      year = {2018},
      issue_date = {October 2018},
      publisher = {Association for Computing Machinery},
      address = {New York, NY, USA},
      volume = {61},
      number = {10},
      issn = {0001-0782},
      url = {https://doi.org/10.1145/3233231},
      doi = {10.1145/3233231},
      journal = {Commun. ACM},
      month = sep,
      pages = {36-43},
      numpages = {8}
}

@article{Nauta_2023,
      title={From Anecdotal Evidence to Quantitative Evaluation Methods: A Systematic Review on Evaluating Explainable AI},
      volume={55},
      ISSN={1557-7341},
      url={http://dx.doi.org/10.1145/3583558},
      DOI={10.1145/3583558},
      number={13s},
      journal={ACM Computing Surveys},
      publisher={Association for Computing Machinery (ACM)},
      author={Nauta, Meike and Trienes, Jan and Pathak, Shreyasi and Nguyen, Elisa and Peters, Michelle and Schmitt, Yasmin and Schlötterer, Jörg and van Keulen, Maurice and Seifert, Christin},
      year={2023},
      month=jul, pages={1–42} 
}

@Article{sturmfels2020visualizing,
      author = {Sturmfels, Pascal and Lundberg, Scott and Lee, Su-In},
      title = {Visualizing the Impact of Feature Attribution Baselines},
      journal = {Distill},
      year = {2020},
      note = {https://distill.pub/2020/attribution-baselines},
      doi = {10.23915/distill.00022}
}

@misc{jain2022missingnessbiasmodeldebugging,
      title={Missingness Bias in Model Debugging}, 
      author={Saachi Jain and Hadi Salman and Eric Wong and Pengchuan Zhang and Vibhav Vineet and Sai Vemprala and Aleksander Madry},
      year={2022},
      eprint={2204.08945},
      archivePrefix={arXiv},
      primaryClass={cs.CV},
      url={https://arxiv.org/abs/2204.08945}, 
}

@article {Geoscience,
      author = "Antonios Mamalakis and Elizabeth A. Barnes and Imme Ebert-Uphoff",
      title = "Carefully Choose the Baseline: Lessons Learned from Applying XAI Attribution Methods for Regression Tasks in Geoscience",
      journal = "Artificial Intelligence for the Earth Systems",
      year = "2023",
      publisher = "American Meteorological Society",
      address = "Boston MA, USA",
      volume = "2",
      number = "1",
      doi = "10.1175/AIES-D-22-0058.1",
      pages=      "e220058",
      url = "https://journals.ametsoc.org/view/journals/aies/2/1/AIES-D-22-0058.1.xml"
}

@article{BROCKI2023131,
    title = {Feature perturbation augmentation for reliable evaluation of importance estimators in neural networks},
    journal = {Pattern Recognition Letters},
    volume = {176},
    pages = {131-139},
    year = {2023},
    issn = {0167-8655},
    doi = {https://doi.org/10.1016/j.patrec.2023.10.012},
    url = {https://www.sciencedirect.com/science/article/pii/S0167865523002842},
    author = {Lennart Brocki and Neo Christopher Chung}
}

@misc{augustin2024digindiffusionguidanceinvestigating,
      title={DiG-IN: Diffusion Guidance for Investigating Networks -- Uncovering Classifier Differences Neuron Visualisations and Visual Counterfactual Explanations}, 
      author={Maximilian Augustin and Yannic Neuhaus and Matthias Hein},
      year={2024},
      eprint={2311.17833},
      archivePrefix={arXiv},
      primaryClass={cs.CV},
      url={https://arxiv.org/abs/2311.17833}, 
}

@inproceedings{10.1007/978-3-030-69544-6_7,
author = {Agarwal, Chirag and Nguyen, Anh},
title = {Explaining Image Classifiers by Removing Input Features Using Generative Models},
year = {2020},
isbn = {978-3-030-69543-9},
publisher = {Springer-Verlag},
address = {Berlin, Heidelberg},
url = {https://doi.org/10.1007/978-3-030-69544-6_7},
doi = {10.1007/978-3-030-69544-6_7},
booktitle = {Computer Vision – ACCV 2020: 15th Asian Conference on Computer Vision, Kyoto, Japan, November 30 – December 4, 2020, Revised Selected Papers, Part VI},
pages = {101–118},
numpages = {18},
location = {Kyoto, Japan}
}

@misc{chang2019explainingimageclassifierscounterfactual,
      title={Explaining Image Classifiers by Counterfactual Generation}, 
      author={Chun-Hao Chang and Elliot Creager and Anna Goldenberg and David Duvenaud},
      year={2019},
      eprint={1807.08024},
      archivePrefix={arXiv},
      primaryClass={cs.CV},
      url={https://arxiv.org/abs/1807.08024}, 
}

@article{10.1016/j.imavis.2022.104516,
    author = {Shi, Rui and Li, Tianxing and Yamaguchi, Yasushi},
    title = {Output-targeted baseline for neuron attribution calculation},
    year = {2022},
    issue_date = {Aug 2022},
    publisher = {Butterworth-Heinemann},
    address = {USA},
    volume = {124},
    number = {C},
    issn = {0262-8856},
    url = {https://doi.org/10.1016/j.imavis.2022.104516},
    doi = {10.1016/j.imavis.2022.104516},
    journal = {Image Vision Comput.},
    month = aug,
    numpages = {13}
}

@inproceedings{Ren2023CanWF,
  title={Can We Faithfully Represent Absence States to Compute Shapley Values on a DNN?},
  author={J. Ren and Zhanpeng Zhou and Qirui Chen and Quanshi Zhang},
  booktitle={International Conference on Learning Representations},
  year={2023},
  url={https://api.semanticscholar.org/CorpusID:259298245}
}

@inproceedings{Izzo_2021, series={ESANN 2021},
   title={A Baseline for Shapley Values in MLPs: from Missingness to Neutrality},
   url={http://dx.doi.org/10.14428/esann/2021.ES2021-18},
   DOI={10.14428/esann/2021.es2021-18},
   booktitle={ESANN 2021 proceedings},
   publisher={Ciaco - i6doc.com},
   author={Izzo, Cosimo and Lipani, Aldo and Okhrati, Ramin and Medda, Francesca},
   year={2021},
   pages={605–610},
   collection={ESANN 2021} }

@inproceedings{10.5555/3540261.3540540,
author = {Hase, Peter and Xie, Harry and Bansal, Mohit},
title = {The out-of-distribution problem in explainability and search methods for feature importance explanations},
year = {2021},
isbn = {9781713845393},
publisher = {Curran Associates Inc.},
address = {Red Hook, NY, USA},
booktitle = {Proceedings of the 35th International Conference on Neural Information Processing Systems},
articleno = {279},
numpages = {17},
series = {NIPS '21}
}

@inproceedings{10.1007/978-3-031-09037-0_8,
    author = {Gomez, Tristan and Fr\'{e}our, Thomas and Mouch\`{e}re, Harold},
    title = {Metrics for saliency map evaluation of deep learning explanation methods},
    year = {2022},
    isbn = {978-3-031-09036-3},
    publisher = {Springer-Verlag},
    address = {Berlin, Heidelberg},
    url = {https://doi.org/10.1007/978-3-031-09037-0_8},
    doi = {10.1007/978-3-031-09037-0_8},
    booktitle = {Pattern Recognition and Artificial Intelligence: Third International Conference, ICPRAI 2022, Paris, France, June 1–3, 2022, Proceedings, Part I},
    pages = {84–95},
    numpages = {12},
    location = {Paris, France}
}

@article{salehi2022a,
    title={A Unified Survey on Anomaly, Novelty, Open-Set, and Out of-Distribution Detection: Solutions and Future Challenges},
    author={Mohammadreza Salehi and Hossein Mirzaei and Dan Hendrycks and Yixuan Li and Mohammad Hossein Rohban and Mohammad Sabokrou},
    journal={Transactions on Machine Learning Research},
    issn={2835-8856},
    year={2022},
    url={https://openreview.net/forum?id=aRtjVZvbpK},
    note={}
}

@inbook{10.5555/3454287.3455160,
      author = {Hooker, Sara and Erhan, Dumitru and Kindermans, Pieter-Jan and Kim, Been},
      title = {A benchmark for interpretability methods in deep neural networks},
      year = {2019},
      publisher = {Curran Associates Inc.},
      address = {Red Hook, NY, USA},
      booktitle = {Proceedings of the 33rd International Conference on Neural Information Processing Systems},
      articleno = {873},
      numpages = {12}
}

@article{10.1007/s11222-021-10057-z,
    author = {Hooker, Giles and Mentch, Lucas and Zhou, Siyu},
    title = {Unrestricted permutation forces extrapolation: variable importance requires at least one more model, or there is no free variable importance},
    year = {2021},
    issue_date = {Nov 2021},
    publisher = {Kluwer Academic Publishers},
    address = {USA},
    volume = {31},
    number = {6},
    issn = {0960-3174},
    url = {https://doi.org/10.1007/s11222-021-10057-z},
    doi = {10.1007/s11222-021-10057-z},
    journal = {Statistics and Computing},
    month = nov,
    numpages = {16}
}

@InProceedings{pmlr-v119-kumar20e,
  title = 	 {Problems with Shapley-value-based explanations as feature importance measures},
  author =       {Kumar, I. Elizabeth and Venkatasubramanian, Suresh and Scheidegger, Carlos and Friedler, Sorelle},
  booktitle = 	 {Proceedings of the 37th International Conference on Machine Learning},
  pages = 	 {5491--5500},
  year = 	 {2020},
  editor = 	 {III, Hal Daumé and Singh, Aarti},
  volume = 	 {119},
  series = 	 {Proceedings of Machine Learning Research},
  month = 	 {13--18 Jul},
  publisher =    {PMLR},
  url = 	 {https://proceedings.mlr.press/v119/kumar20e.html},
}

@misc{geiger2025causalabstractiontheoreticalfoundation,
      title={Causal Abstraction: A Theoretical Foundation for Mechanistic Interpretability}, 
      author={Atticus Geiger and Duligur Ibeling and Amir Zur and Maheep Chaudhary and Sonakshi Chauhan and Jing Huang and Aryaman Arora and Zhengxuan Wu and Noah Goodman and Christopher Potts and Thomas Icard},
      year={2025},
      eprint={2301.04709},
      archivePrefix={arXiv},
      primaryClass={cs.AI},
      url={https://arxiv.org/abs/2301.04709}, 
}

@inproceedings{NEURIPS2020_443dec30,
      author = {Tsang, Michael and Rambhatla, Sirisha and Liu, Yan},
      booktitle = {Advances in Neural Information Processing Systems},
      editor = {H. Larochelle and M. Ranzato and R. Hadsell and M.F. Balcan and H. Lin},
      pages = {6147--6159},
      publisher = {Curran Associates, Inc.},
      title = {How does This Interaction Affect Me? Interpretable Attribution for Feature Interactions},
      url = {https://proceedings.neurips.cc/paper_files/paper/2020/file/443dec3062d0286986e21dc0631734c9-Paper.pdf},
      volume = {33},
      year = {2020}
}

@article{10.1214/aos/1013203451,
      author = {Jerome H. Friedman},
      title = {{Greedy function approximation: A gradient boosting machine.}},
      volume = {29},
      journal = {The Annals of Statistics},
      number = {5},
      publisher = {Institute of Mathematical Statistics},
      pages = {1189 -- 1232},
      year = {2001},
      doi = {10.1214/aos/1013203451},
      URL = {https://doi.org/10.1214/aos/1013203451}
}

@article{JML-v20-18-760,
  author  = {Aaron Fisher and Cynthia Rudin and Francesca Dominici},
  title   = {All Models are Wrong, but Many are Useful: Learning a Variable's Importance by Studying an Entire Class of Prediction Models Simultaneously},
  journal = {Journal of Machine Learning Research},
  year    = {2019},
  volume  = {20},
  number  = {177},
  pages   = {1--81},
  url     = {http://jmlr.org/papers/v20/18-760.html}
}

@misc{apley2019visualizingeffectspredictorvariables,
      title={Visualizing the Effects of Predictor Variables in Black Box Supervised Learning Models}, 
      author={Daniel W. Apley and Jingyu Zhu},
      year={2019},
      eprint={1612.08468},
      archivePrefix={arXiv},
      primaryClass={stat.ME},
      url={https://arxiv.org/abs/1612.08468}, 
}

@misc{power2022grokkinggeneralizationoverfittingsmall,
      title={Grokking: Generalization Beyond Overfitting on Small Algorithmic Datasets}, 
      author={Alethea Power and Yuri Burda and Harri Edwards and Igor Babuschkin and Vedant Misra},
      year={2022},
      eprint={2201.02177},
      archivePrefix={arXiv},
      primaryClass={cs.LG},
      url={https://arxiv.org/abs/2201.02177}, 
}

@Article{10.5555/944919.944968,
      author = {Guyon, Isabelle and Elisseeff, Andr\'{e}},
      title = {An introduction to variable and feature selection},
      year = {2003},
      issue_date = {3/1/2003},
      publisher = {JMLR.org},
      volume = {3},
      number = {null},
      issn = {1532-4435},
      journal = {J. Mach. Learn. Res.},
      month = mar,
      pages = {1157–1182},
      numpages = {26}
}

@inproceedings{10.1145/3205651.3208227,
      author = {Saito, Shota and Shirakawa, Shinichi and Akimoto, Youhei},
      title = {Embedded feature selection using probabilistic model-based optimization},
      year = {2018},
      isbn = {9781450357647},
      publisher = {Association for Computing Machinery},
      address = {New York, NY, USA},
      url = {https://doi.org/10.1145/3205651.3208227},
      doi = {10.1145/3205651.3208227},
      booktitle = {Proceedings of the Genetic and Evolutionary Computation Conference Companion},
      pages = {1922–1925},
      numpages = {4},
      location = {Kyoto, Japan},
      series = {GECCO '18}
}

@inproceedings{inproceedingsOcclusionSensitiv,
      author = {Valois, Pedro and Niinuma, Koichiro and Fukui, Kazuhiro},
      year = {2024},
      month = {01},
      pages = {4817-4826},
      title = {Occlusion Sensitivity Analysis with Augmentation Subspace Perturbation in Deep Feature Space},
      doi = {10.1109/WACV57701.2024.00476}
}

@Article{Importance2024Veas,
      author = {Chan, Ho and Veas, Eduardo},
      year = {2024},
      month = {04},
      pages = {},
      title = {Importance Estimate of Features via analysis of their Weight and Gradient profile},
      doi = {10.21203/rs.3.rs-4217886/v1}
}

@misc{lymperopoulos2026weightperturbationfeatureattribution,
      title={From Weight Perturbation to Feature Attribution for Explaining Fully Connected Neural Networks}, 
      author={Thodoris Lymperopoulos and Denia Kanellopoulou},
      year={2026},
      eprint={2605.15328},
      archivePrefix={arXiv},
      primaryClass={cs.LG},
      url={https://arxiv.org/abs/2605.15328}, 
}

@article{haufe_explainable_2026,
	title = {Explainable {AI} needs formalization},
	volume = {2},
	copyright = {2026 The Author(s)},
	issn = {3005-1460},
	url = {https://www.nature.com/articles/s44387-026-00095-1},
	doi = {10.1038/s44387-026-00095-1},
	number = {1},
	urldate = {2026-04-26},
	journal = {npj Artificial Intelligence},
	author = {Haufe, Stefan and Wilming, Rick and Clark, Benedict and Zhumagambetov, Rustam and Boubekki, Ahcène and Martin, Jörg and Panknin, Danny},
	month = apr,
	year = {2026},
	pages = {42},
}

@misc{adebayo2020sanitycheckssaliencymaps,
      title={Sanity Checks for Saliency Maps}, 
      author={Julius Adebayo and Justin Gilmer and Michael Muelly and Ian Goodfellow and Moritz Hardt and Been Kim},
      year={2020},
      eprint={1810.03292},
      archivePrefix={arXiv},
      primaryClass={cs.CV},
      url={https://arxiv.org/abs/1810.03292}, 
}

@misc{fu2020axiombasedgradcamaccuratevisualization,
      title={Axiom-based Grad-CAM: Towards Accurate Visualization and Explanation of CNNs}, 
      author={Ruigang Fu and Qingyong Hu and Xiaohu Dong and Yulan Guo and Yinghui Gao and Biao Li},
      year={2020},
      eprint={2008.02312},
      archivePrefix={arXiv},
      primaryClass={cs.CV},
      url={https://arxiv.org/abs/2008.02312}, 
}

@Article{lrpOriginal,
      doi = {10.1371/journal.pone.0130140},
      author = {Bach, Sebastian AND Binder, Alexander AND Montavon, Grégoire AND Klauschen, Frederick AND Müller, Klaus-Robert AND Samek, Wojciech},
      journal = {PLOS ONE},
      publisher = {Public Library of Science},
      title = {On Pixel-Wise Explanations for Non-Linear Classifier Decisions by Layer-Wise Relevance Propagation},
      year = {2015},
      month = {07},
      volume = {10},
      url = {https://doi.org/10.1371/journal.pone.0130140},
      pages = {1-46}
}

@misc{sundararajan2017axiomaticattributiondeepnetworks,
      title={Axiomatic Attribution for Deep Networks}, 
      author={Mukund Sundararajan and Ankur Taly and Qiqi Yan},
      year={2017},
      eprint={1703.01365},
      archivePrefix={arXiv},
      primaryClass={cs.LG},
      url={https://arxiv.org/abs/1703.01365}, 
}

@misc{srinivas2019fullgradientrepresentationneuralnetwork,
      title={Full-Gradient Representation for Neural Network Visualization}, 
      author={Suraj Srinivas and Francois Fleuret},
      year={2019},
      eprint={1905.00780},
      archivePrefix={arXiv},
      primaryClass={cs.LG},
      url={https://arxiv.org/abs/1905.00780}, 
}

@misc{gdc_portal,
      author       = {{National Cancer Institute}},
      title        = {Genomic Data Commons Data Portal},
      year         = {n.d.},
      howpublished = {\url{https://portal.gdc.cancer.gov/}},
      note         = {Accessed: 2025-11-10}
}

@Article{doi:10.1200/JCO.2008.18.1370,
      author = {Parker, Joel S. and Mullins, Michael and Cheang, Maggie C.U. and Leung, Samuel and Voduc, David and Vickery, Tammi and Davies, Sherri and Fauron, Christiane and He, Xiaping and Hu, Zhiyuan and Quackenbush, John F. and Stijleman, Inge J. and Palazzo, Juan and Marron, J.S. and Nobel, Andrew B. and Mardis, Elaine and Nielsen, Torsten O. and Ellis, Matthew J. and Perou, Charles M. and Bernard, Philip S. },
      title = {Supervised Risk Predictor of Breast Cancer Based on Intrinsic Subtypes},
      journal = {Journal of Clinical Oncology},
      volume = {27},
      number = {8},
      pages = {1160-1167},
      year = {2009},
      doi = {10.1200/JCO.2008.18.1370},
      note ={PMID: 19204204},
      URL = {https://ascopubs.org/doi/abs/10.1200/JCO.2008.18.1370},
      eprint = {https://ascopubs.org/doi/pdf/10.1200/JCO.2008.18.1370}
}

@Article{METSCH2025110124,
      title = {BenchXAI: Comprehensive benchmarking of post-hoc explainable AI methods on multi-modal biomedical data},
      journal = {Computers in Biology and Medicine},
      volume = {191},
      pages = {110124},
      year = {2025},
      issn = {0010-4825},
      doi = {https://doi.org/10.1016/j.compbiomed.2025.110124},
      url = {https://www.sciencedirect.com/science/Article/pii/S0010482525004755},
      author = {Jacqueline Michelle Metsch and Anne-Christin Hauschild}
}

@misc{tmnist_vyawahare,
      author       = {Saurabh Vyawahare},
      title        = {TMNIST (Typeface MNIST)},
      year         = {2024},
      howpublished = {Kaggle},
      url          = {https://www.kaggle.com/datasets/saurabhvyawahare/tmnist-typeface-mnist}
}

@misc{affmnist_Mader,
      author       = {K Scott Mader},
      title        = {affNIST (Affine MNIST)},
      year         = {2013},
      howpublished = {Department of Computer Science, University of toronto},
      url          = {https://www.cs.toronto.edu/~tijmen/affNIST/}
}

@ARTICLE{11039632,
      author={Bhati, Deepshikha and Amiruzzaman, MD and Zhao, Ye and Guercio, Angela and Le, Tram},
      journal={IEEE Access}, 
      title={A Survey of Post-Hoc XAI Methods From a Visualization Perspective: Challenges and Opportunities}, 
      year={2025},
      volume={13},
      number={},
      pages={120785-120806},
      doi={10.1109/ACCESS.2025.3581136}
}

@misc{gu2019understandingindividualdecisionscnns,
      title={Understanding Individual Decisions of CNNs via Contrastive Backpropagation}, 
      author={Jindong Gu and Yinchong Yang and Volker Tresp},
      year={2019},
      eprint={1812.02100},
      archivePrefix={arXiv},
      primaryClass={cs.CV},
      url={https://arxiv.org/abs/1812.02100}, 
}

@misc{iwana2019explainingconvolutionalneuralnetworks,
      title={Explaining Convolutional Neural Networks using Softmax Gradient Layer-wise Relevance Propagation}, 
      author={Brian Kenji Iwana and Ryohei Kuroki and Seiichi Uchida},
      year={2019},
      eprint={1908.04351},
      archivePrefix={arXiv},
      primaryClass={cs.CV},
      url={https://arxiv.org/abs/1908.04351}, 
}

@misc{schaeffer2023emergentabilitieslargelanguage,
      title={Are Emergent Abilities of Large Language Models a Mirage?}, 
      author={Rylan Schaeffer and Brando Miranda and Sanmi Koyejo},
      year={2023},
      eprint={2304.15004},
      archivePrefix={arXiv},
      primaryClass={cs.AI},
      url={https://arxiv.org/abs/2304.15004}, 
}

@misc{Steinhardt2023EmergentAbilities,
      title={Emergent Deception and Emergent Optimization}, 
      author={Jacob Steinhardt},
      year={2023},
      archivePrefix={Bounded Regret},
      primaryClass={cs.AI},
      url={https://bounded-regret.ghost.io/emergent-deception-optimization/}, 
}

@misc{chen2025personavectorsmonitoringcontrolling,
      title={Persona Vectors: Monitoring and Controlling Character Traits in Language Models}, 
      author={Runjin Chen and Andy Arditi and Henry Sleight and Owain Evans and Jack Lindsey},
      year={2025},
      eprint={2507.21509},
      archivePrefix={arXiv},
      primaryClass={cs.CL},
      url={https://arxiv.org/abs/2507.21509}, 
}

@misc{shrikumar2019learningimportantfeaturespropagating,
      title={Learning Important Features Through Propagating Activation Differences}, 
      author={Avanti Shrikumar and Peyton Greenside and Anshul Kundaje},
      year={2019},
      eprint={1704.02685},
      archivePrefix={arXiv},
      primaryClass={cs.CV},
      url={https://arxiv.org/abs/1704.02685}, 
}

@Article{Haug2021OnBF,
      title={On Baselines for Local Feature Attributions},
      author={Johannes Haug and Stefan Zurn and Peter El-Jiz and Gjergji Kasneci},
      journal={ArXiv},
      year={2021},
      volume={abs/2101.00905},
      url={https://api.semanticscholar.org/CorpusID:230435957}
}

\clearpage
\appendix
\section{Proofs}

\subsection{Inverse Property of Attribution}
\label{app:inv_prop}

\begin{proof}
First, we express $\mathcal{R}^{t+2}$ in terms of $\mathcal{R}^{t}, d\mathcal{R}^{t, t+1}$ and $d\mathcal{R}^{t+1, t+2}$. Given an 
input $x\in \mathbb{R}^{n_0}$ and feature $i\in \{1, \dots, n_0\}$ it holds that:
\begin{equation}
    \mathcal{R}^{t+2}_i(x) = \mathcal{R}^t_i(x) +
    \underbrace{(d\mathcal{R}^{t, t+1}_i(x) + d\mathcal{R}^{t+1, t+2}_i(x))}_{d\mathcal{R}^{t, t+2}_i}. 
\end{equation}
The proof of the theorem reduces to showing that $d\mathcal{R}^{t, t+2}_i = 0$. We proceed by decomposing 
$d\mathcal{R}^{t, t+2}$ into its constituent terms:  
\begin{align}
    {d\mathcal{R}^{t, t+2}} = \mathcal{I}^T \cdot (\underbrace{\underbrace{(f^t_{W^t_i \rightarrow W^{t+1}_i}(x) - f^t(x))}_{\textcolor{blue}{A}} + \underbrace{(f^{t_1}(x) - f^{t+1}_{W^{t+1}_i \rightarrow  W^t_i}(x))}_{\textcolor{red}{B}}}_{d\mathcal{R}^t} \notag  \\[12pt] 
    + \underbrace{\underbrace{(f^{t+1}_{W^{t+1}_i\rightarrow W^{t+2}_i}(x) - f^{t+1}(x))}_{\textcolor{red}{C}} + \underbrace{(f^{t+2}(x) - f^{t+2}_{W^{t+2}_i \rightarrow W^{t+1}_i}(x))}_{\textcolor{blue}{D}}}_{d\mathcal{R}^{t+1}})
\end{align}
In this expression, $\mathcal{I}^T$ can be excluded. The terms $\textcolor{red}{B}$ and $\textcolor{red}{C}$ cancel, since $W^t_i=W^{t+2}_i$. Similarly, terms $\textcolor{blue}{A}$ 
and $\textcolor{blue}{D}$, cancel due to the identity $f^{t+2} = f^t$ and the condition $f^{t+2}_{W^{t+2}_i \rightarrow W^{t+1}_i} = f^t_{W^t_i \rightarrow W^{t+1}_i}$. 
Consequently, 
\begin{equation}
    d\mathcal{R}^{t, t+2}_i = 0,
\end{equation}
as required.
\end{proof}

\subsection{Loss Disentanglement}
\label{app:loss_dis_sup}

Cross-Entropy loss is widely regarded as the loss function of choice for classification tasks, owing to its robustness. For a 
multi-class classification problem, the loss is defined as:
\begin{equation}
    \mathcal{L}_{CE} = - \sum_{m=1}^{N} \sum_{c=1}^{[n_L]} y_{m,c} \log(\hat{y}_{m,c}).
\end{equation}
Here $m$ denotes a sample drawn from $N$ total samples, while $y_{m,c}$ and $\hat{y}_{m,c}$ represent the one-hot encoded label 
and predicted probability, respectively, for sample $m$ belonging to class to class $c$. The predicted probabilities are obtained 
by applying the softmax function to the logits of the output neurons:
\begin{equation}
    \hat{y}_{m,c} = \frac{e^{z_{m, c}}}{\sum_{c=1}^{[n_L]}e^{z_{m, c}}}.
\end{equation}

For this selection of loss function (simplified as $\mathcal{L}$), given a sample $m$ it holds that:
\begin{align}
    \label{eq:l_disent_1}
    \mathcal{L}_m &= \log(\frac{e^{z_{m, c}}}{\sum_{c=1}^{[n_L]}e^{z_{m, c}}}) \\
    \label{eq:l_disent_2}
    &= z_{m, t} - \log(\sum_{c=1}^{[n_L]} e^{z_{m, c}})
\end{align}

To this point, we observe that the computation in \cref{eq:l_disent_2} is skewed by the presence of the term $e^{z_{m, t}}$,
which corresponds to the target neuron. This bias is inherent, as no algebraic identity exists to separate the logarithm of a 
sum of terms. Nevertheless, the induced bias is negligible, enabling the derivation of an effective approximation, leading to:
\begin{equation}
    \label{eq:loss_break}
    \mathcal{L} \simeq \mathcal{L}_{target} + \mathcal{L}_{non-target}.
\end{equation}

Deeper into this analysis, the update step requires the calculation of the loss gradient with respect to weight parameters. By 
exploiting \cref{eq:loss_break} and the linearity in gradients, this $\frac{\partial \mathcal{L}}{\partial w_{ij}}$ can be 
expressed as the sum of disentangled gradients:
\begin{equation}
    \frac{\partial \mathcal{L}}{\partial w_{ij}} = \frac{\partial \mathcal{L}_{target}}{\partial w_{ij}} + \frac{\mathcal{L}_{non-target}}{\partial w_{ij}} \\
\end{equation}
\par \noindent
Therefore, by considering the standard procedure of parameter update of a FCNN, the total weight update is:
\begin{align}
    \Delta w_{ij} & = -\eta * \frac{\partial \mathcal{L}}{\partial w_{ij}} \\
    & = -\eta * [ \frac{\partial(\mathcal{L}_{target})}{\partial w_{ij}} + \frac{\partial(\mathcal{L}_{non-target})}{\partial w_{ij}} ] \\
    & = \Delta w_{target} + \Delta w_{non-target} \label{eq:delta_disent}
\end{align}

Thus, the change $\Delta w_{ij}$ can indeed be approximately disentangled into a component from the target neuron's loss 
($\Delta w_{target}$) and a component from the aggregation of non-target neurons' loss ($\Delta w_{non-target}$). The transition 
from a single sample to an entire minibatch only requires the summation of \cref{eq:l_disent_2,eq:delta_disent} over all $m$ samples.

\end{document}